\def\eqref#1{equation~\ref{#1}}
\def\1{\bm{1}}
\def\ve{{\bm{e}}}
\def\vm{{\bm{m}}}
\def\vp{{\bm{p}}}
\def\vq{{\bm{q}}}
\def\vu{{\bm{u}}}
\def\vx{{\bm{x}}}
\def\vpi{{\bm{\pi}}}
\def\evpi{{\pi}}
\def\eve{{e}}
\def\evp{{p}}
\DeclareMathAlphabet{\mathsfit}{\encodingdefault}{\sfdefault}{m}{sl}
\SetMathAlphabet{\mathsfit}{bold}{\encodingdefault}{\sfdefault}{bx}{n}
\def\sX{{\mathbb{X}}}
\def\sY{{\mathbb{Y}}}
\newcommand{\KL}{D_{\mathrm{KL}}}
\DeclareMathOperator*{\argmax}{arg\,max}
\newcommand{\dvp}[1]{\vp^{(#1)}}
\newcommand{\dve}[1]{\ve^{(#1)}}
\newcommand{\dvm}{\bar{\vp}_{>1}}
\newcommand{\edvp}[2]{\evp^{(#1)}_{#2}}
\newcommand{\edve}[2]{\eve^{(#1)}_{#2}}
\newcommand{\s}[2]{\sum_{#1}^{#2}}
\newcommand{\DJS}{\mathrm{JS}_{\vpi}}
\newcommand{\DGJS}{\mathrm{GJS}_{\vpi}}
\newcommand{\DJSnp}{\mathrm{JS}}
\newcommand{\DGJSnp}{\mathrm{GJS}}
\newcommand{\JS}{D_{\mathrm{JS}_{\vpi}}}
\newcommand{\GJS}{D_{\mathrm{GJS}_{\vpi}}}
\newcommand{\JSnp}{D_{\mathrm{JS}}}
\newcommand{\GJSnp}{D_{\mathrm{GJS}}}
\newcommand{\LJS}{\mathcal{L}_{\mathrm{JS}}}
\newcommand{\LGJS}{\mathcal{L}_{\mathrm{GJS}}}
\newcommand{\DKL}[2]{ \KL ( #1 \Vert #2 ) }
\newcommand{\xlogxy}[2]{ #1 \log{ \Big( \frac{#1}{#2} \Big) }  }
\newcommand{\xlogyz}[3]{ #1 \log{ \Big( \frac{#2}{#3} \Big) }  }
\newtheorem{theorem}{Theorem}
\newtheorem{lemma}{Lemma} 
\newtheorem{remark}{Remark}
\newcommand*{\eg}{\textit{e.g.}\@\xspace}
\newcommand*{\ie}{\textit{i.e.}\@\xspace}
\newcommand*{\etal}{\textit{et~al.}\@\xspace}
\title{Generalized Jensen-Shannon Divergence Loss \\ for Learning with Noisy Labels}
\author{%
  Erik Englesson \\
  KTH \\
  Stockholm, Sweden \\
  \texttt{engless@kth.se} \\
   \And
  Hossein Azizpour \\
  KTH \\
  Stockholm, Sweden \\
  \texttt{azizpour@kth.se} \\
}
\begin{document}

\maketitle

\begin{abstract}
Prior works have found it beneficial to combine provably noise-robust loss functions \textit{e.g.}, mean absolute error (MAE) with standard categorical loss function \textit{e.g.} cross entropy (CE) to improve their learnability. Here, we propose to use Jensen-Shannon divergence as a noise-robust loss function and show that it interestingly interpolate between CE and MAE with a controllable mixing parameter. Furthermore, we make a crucial observation that CE exhibits lower consistency around noisy data points. Based on this observation, we adopt a generalized version of the Jensen-Shannon divergence for multiple distributions to encourage consistency around data points. Using this loss function, we show 
state-of-the-art results on both synthetic (CIFAR), and real-world (\eg WebVision) noise with varying noise rates.
\end{abstract}
\vspace{-0.4cm}
\section{Introduction}
\label{sec:intro}
Labeled datasets, even the systematically annotated ones, contain  noisy labels~\citep{Beyer_arXiv_2020_done_ImageNet}. Therefore, designing noise-robust learning algorithms are crucial for the real-world tasks. An important avenue to tackle noisy labels is to devise noise-robust loss functions~\citep{Ghosh_AAAI_2017_MAE,Zhang_NeurIPS_2018_Generalized_CE,Wang_ICCV_2019_Symmetric_CE, Ma_ICML_2020_Normalized_Loss}. Similarly, in this work, we propose two new noise-robust loss functions based on two central observations as follows.

Observation I: \textit{Provably-robust loss functions can underfit the training data}~\cite{Ghosh_AAAI_2017_MAE,Zhang_NeurIPS_2018_Generalized_CE,Wang_ICCV_2019_Symmetric_CE,Ma_ICML_2020_Normalized_Loss}.\\
Observation II: \textit{Standard networks show low consistency around noisy data points}
\footnote{we call a network \textit{consistent} around a sample~($\vx$) if it predicts the same class for $\vx$ and its perturbations~($\tilde{\vx}$).}
\hspace{-0.1cm}, see Figure \ref{fig:consistency-observation}. 

We first propose to use Jensen-Shannon divergence (JS) as a loss function, which we crucially show interpolates between the noise-robust mean absolute error (MAE) and the cross entropy (CE) that better fits the data through faster convergence. Figure \ref{fig:js-ce-mae} illustrates the CE-MAE interpolation.\\
Regarding Observation II, we adopt the generalized version of Jensen-Shannon divergence (GJS) to encourage predictions on perturbed inputs to be consistent, see Figure \ref{fig:gjs-dissection-loss}.
Notably, Jensen-Shannon divergence has previously shown promise for test-time robustness to domain shift~\cite{hendrycks2020augmix}, here we further argue for its \textit{training-time} robustness to \textit{label noise}. The key contributions of this work\footnote{implementation available at \url{https://github.com/ErikEnglesson/GJS}} are:
\begin{itemize}
\item We make a novel observation that a network predictions' consistency is reduced for noisy-labeled data when overfitting to noise, which motivates the use of consistency regularization. 
\item We propose using Jensen-Shannon divergence~($\DJSnp$) and its multi-distribution generalization~($\DGJSnp$) as loss functions for learning with noisy labels.
We relate $\DJSnp$ to loss functions that are based on the noise-robustness theory of Ghosh~\etal~\citep{Ghosh_AAAI_2017_MAE}. In particular, we prove that $\DJSnp$ generalizes CE and MAE. Furthermore, we prove that $\DGJSnp$ generalizes $\DJSnp$ by incorporating consistency regularization in a single principled loss function.
\item We provide an extensive set of empirical evidences on several datasets, noise types and rates. They show state-of-the-art results and give in-depth studies of the proposed losses.   
\end{itemize}
\vspace{-0.2cm}
\begin{figure*}
        \centering
        \begin{subfigure}[b]{0.32\textwidth} \centering \includegraphics[width=0.9\textwidth]{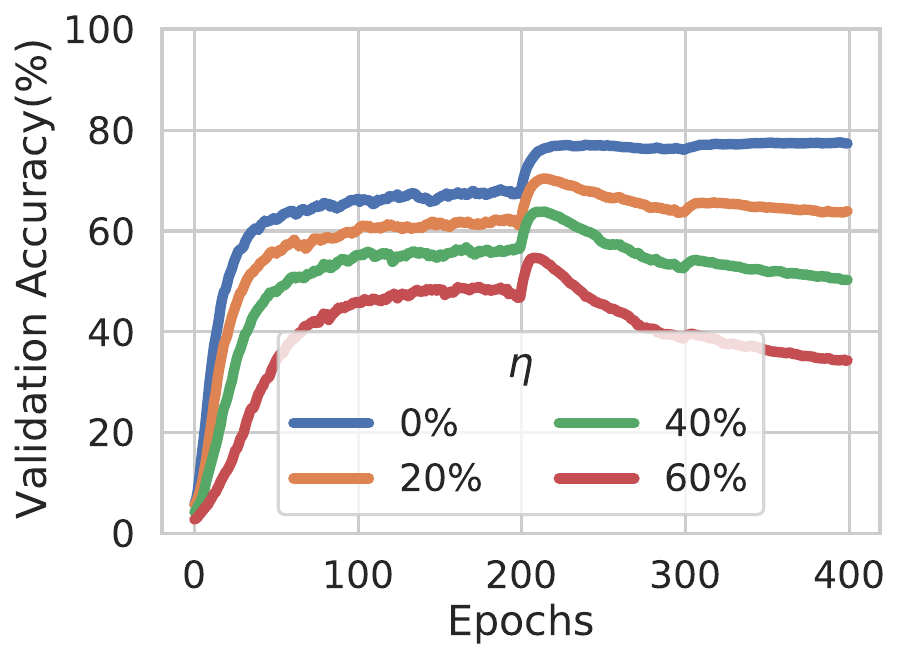} \caption{Validation Accuracy} \label{fig:consistency-observation-a}
         \end{subfigure} 
         \hfill
         \begin{subfigure}[b]{0.32\textwidth} \centering \includegraphics[width=0.9\textwidth]{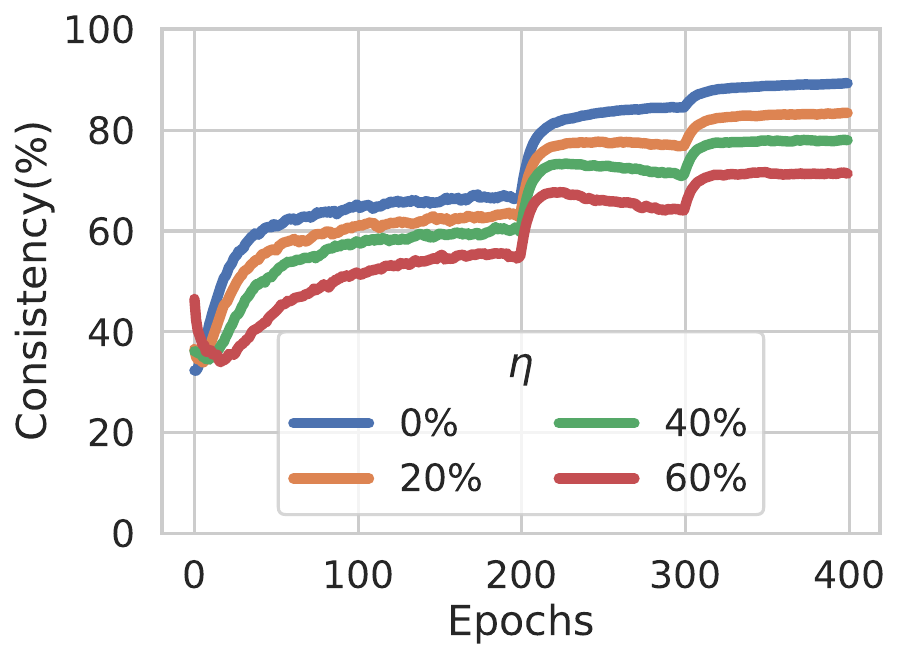} \caption{Consistency Clean} \label{fig:consistency-observation-b} 
         \end{subfigure} 
         \begin{subfigure}[b]{0.32\textwidth} \centering \includegraphics[width=0.9\textwidth]{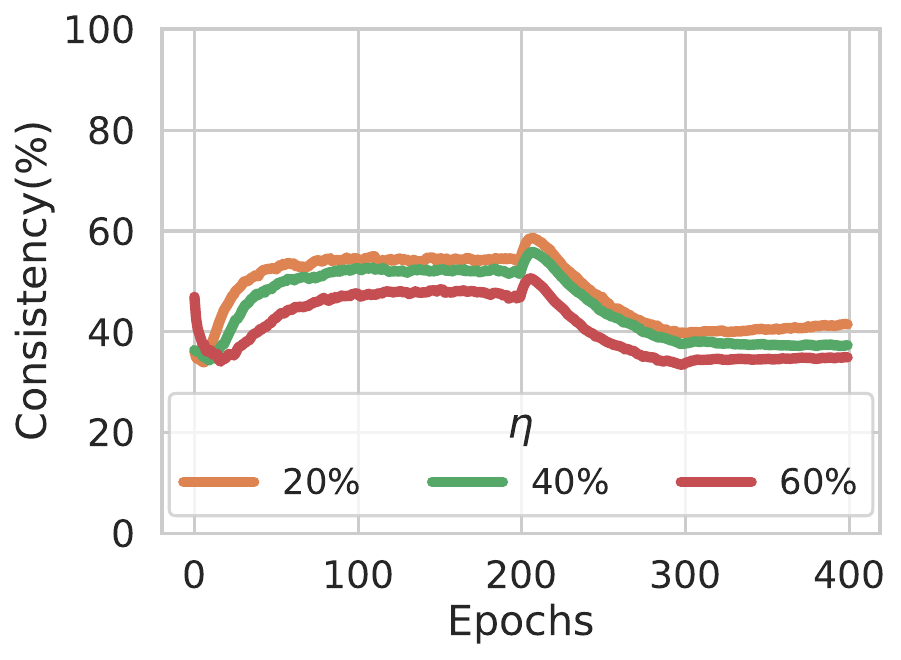} \caption{Consistency Noisy} \label{fig:consistency-observation-c} 
         \end{subfigure} 
         \caption{\textbf{Evolution of a trained network's consistency as  it overfits to noise using CE loss.} Here we plot the evolution of the validation accuracy~(a) and network's consistency (as measured by GJS) on clean~(b) and noisy~(c) examples of the training set of CIFAR-100 for varying symmetric noise rates when learning with the cross-entropy loss. The consistency of the learnt function and the accuracy closely correlate. This suggests that enforcing consistency may help avoid fitting to noise. Furthermore, the consistency is degraded more significantly for the noisy data points.} \label{fig:consistency-observation}
         \vspace{-0.2cm}
\end{figure*}

\section{Generalized Jensen-Shannon Divergence}
\label{sec:GJS}
\vspace{-0.2cm}
We propose two loss functions, the Jensen-Shannon divergence~($\DJSnp$) and its multi-distribution generalization~($\DGJSnp$). In this section, we first provide background and two observations that motivate our proposed loss functions. This is followed by definition of the losses, and then we show that $\DJSnp$ generalizes CE and MAE similarly to other robust loss functions. Finally, we show how $\DGJSnp$ generalizes $\DJSnp$ to incorporate consistency regularization into a single principled loss function. We provide proofs of all theorems, propositions, and remarks in this section in Appendix \ref{sup:proofs}.
\subsection{Background \& Motivation}

\textbf{Supervised Classification.}  Assume a general function class\footnote{\eg softmax neural network classifiers in this work} $\mathcal{F}$ where each $f\in \mathcal{F}$ maps an input $\vx \in \sX$ to the probability simplex $\Delta^{K-1}$, \ie to a categorical distribution over $K$ classes $y\in \sY = \{1,2,\dots,K\}$. 
We seek $f^* \in \mathcal{F}$ that minimizes a risk $R_\mathcal{L}(f)=\mathbb{E}_\mathcal{D}[\mathcal{L}(\dve{y}, f(\vx))]$, for some loss function $\mathcal{L}$ and joint distribution $\mathcal{D}$ over $\sX \times \sY$, where $\dve{y}$ is a $K$-vector with one at index $y$ and zero elsewhere. In practice, $\mathcal{D}$ is unknown and, instead, we use $\mathcal{S} = \{ (\vx_i,y_i)\}_{i=1}^N$ which are independently sampled from $\mathcal{D}$ to minimize an empirical risk $\frac{1}{N}\sum_{i=1}^N \mathcal{L}(\dve{y_i},f(\vx_i))$. 

\textbf{Learning with Noisy Labels.} In this work, the goal is to learn from a noisy training distribution $\mathcal{D}_\eta$ where the labels are changed, with probability $\eta$, from their true distribution $\mathcal{D}$. The noise is called \textit{instance-dependent} if it depends on the input, \textit{asymmetric} if it dependents on the true label, and \textit{symmetric} if it is independent of both $\vx$ and $y$. 
Let $f^*_\eta$ be the optimizer of the noisy distribution risk $R^\eta_\mathcal{L}(f)$. A loss function $\mathcal{L}$ is then called \textit{robust} if $f^*_\eta$ also minimizes $R_\mathcal{L}$. The MAE loss~($\mathcal{L}_{MAE}(\dve{y}, f(\vx)) \coloneqq \Vert\dve{y}-f(\vx)\Vert_1 $) is robust but not CE~\citep{Ghosh_AAAI_2017_MAE}.

\textbf{Issue of Underfitting.} Several works propose such robust loss functions and demonstrate their efficacy in preventing noise fitting~\cite{Ghosh_AAAI_2017_MAE,Zhang_NeurIPS_2018_Generalized_CE,Wang_ICCV_2019_Symmetric_CE,Ma_ICML_2020_Normalized_Loss}. However, all those works have observed slow convergence of such robust loss functions leading to underfitting. This can be contrasted with CE that has fast convergence but overfits to noise. Ghosh~\etal~\cite{Ghosh_AAAI_2017_MAE} mentions slow convergence of MAE and GCE~\cite{Zhang_NeurIPS_2018_Generalized_CE} extensively analyzes the undefitting thereof. SCE~\cite{Wang_ICCV_2019_Symmetric_CE} reports similar problems for the reverse cross entropy and proposes a linear combination with CE. Finally, Ma~\etal~\cite{Ma_ICML_2020_Normalized_Loss} observe the same problem and consider a combination of ``active'' and ``passive'' loss functions.

\textbf{Consistency Regularization.} This encourages a network to have consistent predictions for different perturbations of the same image, which has mainly been used for semi-supervised learning~\citep{Oliver_arXiv_2018_Realistic_Eval_SSL}. 

\textbf{Motivation.} In Figure \ref{fig:consistency-observation}, we show the validation accuracy and a measure of consistency during training with the CE loss for varying amounts of noise. First, we note that training with CE loss eventually overfits to noisy labels. Figure~\ref{fig:consistency-observation-a}, indicates that the higher the noise rate, the more accuracy drop when it starts to overfit to noise. Figure~\ref{fig:consistency-observation}(b-c) shows the consistency of predictions for correct and noisy labeled examples of the training set, with the consistency measured as the ratio of examples that have the same class prediction for two perturbations of the same image, see Appendix~\ref{sup:sec:consistencyMeasure} for more details. A clear correlation is observed between the accuracy and consistency of the noisy examples. This suggests that maximizing consistency of predictions may improve the robustness to noise. Next, we define simple loss functions that (i) encourage consistency around data points and (ii) alleviate the ``issue of underfitting'' by interpolating between CE and MAE.
%
\begin{figure*}[t!]
     \begin{minipage}{0.35\textwidth}
     \centering
    \includegraphics[trim={0.25cm 0.25cm 0.25cm 0.25cm},clip,width=.95\columnwidth]{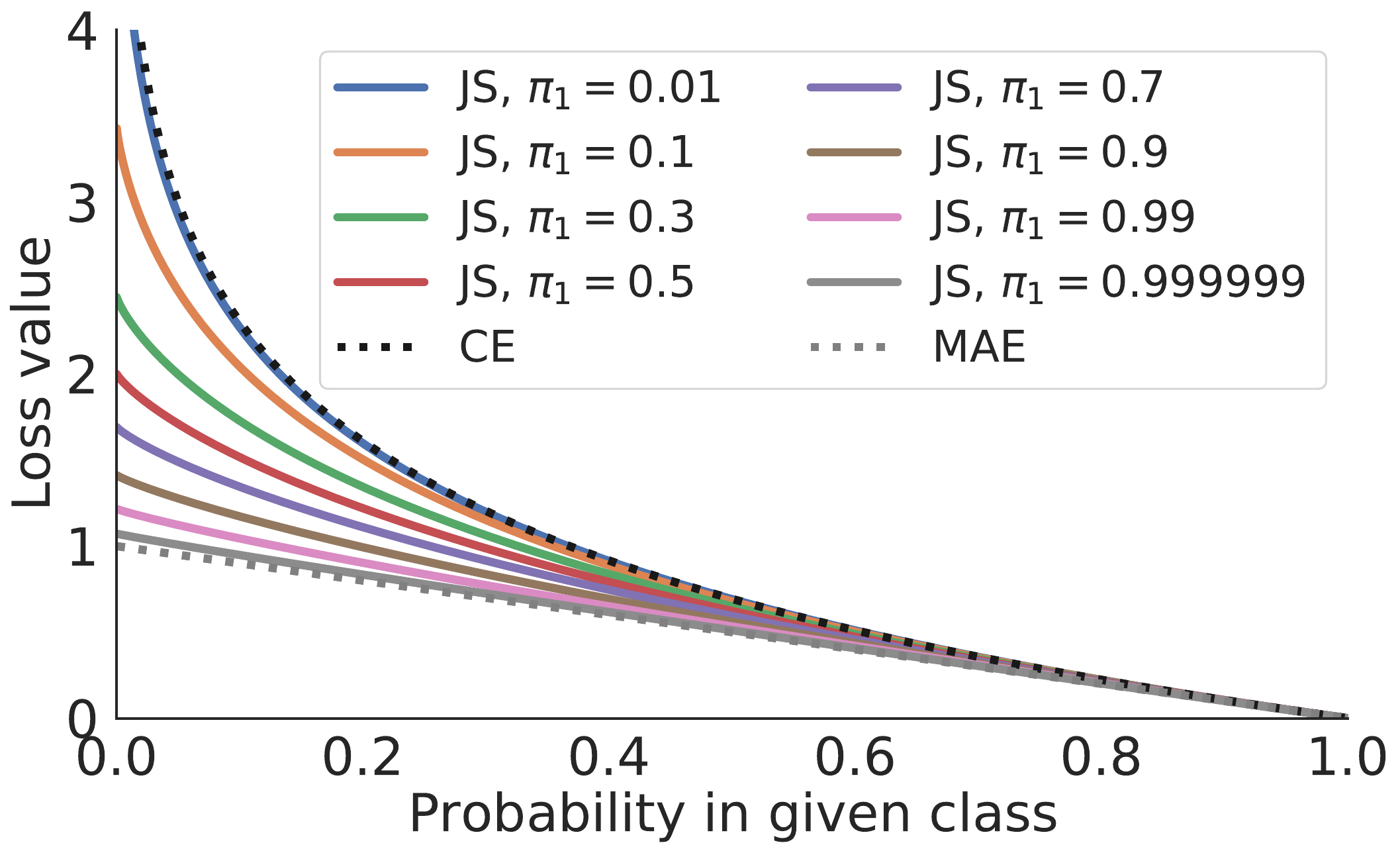}
    \caption{\textbf{JS loss generalizes CE and MAE.} The Jensen-Shannon loss~($\LJS$) for different values of the hyperparameter $\pi_1$. 
    The $\DJSnp$ loss interpolates between CE and MAE. 
    For low values of $\pi_1$, $\LJS$ behaves like CE and for increasing values of $\pi_1$ it behaves more like the noise robust MAE loss.}
    \label{fig:js-ce-mae}
         \end{minipage}
    \hfill
    \begin{minipage}{0.6\textwidth}
        \centering
         \begin{subfigure}[b]{0.32\textwidth}
     \captionsetup{skip=0pt} 
         \centering
         \includegraphics[trim={3.0cm 0cm 3.0cm 1.1cm},clip,width=\textwidth]{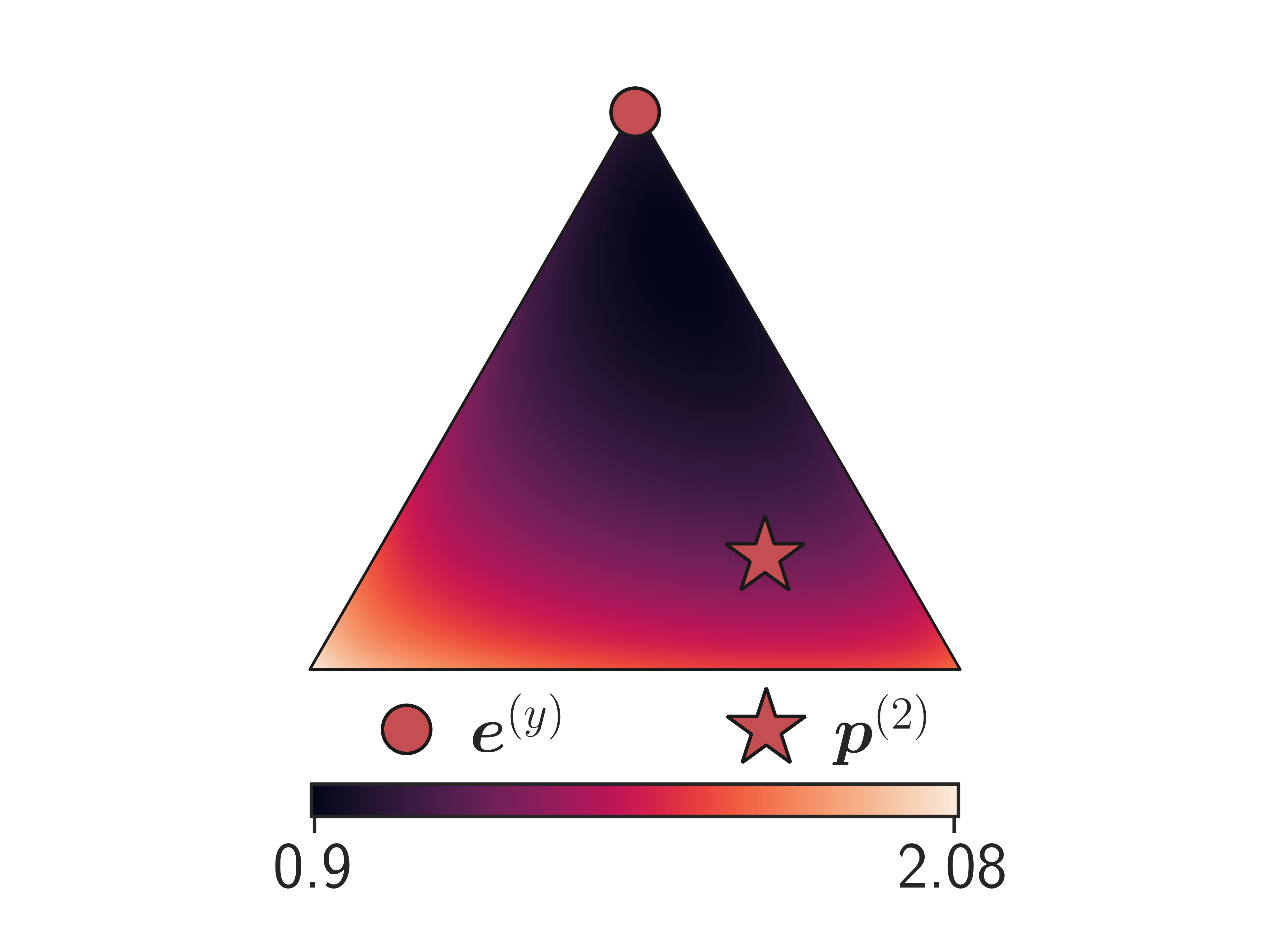}
         \label{fig:gjs-dissection-loss-GJS-full}
     \end{subfigure}
     \begin{subfigure}[b]{0.32\textwidth}
          \captionsetup{skip=0pt} 
         \centering
         \includegraphics[trim={3.0cm 0cm 3.0cm 1.1cm},clip,width=\textwidth]{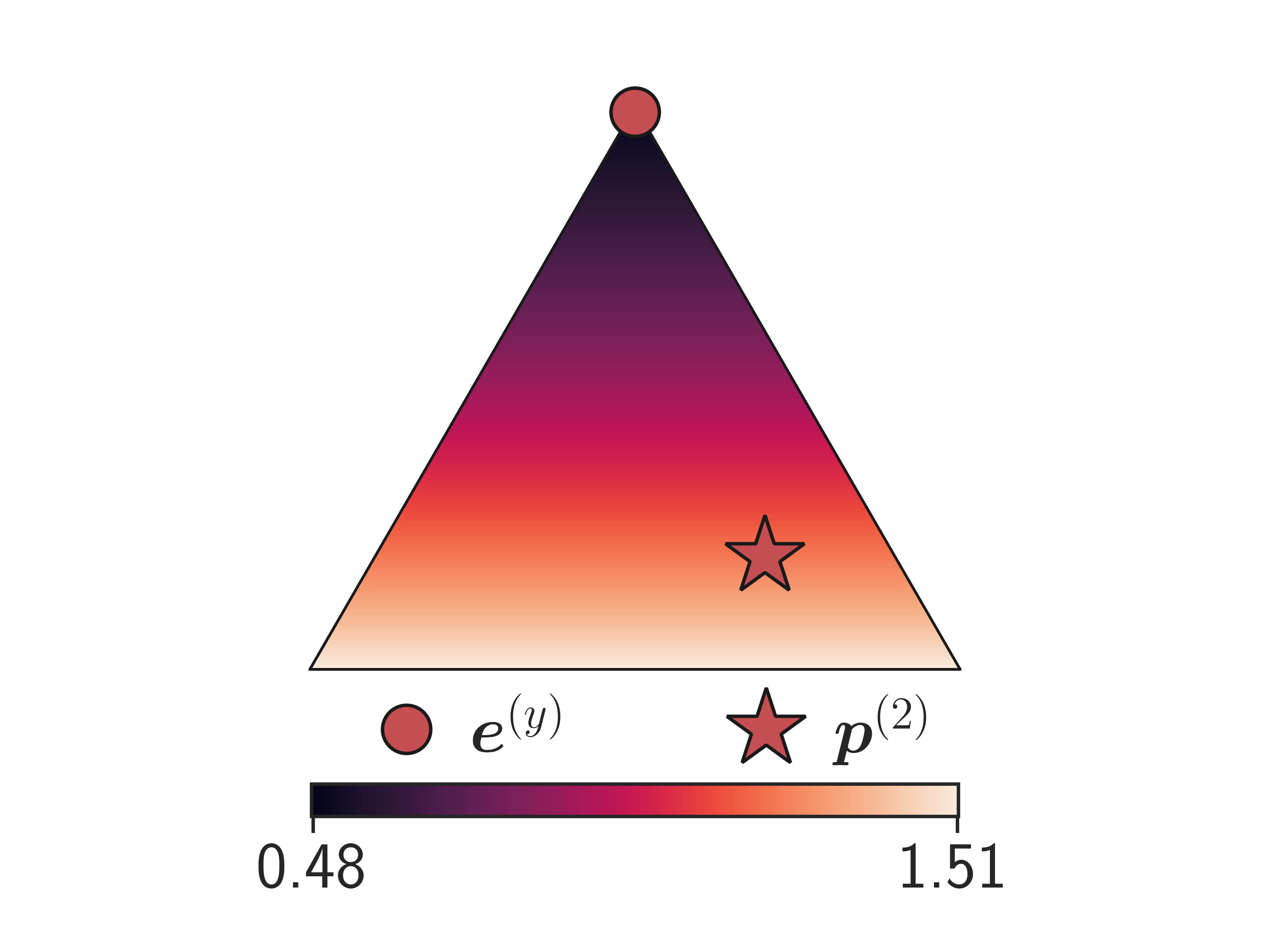}
         \label{fig:gjs-dissection-loss-JS}
     \end{subfigure}
     \begin{subfigure}[b]{0.32\textwidth}
         \captionsetup{skip=0pt} 
         \centering
         \includegraphics[trim={3.0cm 0cm 3.0cm 1.1cm},clip,width=\textwidth]{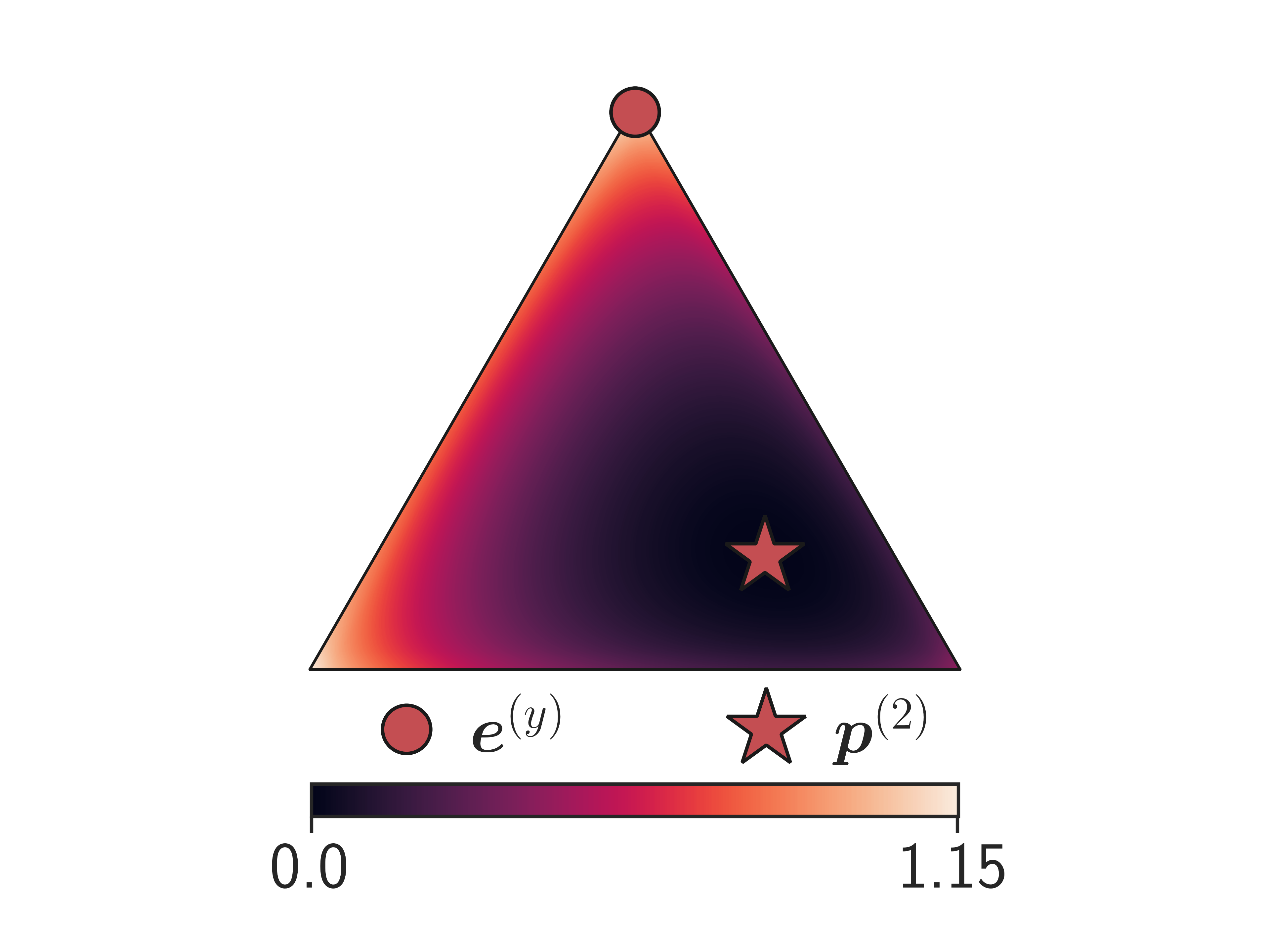}
         \label{fig:gjs-dissection-loss-consistency}
     \end{subfigure}
        \vspace{-0.55cm}
        \caption{\textbf{GJS Dissection for M=K=3:} The decomposition of $\LGJS$~(left) into a $\DJSnp$ term~(middle) and a consistency term~(right) from Proposition \ref{prop:GJS-JS-Consistency}. Each point in the simplex correspond to a $\dvp{3}\in \Delta^{2}$, where the color represents the value of the loss at that point. It can be seen that there are two ways to minimize $\LGJS$, either by making the predictions similar to the label~(middle) or similar to the other predictions~(right) to increase consistency. To better highlight the variations of the losses, each loss has its own range of values.\vspace{-0.1cm}}
    \label{fig:gjs-dissection-loss}
    \end{minipage}
\end{figure*}
\vspace{-0.1cm}
\subsection{Definitions}
\label{sec:gjs-def}

$\bm{\JSnp}$. Let $\dvp{1}, \dvp{2} \in \Delta^{K-1}$ have corresponding weights $\boldsymbol{\pi}=[\pi_1,\pi_2]^T\in\Delta$. Then, the Jensen-Shannon divergence between $\dvp{1}$ and $\dvp{2}$ is

\vspace{-0.2cm}
{
\begin{align}
    \hspace{-0.2cm}\JS(\dvp{1}, \dvp{2}) \hspace{-0.015cm} \coloneqq \hspace{-0.015cm} H(\vm) \hspace{-0.015cm} - \hspace{-0.015cm} \pi_1 H(\dvp{1}) \hspace{-0.015cm} - \hspace{-0.015cm} \pi_2H(\dvp{2}) \hspace{-0.015cm} = \hspace{-0.015cm}\evpi_1\KL(\dvp{1} \Vert \vm) \hspace{-0.015cm} + \hspace{-0.015cm} \evpi_2\KL(\dvp{2} \Vert \vm) \label{eq:JSasKL}
    \hspace{-0.1cm} 
\end{align}
}%
with $H$ the Shannon entropy, and $\vm=\pi_1 \dvp{1} + \pi_2\dvp{2}$. 
Unlike Kullback–Leibler divergence~($\DKL{\dvp{1}}{\dvp{2}}$) or cross entropy~(CE), JS is symmetric, bounded, does not require absolute continuity, and has a crucial weighting mechanism~($\boldsymbol{\pi}$), as we will see later.

$\bm{\GJSnp.}$ Similar to $\KL$, $\JSnp$ satisfies $\JS(\dvp{1}, \dvp{2}) \geq 0$, with equality iff $\dvp{1} = \dvp{2}$. For $\JSnp$, this is derived from Jensen's inequality for the concave Shannon entropy. This property holds for finite number of distributions and motivates a generalization of $\JSnp$ to multiple distributions~\citep{Lin_TIT_1991_JS_Divergence}:

\vspace{-0.4cm}
{
\begin{align}
    \hspace{-0.19cm}\GJS(\dvp{1}, \dots, \dvp{M}) \coloneqq H\Big(\sum_{i=1}^M \pi_i \dvp{i}\Big) - \s{i=1}{M}\evpi_i H(\dvp{i}) = \sum_{i=1}^M \pi_i\KL\Big(\dvp{i} \Big\Vert \sum_{j=1}^M \pi_j \dvp{j}\Big) \label{eq:GJSasKL}
\end{align}
}%
where $M$ is the number of distributions, and $\boldsymbol{\pi}=[\pi_1,\dots,\pi_M]^T\in \Delta^{M-1}$.

\textbf{Loss functions.} We aim to use $\JSnp$ and $\GJSnp$ divergences, to measure deviation of the predictive distribution(s), $f(\vx)$, from the target distribution, $\dve{y}$. Without loss of generality, hereafter, we dedicate $\dvp{1}$ to denote the target distribution. 
$\DJSnp$ loss, therefore, can take the form of $\JS(\dve{y}, f(\vx))$. 
Generalized $\DJSnp$ loss is a less straight-forward construction since $\GJSnp$ can accommodate more predictive distributions. While various choices can be made for these distributions, in this work, we consider predictions associated with different random perturbations of a sample, denoted by $\mathcal{A}(\vx)$. This choice, as shown later, implies an interesting analogy to \textit{consistency regularization}. 
The choice, also entails no distinction between the $M-1$ predictive distributions. Therefore, we consider $\pi_2=\dots=\pi_M=\frac{1-\pi_1}{M-1}$ in all our experiments. 
Finally, we scale the loss functions by a constant factor $Z=-(1-\pi_1)\log(1-\pi_1)$. As we will see later, the role of this scaling is merely to strengthen the already existing and desirable behaviors of these losses as $\pi_1$ approaches zero and one.
Formally, we have $\DJSnp$ and $\DGJSnp$ losses:

\vspace{-0.4cm}
{
\begin{align}
    \LJS(y,f,\vx) \coloneqq \frac{\JS(\dve{y}, f(\tilde{\vx}))}{Z}, 
    \quad
    \LGJS(y,f,\vx) \coloneqq \frac{\GJS(\dve{y}, f(\tilde{\vx}^{(2)}), \dots, f(\tilde{\vx}^{(M)}))}{Z}
    \label{eq:js-gjs-loss}
\end{align}
}%
with $\tilde{\vx}^{(i)}\sim \mathcal{A}(\vx)$. Next, we study the connection between $\DJSnp$ and losses which are based on the robustness theory of Ghosh~\etal~\citep{Ghosh_AAAI_2017_MAE}.

\subsection{JS's Connection to Robust Losses}
\label{sec:GJS-CE-MAE}
Cross Entropy (CE) is the prevalent loss function for deep classifiers with remarkable successes. However, CE is prone to fitting noise~\citep{Zhang17RethinkingGeneralization}. On the other hand, Mean Absolute Error (MAE) is theoretically noise-robust~\citep{Ghosh_AAAI_2017_MAE}. Evidently, standard optimization algorithms struggle to minimize MAE, especially for more challenging datasets \eg CIFAR-100~\citep{Zhang_NeurIPS_2018_Generalized_CE, Ma_ICML_2020_Normalized_Loss}. Therefore, there have been several proposals that combine CE and MAE, such as 
Generalized CE~(GCE)~\citep{Zhang_NeurIPS_2018_Generalized_CE}, Symmetric CE~(SCE)~\citep{Wang_ICCV_2019_Symmetric_CE}, and Normalized CE~(NCE+MAE)~\citep{Ma_ICML_2020_Normalized_Loss}.
The rationale is for CE to help with the learning dynamics of MAE. Next, we show $\DJSnp$ has CE and MAE as its asymptotes w.r.t. $\vpi_1$. 
\begin{restatable}{proposition}{propJsCeMae}
\label{prop:JS_limits_CE_MAE}
Let $\vp \in \Delta^{K-1}$, then
{
\begin{align}
     &\lim_{\pi_1 \rightarrow 0} \LJS(\dve{y}, \vp) = H(\dve{y}, \vp), \quad
     &\lim_{\pi_1 \rightarrow 1} \LJS(\dve{y}, \vp) = \frac{1}{2}\Vert\dve{y}-\vp\Vert_1 \nonumber
\end{align}
}%
where $H(\dve{y},\vp)$ is the cross entropy of $\dve{y}$ relative to $\vp$. 
\label{prop:js-ce-mae}
\end{restatable}
Figure \ref{fig:js-ce-mae} depicts how $\DJSnp$ interpolates between CE and MAE for $\pi_1 \in (0, 1)$.
The proposition reveals an interesting connection to state-of-the-art robust loss functions, however, there are important differences. SCE is not bounded~(so it cannot be used in Theorem \ref{T:sym}), and GCE is not symmetric, while $\DJSnp$ and MAE are both symmetric and bounded. In Appendix \ref{sup:sec:js-dissection}, we perform a dissection to better understand how these properties affect learning with noisy labels. GCE is most similar to $\DJSnp$ and is compared further in Appendix \ref{sup:js-gce-comp}.

A crucial difference to these other losses is that $\DJSnp$ naturally extends to multiple predictive distributions~($\DGJSnp$). Next, we show how $\DGJSnp$ generalizes $\DJSnp$ by incorporating consistency regularization.

\subsection{GJS's Connection to Consistency Regularization}
\label{sec:gjs-js-consistency}
In Figure \ref{fig:consistency-observation}, it was shown how the consistency of the noisy labeled examples was reduced when the network overfitted to noise. The following proposition shows how $\DGJSnp$ naturally encourages consistency in a single principled loss function. 
\begin{restatable}{proposition}{propGjsJsConsistency}
Let $\dvp{2},\dots,\dvp{M} \in \Delta^{K-1}$ with $M \geq 3$ and $\dvm = \frac{\sum_{j=2}^M \pi_j\dvp{j}}{1-\pi_1}$,  then 
{
\begin{align}
\LGJS(\dve{y}, \dvp{2},\dots,\dvp{M}) = \mathcal{L}_{\mathrm{JS_{\vpi'}}}(\dve{y},\dvm) + (1-\pi_1)\mathcal{L}_{\mathrm{GJS_{\vpi''}}}(\dvp{2},\dots,\dvp{M})
\nonumber
\end{align}
}%
where $\vpi'=[\pi_1, 1-\pi_1]^T$ and $\vpi''=\frac{[\pi_2, \dots, \pi_M]^T}{(1-\pi_1)}$.
\label{prop:GJS-JS-Consistency}
\end{restatable}
Importantly, Proposition \ref{prop:GJS-JS-Consistency} shows that $\DGJSnp$ can be decomposed into two terms: 1) a $\DJSnp$ term between the label and the mean prediction $\dvm$, and 2) a $\DGJSnp$ term, but without the label. Figure \ref{fig:gjs-dissection-loss} illustrates the effect of this decomposition. The first term, similarly to the standard $\DJSnp$ loss, encourages the predictions' mean to be closer to the label~(Figure \ref{fig:gjs-dissection-loss} middle). However, the second term encourages all predictions to be similar, that is, \textit{consistency regularization}~(Figure \ref{fig:gjs-dissection-loss} right). 

\subsection{Noise Robustness}
\label{sec:robustness}
Here, the robustness properties of $\DJSnp$ and $\DGJSnp$ are analyzed in terms of lower~($B_L$) and upper bounds~($B_U$) for the following theorem, which generalizes the results by Zhang~\etal~\citep{Zhang_NeurIPS_2018_Generalized_CE} to any bounded loss function, even with multiple predictive distributions.

\begin{restatable}{theorem}{tSym}
\label{T:sym}
Under symmetric noise with $\eta < \frac{K-1}{K}$, if  $B_L \leq \sum_{i=1}^K \mathcal{L}(\dve{i}, \vx, f) \leq B_U$, $\forall\vx,f$ is satisfied for a loss $\mathcal{L}$, then 
{
\begin{align}
    0 \leq R^\eta_{\mathcal{L}}(f^*) - R^\eta_{\mathcal{L}}(f^*_\eta) \leq \eta\frac{B_U-B_L}{K-1}, \quad \text{and} \quad 
    -\frac{\eta(B_U-B_L)}{K-1-\eta K} \leq R_{\mathcal{L}}(f^*) - R_{\mathcal{L}}(f^*_\eta) \leq 0, \nonumber
\end{align}
\vspace{-0.4cm}
}%
\end{restatable}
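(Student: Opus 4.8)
The plan is to rewrite the noisy risk $R^\eta_\mathcal{L}(f)$ as an affine function of the clean risk $R_\mathcal{L}(f)$, isolating the bounded quantity $\sum_{k=1}^K \mathcal{L}(\dve{k}, f(\vx))$ as the only residual term. Once this identity is in hand, all four inequalities follow mechanically from the optimality of $f^*$ and $f^*_\eta$ together with the hypothesized bounds $B_L, B_U$.

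First I would unfold the uniform (symmetric) noise model: a clean label $y$ is retained with probability $1-\eta$ and flipped to each of the remaining $K-1$ classes with probability $\eta/(K-1)$. Writing $S(\vx,f) \coloneqq \sum_{k=1}^K \mathcal{L}(\dve{k}, f(\vx))$ and conditioning on the clean label, the pointwise noisy loss is
$$(1-\eta)\,\mathcal{L}(\dve{y}, f(\vx)) + \frac{\eta}{K-1}\big(S(\vx,f) - \mathcal{L}(\dve{y}, f(\vx))\big).$$
Collecting the coefficient of $\mathcal{L}(\dve{y}, f(\vx))$ and taking expectation over $\mathcal{D}$ yields the central identity
$$R^\eta_\mathcal{L}(f) = \alpha\, R_\mathcal{L}(f) + \frac{\eta}{K-1}\,\mathbb{E}_\mathcal{D}[S(\vx,f)], \qquad \alpha \coloneqq \frac{K-1-\eta K}{K-1}.$$
The assumption $\eta < 1 - \tfrac{1}{K}$ is precisely the condition that makes $\alpha > 0$, and this sign is what drives the whole argument.

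For the first display I would subtract this identity evaluated at $f^*$ and at $f^*_\eta$. The lower bound $0 \le R^\eta_\mathcal{L}(f^*) - R^\eta_\mathcal{L}(f^*_\eta)$ is immediate, since $f^*_\eta$ minimizes $R^\eta_\mathcal{L}$. For the upper bound, the term $\alpha\big(R_\mathcal{L}(f^*) - R_\mathcal{L}(f^*_\eta)\big)$ is $\le 0$ because $f^*$ minimizes $R_\mathcal{L}$ and $\alpha > 0$, while the residual $\tfrac{\eta}{K-1}\,\mathbb{E}_\mathcal{D}[S(\vx,f^*) - S(\vx,f^*_\eta)]$ is at most $\tfrac{\eta}{K-1}(B_U - B_L)$. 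For the second display I would instead solve the identity for $R_\mathcal{L}(f)$ (dividing by $\alpha > 0$) and repeat the bookkeeping: the upper bound $\le 0$ is again optimality of $f^*$, and the lower bound combines $R^\eta_\mathcal{L}(f^*) - R^\eta_\mathcal{L}(f^*_\eta) \ge 0$ with the same residual estimate, after which $\tfrac{1}{(K-1)\alpha}$ simplifies to $\tfrac{1}{K-1-\eta K}$ to match the stated constant.

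There is no genuinely hard step: the result is a bookkeeping generalization of the GCE bounds in \citet{Zhang_NeurIPS_2018_Generalized_CE} from a constant sum to a bounded sum. The one point demanding care is tracking the sign of $\alpha$, since every inequality relies on $\alpha > 0$; the threshold $\eta < 1 - 1/K$ must therefore be invoked before dividing by $\alpha$ and before asserting that the clean-risk term keeps its sign. A secondary subtlety is that the argument only needs $B_L \le \mathbb{E}_\mathcal{D}[S(\vx,f)] \le B_U$, which the expectation inherits from the pointwise hypothesis, so I would record that reduction explicitly rather than bounding $S(\vx,f)$ inside the expectation.
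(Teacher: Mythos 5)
Your proposal is correct and follows essentially the same route as the paper: the same affine identity $R^\eta_{\mathcal{L}}(f) = \bigl(1 - \tfrac{\eta K}{K-1}\bigr)R_{\mathcal{L}}(f) + \tfrac{\eta}{K-1}\,\mathbb{E}_{\mathcal{D}}\bigl[\sum_{i=1}^K \mathcal{L}(\dve{i},\vx,f)\bigr]$, the same use of the bounds $B_L, B_U$ on the residual sum, and the same appeal to the optimality of $f^*$ and $f^*_\eta$ together with the positivity of the leading coefficient guaranteed by $\eta < 1 - \tfrac{1}{K}$. Your explicit derivation of the second display by dividing through by that coefficient is exactly the intended (if only implicit in the paper's write-up) completion of the argument.
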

A tighter bound $B_U-B_L$, implies a smaller worst case risk difference of the optimal classifiers~(robust when $B_U=B_L$).  
Importantly, while $\mathcal{L}(\dve{i}, \vx, f)=\mathcal{L}(\dve{i}, f(\vx))$ usually, this subtle distinction is useful for losses with multiple predictive distributions, see Equation \ref{eq:js-gjs-loss}. In Theorem \ref{T:asym} in Appendix \ref{sup:noisetheorems}, we further prove the robustness of the proposed losses to asymmetric noise.

For losses with multiple predictive distributions, the bounds in Theorem \ref{T:sym} and \ref{T:asym} must hold for any $\vx$ and $f$, \ie, for any combination of $M-1$ categorical distributions on $K$ classes. Proposition \ref{prop:gjs-bounds} provides such bounds for $\DGJSnp$.
\begin{restatable}{proposition}{propGjsBounds}
$\DGJSnp$ loss with $M \leq K+1$ satisfies $B_L \leq \sum_{k=1}^K \LGJS(\dve{k}, \dvp{2}, \dots, \dvp{M}) \leq B_U$ for all $\dvp{2}, \dots, \dvp{M} \in \Delta^{K-1}$, with the following bounds
{
\vspace{-0.2cm}
\begin{align}
    B_L &= \sum_{k=1}^K \LGJS(\dve{k}, \vu, \dots, \vu), \quad B_U = \sum_{k=1}^K \LGJS(\dve{k}, \dve{1}, \dots, \dve{M-1}) \nonumber 
\end{align}
\vspace{-0.3cm}
}%
where $\vu \in \Delta^{K-1}$ is the uniform distribution.\label{prop:gjs-bounds}
\end{restatable}
Note the bounds for the $\DJSnp$ loss is a special case of Proposition \ref{prop:gjs-bounds} for $M=2$.

\begin{restatable}{remark}{rJSGJSRobustness}
\label{r:JSGJSRobustness}
$\LJS$ and $\LGJS$ are robust~($B_L=B_U$) in the limit of $\pi_1\to 1$.
\end{restatable}
Remark \ref{r:JSGJSRobustness} is intuitive from Section \ref{sec:GJS-CE-MAE} which showed that $\LJS$ is equivalent to the robust MAE in this limit and that the consistency term in Proposition \ref{prop:GJS-JS-Consistency} vanishes. 

In Proposition \ref{prop:gjs-bounds}, the lower bound~($B_L$) is the same for $\DJSnp$ and $\DGJSnp$. However, the upper bound~($B_U$) increases for more distributions, which makes $\DJSnp$ have a tighter bound than $\DGJSnp$ in Theorem \ref{T:sym} and \ref{T:asym}. In Proposition \ref{prop:js-gjs-same-risk-bound}, we show that $\DJSnp$ and $\DGJSnp$ have the same bound for the risk difference, given an assumption based on Figure \ref{fig:consistency-observation} that the optimal classifier on clean data ($f^*$) is at least as consistent as the optimal classifier on noisy data ($f^*_{\eta}$).
\begin{restatable}{proposition}{propGjsBetterBounds} \label{prop:js-gjs-same-risk-bound}
$\LJS$ and $\LGJS$ have the same risk bounds in Theorem \ref{T:sym} and \ref{T:asym} if 
$ \mathbb{E}_{ \mathbf{x} }[\mathcal{L}^{f^*}_{\mathrm{GJS_{\vpi''}}}(\dvp{2},\dots,\dvp{M})] \leq \mathbb{E}_{ \mathbf{x}}[\mathcal{L}^{f^*_\eta}_{\mathrm{GJS_{\vpi''}}}(\dvp{2},\dots,\dvp{M})]$, where $\mathcal{L}^f_{\mathrm{GJS_{\vpi''}}}(\dvp{2},\dots,\dvp{M})$ is the consistency term from Proposition \ref{prop:GJS-JS-Consistency}. 
\end{restatable}

\section{Related Works}
\label{sec:rel_works}
\vspace{-0.1cm}
Interleaved in the previous sections, we covered most-related works to us, \ie the avenue of identification or construction of \textit{theoretically-motivated robust loss functions}~\citep{Ghosh_AAAI_2017_MAE,Zhang_NeurIPS_2018_Generalized_CE,Wang_ICCV_2019_Symmetric_CE,Ma_ICML_2020_Normalized_Loss}. These works, similar to this paper, follow the theoretical construction of Ghosh~\etal~\citep{Ghosh_AAAI_2017_MAE}. Furthermore, Liu\&Guo~\cite{Liu_ICML_2020_Peer_Loss} use ``peer prediction'' to propose a new family of robust loss functions. Different to these works, here, we propose loss functions based on 
$\JSnp$ which holds various desirable properties of those prior works while exhibiting novel ties to consistency regularization; a recent important regularization technique.

Next, we briefly cover other lines of work. A more thorough version can be found in Appendix \ref{sup:rel-works}.

A direction, that similar to us does not alter training, \textit{reweights a loss function} by confusion matrix~\citep{Natarajan_NIPS_2013,Sukhbaatar_ICLR_2015_confusion_matrix,Patrini_CVPR_2017,Han_NeurIPS_2018,Xia_NeurIPS_2019}. Assuming a class-conditional noise model, loss correction is theoretically motivated and perfectly orthogonal to noise-robust losses.

\textit{Consistency regularization} is a recent technique that imposes smoothness in the learnt function for semi-supervised learning~\citep{Oliver_arXiv_2018_Realistic_Eval_SSL} and recently for noisy data~\citep{Li_ICLR_2020_dividemix}. These works use different complex pipelines for such regularization. $\DGJSnp$ encourages consistency in a simple way that exhibits other desirable properties for learning with noisy labels. 
Importantly, Jensen-Shannon-based consistency loss functions have been used to improve test-time robustness to image corruptions~\citep{hendrycks2020augmix} and adversarial examples~\citep{tack2021consistency}, which further verifies the general usefulness of $\DGJSnp$.
In this work, we study such loss functions for a different goal:  \textit{training-time label-noise robustness}. In this context, our thorough analytical and empirical results are, to the best of our knowledge, novel.

Recently, loss functions with \textit{information-theoretic} motivations have been proposed~\citep{Xu_NeurIPS_2019_Information_Theoretic_Mutual_Info_Loss,Wei_ICLR_2021_f_Divergence}. $\DJSnp$, with an apparent information-theoretic interpretation, has a strong connection to those. Especially, the latter is a close concurrent work studying JS and other divergences from the family of f-divergences~\citep{csiszar1967fdiv}. However, in this work, we consider a generalization to more than two distributions and study the role of $\pi_1$, which they treat as a constant~$(\pi_1=\frac{1}{2}$). These differences lead to improved performance and novel theoretical results, e.g., Proposition \ref{prop:JS_limits_CE_MAE} and \ref{prop:GJS-JS-Consistency}. Lastly, another generalization of JS was recently presented by Nielsen~\citep{nielsen2019JSMeans}, where the arithmetic mean is generalized to abstract means.
\vspace{-0.1cm}

\renewcommand{\arraystretch}{1.2}
\begin{table*}[t!]
\tiny 
\caption{\label{tab:cifar} \textbf{Synthetic Noise Benchmark on CIFAR.} We \textit{reimplement} other noise-robust loss functions into the \textit{same learning setup} and ResNet-34, including label smoothing~(LS), Bootstrap~(BS), Symmetric CE~(SCE), Generalized CE~(GCE), and Normalized CE~(NCE+RCE). We used \textit{same hyperparameter optimization budget and mechanism} for all the prior works and ours. Mean test accuracy and standard deviation are reported from five runs and the statistically-significant top performers are boldfaced. The thorough analysis is evident from the higher performance of CE in our setup compared to prior works. $\DGJSnp$ achieves state-of-the-art results for different noise rates, types, and datasets. Generally, $\DGJSnp$'s efficacy is more evident for the more challenging CIFAR-100 dataset.
}
\begin{center}
\tabcolsep=0.22cm 
\begin{tabular}{ @{}l l c c c c c c c} 
 \toprule
 \multirow{2}{*}{Dataset} & \multirow{2}{*}{Method} & No Noise & \multicolumn{4}{c}{Symmetric Noise Rate} & \multicolumn{2}{c}{Asymmetric Noise Rate} \\ \cmidrule(lr){3-3}\cmidrule(lr){4-7} \cmidrule(lr){8-9}
 & & 0\% & 20\% & 40\% & 60\% & 80\% & 20\% & 40\% \\
 \midrule
 \multirow{8}{5em}{CIFAR-10} & CE & \textbf{95.77 $\pm$ 0.11} & 91.63 $\pm$ 0.27 & 87.74 $\pm$ 0.46 & 81.99 $\pm$ 0.56 & 66.51 $\pm$ 1.49 & 92.77 $\pm$ 0.24 & 87.12 $\pm$ 1.21 \\
 & BS & 94.58 $\pm$ 0.25 & 91.68 $\pm$ 0.32 & 89.23 $\pm$ 0.16 & 82.65 $\pm$ 0.57 & 16.97 $\pm$ 6.36 & 93.06 $\pm$ 0.25 & 88.87 $\pm$ 1.06 \\
 & LS & 95.64 $\pm$ 0.12 & 93.51 $\pm$ 0.20 & 89.90 $\pm$ 0.20 & 83.96 $\pm$ 0.58 & 67.35 $\pm$ 2.71 & 92.94 $\pm$ 0.17 & 88.10 $\pm$ 0.50 \\
 & SCE & \textbf{95.75 $\pm$ 0.16} & 94.29 $\pm$ 0.14 & 92.72 $\pm$ 0.25 & 89.26 $\pm$ 0.37 & \textbf{80.68 $\pm$ 0.42} & 93.48 $\pm$ 0.31 & 84.98 $\pm$ 0.76 \\
& GCE & \textbf{95.75 $\pm$ 0.14} & 94.24 $\pm$ 0.18 & 92.82 $\pm$ 0.11 & 89.37 $\pm$ 0.27 & \textbf{79.19 $\pm$ 2.04} & 92.83 $\pm$ 0.36 & 87.00 $\pm$ 0.99 \\
 & NCE+RCE & 95.36 $\pm$ 0.09 & 94.27 $\pm$ 0.18 & 92.03 $\pm$ 0.31 & 87.30 $\pm$ 0.35 & 77.89 $\pm$ 0.61 & \textbf{93.87 $\pm$ 0.03} & 86.83 $\pm$ 0.84 \\
  & JS & \textbf{95.89 $\pm$ 0.10} & 94.52 $\pm$ 0.21 & 93.01 $\pm$ 0.22 & 89.64 $\pm$ 0.15 & 76.06 $\pm$ 0.85 & 92.18 $\pm$ 0.31 & 87.99 $\pm$ 0.55  \\
 & GJS & \textbf{95.91 $\pm$ 0.09} & \textbf{95.33 $\pm$ 0.18} & \textbf{93.57 $\pm$ 0.16} & \textbf{91.64 $\pm$ 0.22} & 79.11 $\pm$ 0.31 & \textbf{93.94 $\pm$ 0.25} & \textbf{89.65 $\pm$ 0.37} \\
\midrule
 \multirow{8}{5em}{CIFAR-100} & CE & 77.60 $\pm$ 0.17 & 65.74 $\pm$ 0.22 & 55.77 $\pm$ 0.83 & 44.42 $\pm$ 0.84 & 10.74 $\pm$ 4.08 & 66.85 $\pm$ 0.32 & 49.45 $\pm$ 0.37  \\
 & BS & 77.65 $\pm$ 0.29 & 72.92 $\pm$ 0.50 & 68.52 $\pm$ 0.54 & 53.80 $\pm$ 1.76 & 13.83 $\pm$ 4.41 & 73.79 $\pm$ 0.43 & \textbf{64.67 $\pm$ 0.69} \\
 & LS & 78.60 $\pm$ 0.04 & 74.88 $\pm$ 0.15 & 68.41 $\pm$ 0.20 & 54.58 $\pm$ 0.47 & 26.98 $\pm$ 1.07 & 73.17 $\pm$ 0.46 & 57.20 $\pm$ 0.85 \\
 & SCE & 78.29 $\pm$ 0.24 & 74.21 $\pm$ 0.37 & 68.23 $\pm$ 0.29 & 59.28 $\pm$ 0.58 & 26.80 $\pm$ 1.11 & 70.86 $\pm$ 0.44 & 51.12 $\pm$ 0.37 \\
 & GCE & 77.65 $\pm$ 0.17 & 75.02 $\pm$ 0.24 & 71.54 $\pm$ 0.39 & 65.21 $\pm$ 0.16 & \textbf{49.68 $\pm$ 0.84} & 72.13 $\pm$ 0.39 & 51.50 $\pm$ 0.71 \\
 & NCE+RCE & 74.66 $\pm$ 0.21 & 72.39 $\pm$ 0.24 & 68.79 $\pm$ 0.29 & 62.18 $\pm$ 0.35 & 31.63 $\pm$ 3.59 & 71.35 $\pm$ 0.16 & 57.80 $\pm$ 0.52 \\
 & JS & 77.95 $\pm$ 0.39 & 75.41 $\pm$ 0.28 & 71.12 $\pm$ 0.30 & 64.36 $\pm$ 0.34 & 45.05 $\pm$ 0.93 & 71.70 $\pm$ 0.36 & 49.36 $\pm$ 0.25 \\
 & GJS & \textbf{79.27 $\pm$ 0.29} & \textbf{78.05 $\pm$ 0.25} & \textbf{75.71 $\pm$ 0.25} & \textbf{70.15 $\pm$ 0.30} & 44.49 $\pm$ 0.53 & \textbf{74.60 $\pm$ 0.47} & 63.70 $\pm$ 0.22 \\
 \bottomrule
\end{tabular}
\end{center}
\vspace{-0.3cm}
\end{table*}
\section{Experiments}
\label{sec:results}
\vspace{-0.1cm}
This section, first, empirically investigates the effectiveness of the proposed losses for learning with noisy labels on synthetic~(Section \ref{sec:synthetic-noise}) and real-world noise~(Section \ref{sec:real-noise}). This is followed by several experiments and ablation studies~(Section \ref{sec:ablation}) to shed light on the properties of $\DJSnp$ and $\DGJSnp$ through empirical substantiation of the theories and claims provided in Section \ref{sec:GJS}.
All these additional experiments are done on the more challenging CIFAR-100 dataset. 

\textbf{Experimental Setup.} We use ResNet 34 and 50 for experiments on CIFAR and WebVision datasets respectively and optimize them using SGD with momentum.  
The complete details of the training setup can be found in Appendix \ref{sup:training-details}. Most importantly, we take three main measures to ensure a fair and reliable comparison throughout the experiments: 1) we reimplement all the loss functions we compare with in a single shared learning setup, 2) we use the same hyperparameter optimization budget and mechanism for all the prior works and ours, and 3) we train and evaluate five networks for individual results, where in each run the synthetic noise, network initialization, and data-order are differently randomized. The thorough  analysis is evident from the higher performance of CE in our setup compared to prior works. Where possible, we report mean and standard deviation and denote the statistically-significant top performers with student t-test.
\subsection{Synthetic Noise Benchmarks: CIFAR}
\label{sec:synthetic-noise}
Here, we evaluate the proposed loss functions on the CIFAR datasets with two types of synthetic noise: symmetric and asymmetric. For symmetric noise, the labels are, with probability $\eta$, re-sampled from a uniform distribution over all labels. For asymmetric noise, we follow the standard setup of Patrini~\etal~\citep{patrini2017making}. For CIFAR-10, the labels are modified, with probability $\eta$, as follows: \textit{truck}~$\to$~\textit{automobile}, \textit{bird}~$\to$~\textit{airplane}, \textit{cat}~$\leftrightarrow$~\textit{dog}, and \textit{deer}~$\to$~\textit{horse}. For CIFAR-100, labels are, with probability $\eta$, cycled to the next sub-class of the same ``super-class'', \eg the labels of super-class ``vehicles 1'' are modified as follows: \textit{bicycle}~$\rightarrow$~$\textit{bus}$~$\rightarrow$~\textit{motorcycle}~$\rightarrow$~\textit{pickup truck}~$\rightarrow$~\textit{train}~$\rightarrow$~\textit{bicycle}.

We compare with other noise-robust loss functions such as label smoothing~(LS)~\citep{Lukasik_ICML_2020_label_smoothing_label_noisy}, Bootstrap~(BS)~\citep{Reed_arXiv_2014_bootstrapping}, Symmetric Cross-Entropy~(SCE)~\citep{Wang_ICCV_2019_Symmetric_CE}, Generalized Cross-Entropy~(GCE)~\citep{Zhang_NeurIPS_2018_Generalized_CE}, and the NCE+RCE loss of Ma~\etal~\citep{Ma_ICML_2020_Normalized_Loss}. Here, we do not compare to methods that propose a full pipeline since, first, a conclusive comparison would require re-implementation and individual evaluation of several components and second, robust loss functions can be considered orthogonal to them.

\textbf{Results.}
Table \ref{tab:cifar} shows the results for
symmetric and asymmetric noise on CIFAR-10 and CIFAR-100. 
$\DGJSnp$ performs similarly or better than other methods for different noise rates, noise types, and data sets. Generally, $\DGJSnp$'s efficacy is more evident for the more challenging CIFAR-100 dataset. For example, on 60\% uniform noise on CIFAR-100, the difference between $\DGJSnp$ and the second best~(GCE) is 4.94 percentage points, while our results on 80\% noise is lower than GCE. We attribute this to the high sensitivity of the results to the hyperparameter settings in such a high-noise rate which are also generally unrealistic (WebVision has $\sim$20\%).
The performance of $\DJSnp$ is consistently similar to the top performance of the prior works across different noise rates, types and datasets. 
In Section~\ref{sec:ablation}, we substantiate the importance of the consistency term, identified in Proposition~\ref{prop:GJS-JS-Consistency}, when going from $\DJSnp$ to $\DGJSnp$ that helps with the learning dynamics and reduce the susceptibility to noise. In Appendix \ref{sup:cifar-instance}, we provide results for $\DGJSnp$ on instance-dependent synthetic noise~\citep{zhang2021learning}.  Next, we test the proposed losses on a naturally-noisy dataset to see their efficacy in a real-world scenario.
\begin{table}
\scriptsize
\tabcolsep=0.095cm 
\caption{\label{tab:webvision}\textbf{Real-world Noise Benchmark on WebVision.} Mean test accuracy and standard deviation from five runs are reported for the validation sets of (mini) WebVision and ILSVRC12. GJS with two networks correspond to the mean prediction of two independently trained GJS networks with different seeds for data augmentation and weight initialization. Here, $\DGJSnp$ uses $Z=1$. Results marked with $\dagger$ are from Zheltonozhskii~\etal~\cite{zheltonozhskii2021contrast}.\vspace{0.2cm}}
\centering
\begin{tabular}{lcccccccc}\toprule
\multirow{2}{4em}{Method} & \multirow{2}{5em}{Architecture} &  \multirow{2}{5em}{Augmentation} & \multirow{2}{5em}{~~Networks} & \multicolumn{2}{c}{WebVision} & \multicolumn{2}{c}{ILSVRC12}
\\\cmidrule(lr){5-6}\cmidrule(lr){7-8}
        & & &  & Top 1  & Top 5     & Top 1  & Top 5 \\\midrule
ELR+~\citep{liu2020earlylearning}{$\dagger$} & Inception-ResNet-V2 & Mixup & 2  & 77.78 & $\bm{91.68}$ & 70.29 & 89.76 \\
DivideMix~\citep{Li_ICLR_2020_dividemix}{$\dagger$} & Inception-ResNet-V2 & Mixup & 2  & 77.32 & $\bm{91.64}$ & $\bm{75.20}$  & 90.84  \\
DivideMix~\citep{Li_ICLR_2020_dividemix}{$\dagger$} & ResNet-50 & Mixup & 2  & 76.32 $\pm$ 0.36 & 90.65 $\pm$ 0.16 & 74.42 $\pm$ 0.29 & $\bm{91.21 \pm 0.12}$ \\
 \midrule
CE & ResNet-50 & ColorJitter & 1 & 70.69 $\pm$ 0.66 & 88.64 $\pm$ 0.17 & 67.32 $\pm$ 0.57 & 88.00 $\pm$ 0.49 \\
 JS & ResNet-50 & ColorJitter & 1 & 74.56 $\pm$ 0.32 & 91.09 $\pm$ 0.08 & 70.36 $\pm$ 0.12 & 90.60 $\pm$ 0.09  \\
GJS & ResNet-50 & ColorJitter & 1  & 77.99 $\pm$ 0.35 & 90.62 $\pm$ 0.28 & 74.33 $\pm$ 0.46 & 90.33 $\pm$ 0.20 \\
GJS & ResNet-50 & ColorJitter & 2  & $\bm{79.28 \pm 0.24}$ & 91.22 $\pm$ 0.30 & $\bm{75.50 \pm 0.17}$ & $\bm{91.27 \pm 0.26 }$\\ \bottomrule
\end{tabular}
\vspace{-0.3cm}
\end{table}
\subsection{Real-World Noise Benchmark: WebVision}
\label{sec:real-noise}
WebVision v1 is a large-scale image dataset collected by crawling Flickr and Google, which resulted in an estimated  20\% of noisy labels~\citep{li2017webvision}. There are 2.4 million images of the same thousand classes as ILSVRC12. Here, we use a smaller version called mini WebVision~\citep{Jiang18MentorNet} consisting of the first 50 classes of the Google subset. We compare CE, $\DJSnp$, and $\DGJSnp$ on WebVision following the same rigorous procedure as for the synthetic noise. However, upon request by the reviewers, we also compare with the reported results of some state-of-the-art elaborate techniques. This comparison deviates from our otherwise systematic analysis. 

\textbf{Results.}
Table \ref{tab:webvision}, as the common practice, reports the performances on the validation sets of WebVision and ILSVRC12~(first 50 classes). Both $\DJSnp$ and $\DGJSnp$ exhibit large margins with standard CE, especially for top-1 accuracy. Top-5 accuracy, due to its admissibility of wrong top predictions, can obscure the susceptibility to noise-fitting and thus indicates smaller but still significant improvements. 

The two state-of-the-art methods on this dataset were DivideMix~\citep{Li_ICLR_2020_dividemix} and ELR+~\citep{liu2020earlylearning}. Compared to our setup, both these methods use a stronger network~(Inception-ResNet-V2 vs ResNet-50), stronger augmentations~(Mixup vs color jittering) and co-train two networks. Furthermore, ELR+ uses an exponential moving average of weights and DivideMix treats clean and noisy labeled examples differently after separating them using Gaussian mixture models. Despite these differences, $\DGJSnp$ performs as good or better in terms of top-1 accuracy on WebVision and significantly outperforms ELR+ on ILSVRC12~(70.29 vs 74.33). 
The importance of these differences becomes apparent as 1) the top-1 accuracy for DivideMix degrades when using ResNet-50, and 2) the performance of $\DGJSnp$ improves by adding one of their components, \ie the use of two networks.
We train an ensemble of two independent networks with the $\DGJSnp$ loss and average their predictions~(last row of Table \ref{tab:webvision}). This simple extension, which requires no change in the training code, gives significant improvements. To the best of our knowledge, this is the highest reported top-1 accuracy on WebVision and ILSVRC12 when no pre-training is used.

In Appendix \ref{sup:real-world-noise}, we show state-of-the-art results when using $\DGJSnp$  on two other real-world noisy datasets: ANIMAL-10N~\citep{song2019selfie} and Food-101N~\citep{lee2017cleannet}. 

So far, the experiments demonstrated the robustness of the proposed loss function~(regarding Proposition~\ref{prop:gjs-bounds}) via the significant improvement of the final accuracy on noisy datasets. While this was central and informative, it is also important to investigate whether this improvement comes from the theoretical properties that were argued for $\DJSnp$ and $\DGJSnp$. In what follows, we devise several such experiments, in an effort to substantiate the theoretical claims and conjectures.
%
%
%
%
\begin{figure}[t!]
    \captionsetup[subfigure]{aboveskip=-1pt,belowskip=-1pt}
    \centering
    \begin{minipage}{0.48\textwidth}
    
     \centering
     \subcaptionbox[b]{$\DJSnp, \eta=20\%$}
    {\includegraphics[width=0.49\textwidth]{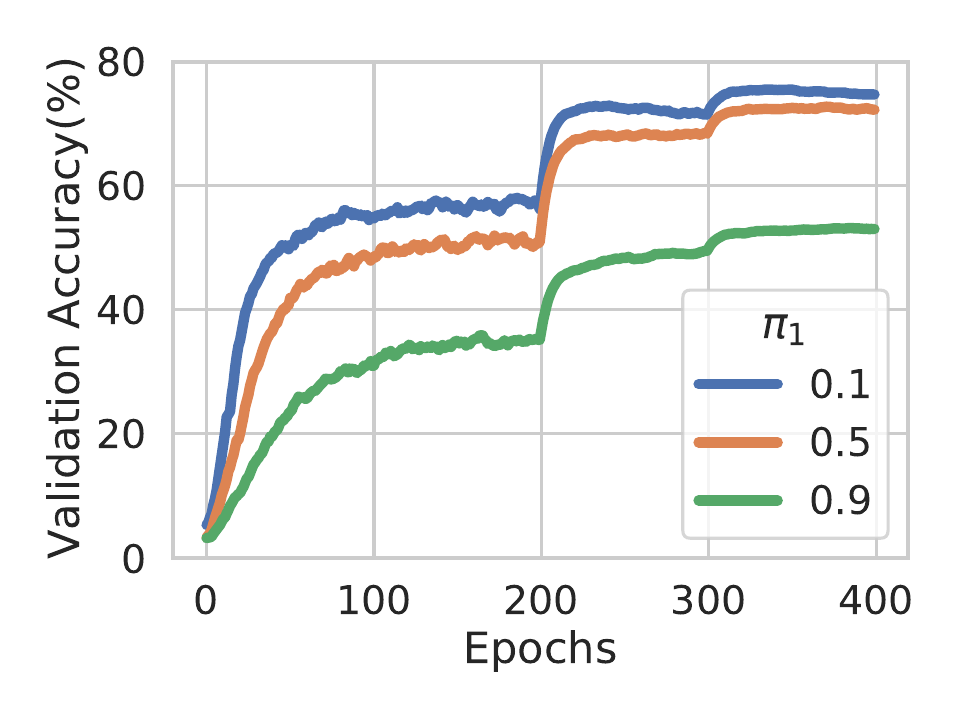}}
      \subcaptionbox[b]{$\DJSnp, \eta=60\%$}
    {\includegraphics[width=0.49\textwidth]{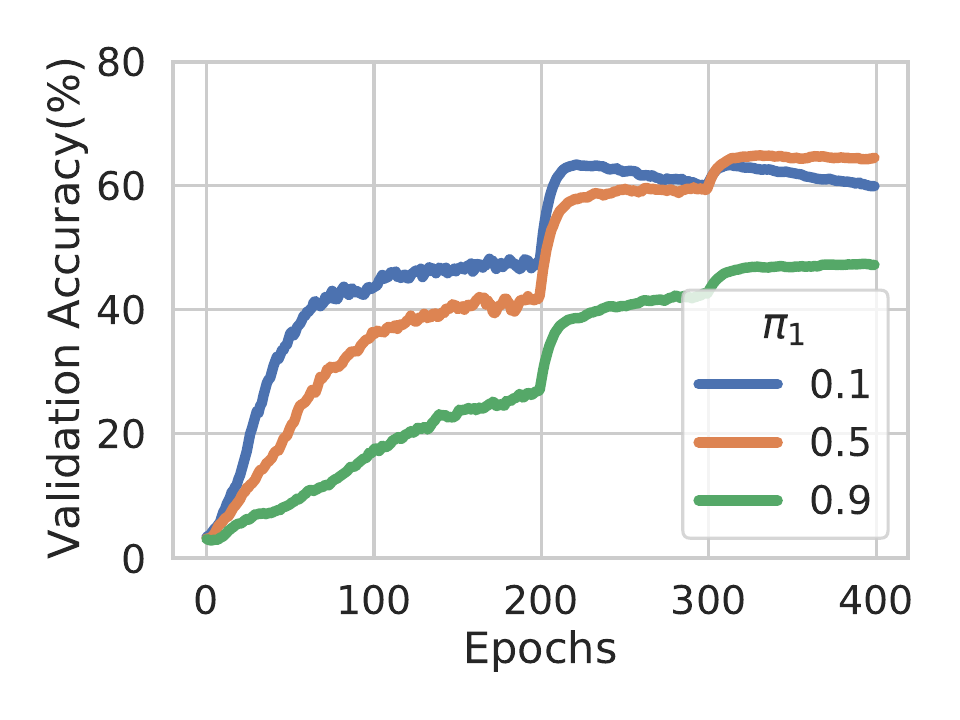}}
   \subcaptionbox[b]{$\DGJSnp, \eta=20\%$}
    {\includegraphics[width=0.49\textwidth]{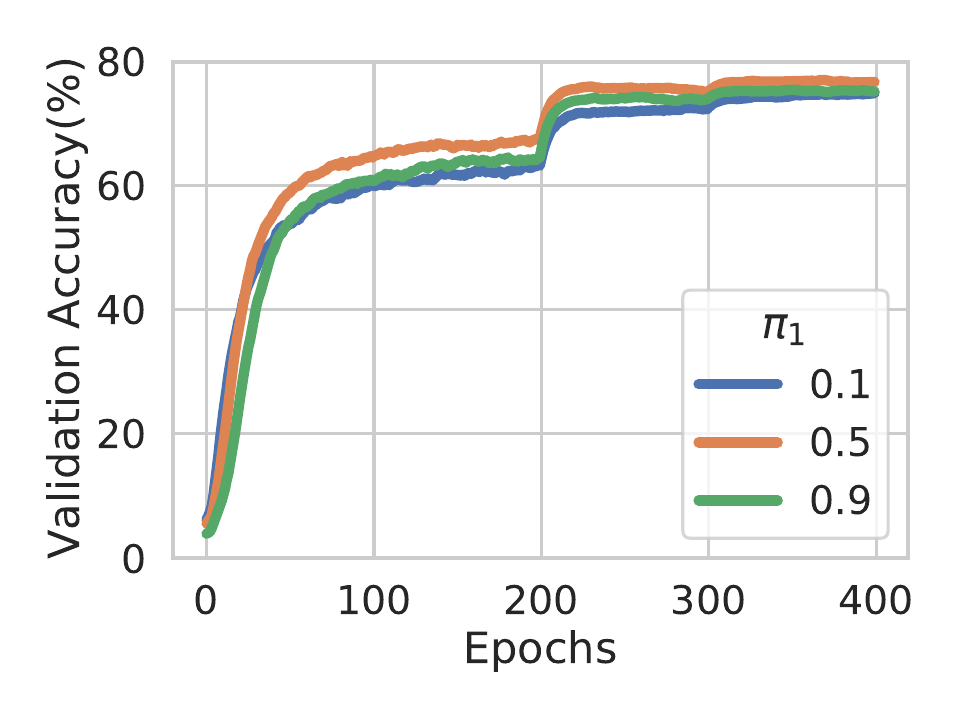}}
    \subcaptionbox[b]{$\DGJSnp, \eta=60\%$}
      {\includegraphics[width=0.49\textwidth]{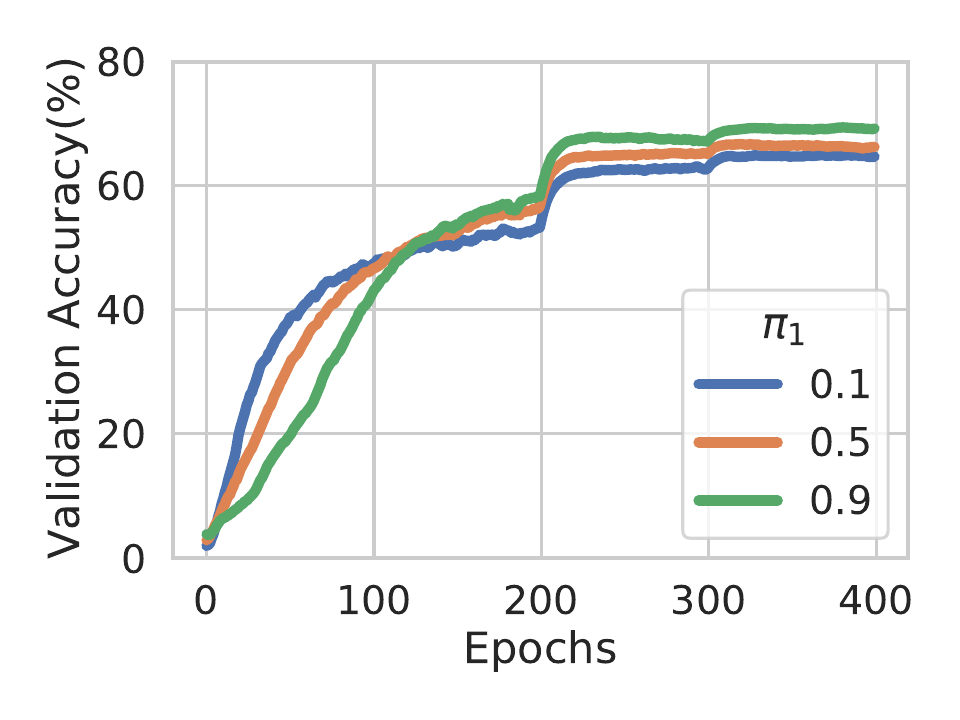}}

        \caption{\textbf{Effect of $\boldsymbol{\pi_1}$.} Validation accuracy of $\DJSnp$ and $\DGJSnp$ during training with symmetric noise on CIFAR100. From Proposition~\ref{prop:JS_limits_CE_MAE}, $\DJSnp$ behaves like CE and MAE for low and high values of $\pi_1$, respectively. The signs of noise-fitting for $\pi_1=0.1$ on 60\% noise~(b), and slow learning of $\pi_1=0.9$~(a-b), show this in practice. The $\DGJSnp$ loss does not exhibit overfitting for low values of $\pi_1$ and learns quickly for large values of $\pi_1$~(c-d). \vspace{-0.7cm}
        }
    \label{fig:piAblation}
    \end{minipage}
    \hfill
    \begin{minipage}{0.48\textwidth}
        \centering
        {\includegraphics[trim={0cm 0cm 1.5cm 1.3cm},clip,width=1.0\textwidth]{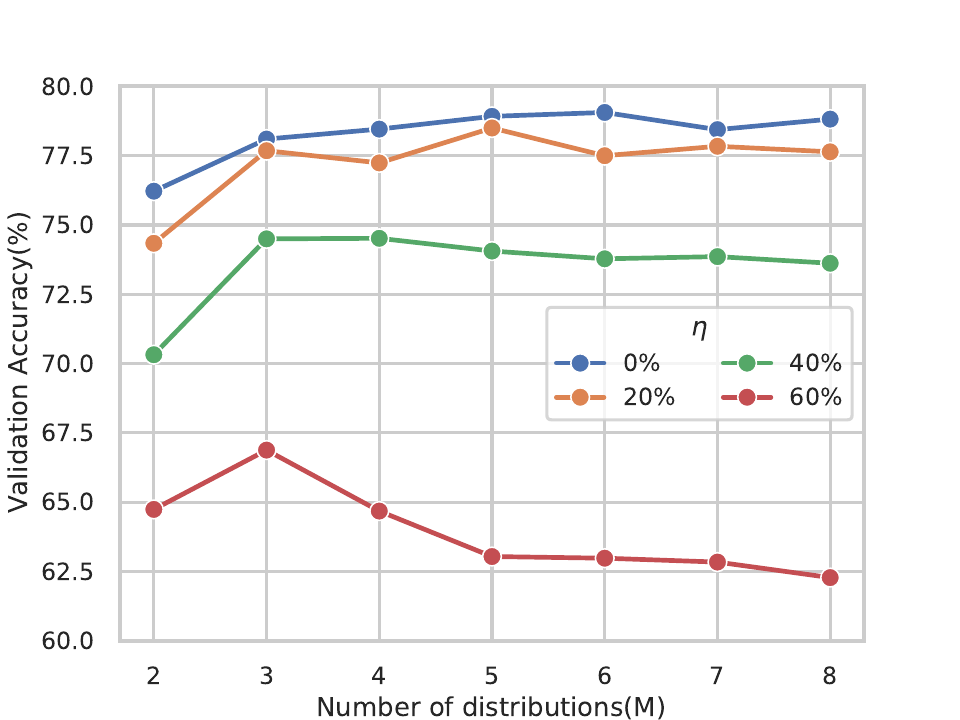}}
        \caption{\textbf{Effect of $\mathbf{M}$.} Validation accuracy for increasing number of distributions~($M$) and different symmetric noise rates on CIFAR-100 with $\pi_1=\frac{1}{2}$.
        For all noise rates, using three instead of two distributions results in a higher accuracy. Going beyond three distributions is only helpful for lower noise rates. For simplicity we use $M=3$ (corresponding to two augmentations) for all of our experiments. \vspace{-0.9cm}
        }
        \label{fig:MAblation}
    \end{minipage}
    \vspace{-0.23cm}
\end{figure}

\subsection{Towards a Better Understanding of the Jensen-Shannon-based Loss Functions}
\vspace{-0.14cm}
\label{sec:ablation}
Here, we study the behavior of the losses for different distribution weights $\pi_1$, number of distributions $M$, and epochs. We also provide insights on why $\DGJSnp$ performs better than $\DJSnp$.

\textbf{How does} $\bm{\pi_1}$ \textbf{control the trade-off of robustness and learnability?} In Figure \ref{fig:piAblation}, we plot the validation accuracy during training for both $\DJSnp$ and $\DGJSnp$ at different values of $\pi_1$ and noise rates $\eta$. 
From Proposition~\ref{prop:JS_limits_CE_MAE}, we expect $\DJSnp$ to behave as CE for low values of $\pi_1$ and as MAE for larger values of $\pi_1$. Figure \ref{fig:piAblation} (a-b) confirms this. Specifically, $\pi_1=0.1$ learns quickly and performs well for low noise but overfits for $\eta=0.6$ (characteristic of non-robust CE), on the other hand, $\pi_1=0.9$ learns slowly but is robust to high noise rates (characteristic of noise-robust MAE).

In Figure \ref{fig:piAblation} (c-d), we observe three qualitative improvements of $\DGJSnp$ over $\DJSnp$: 1) no signs of overfitting to noise for large noise rates with low values of $\pi_1$, 2) better learning dynamics for large values of $\pi_1$ that otherwise learns slowly, and 3) converges to a higher validation accuracy.

\textbf{How many distributions to use?} Figure \ref{fig:MAblation} depicts validation accuracy for varying number of distributions $M$. For all noise rates, we observe a performance increase going from $M=2$ to $M=3$. However, the performance of $M>3$ depends on the noise rate. For lower noise rates, having more than three distributions can improve the performance. For higher noise rates \eg $60\%$, having $M>3$ degrades the performance. We hypothesise this is due to: 1) at high noise rates, there are only a few correctly labeled examples that can help guide the learning, and 2) going from $M=2$ to $M=3$ adds a consistency term, while $M>3$ increases the importance of the consistency term in Proposition \ref{prop:GJS-JS-Consistency}. Therefore, for a large enough M, the loss will find it easier to keep the consistency term low (keep predictions close to uniform as at the initialization), instead of generalizing based on the few clean examples.
For simplicity, we have used $M=3$ for all experiments with $\DGJSnp$. 

\textbf{Is the improvements of GJS over JS due to mean prediction or consistency?}
Proposition \ref{prop:GJS-JS-Consistency} decomposed $\DGJSnp$ into a $\DJSnp$ term with a mean prediction~($\dvm$) and a consistency term operating on all distributions but the target. In Table \ref{tab:gjs-wo-consistency}, we compare the performance of $\DJSnp$ and $\DGJSnp$ to $\DGJSnp$ without the consistency term, i.e., $\mathcal{L}_{\mathrm{JS_{\vpi'}}}(\dve{y},\dvm)$.
The results suggest that the improvement of $\DGJSnp$ over $\DJSnp$ can be attributed to the consistency term. 

Figure \ref{fig:piAblation} (a-b) showed that $\DJSnp$ improves the learning dynamics of MAE by blending it with CE, controlled by $\pi_1$. Similarly, we see here that the consistency term also improves the learning dynamics~(underfitting and convergence speed) of MAE. Interestingly, Figure~\ref{fig:piAblation} (c-d), shows the higher values of $\pi_1$~(closer to MAE) work best for $\DGJSnp$, hinting that, the consistency term improves the learning dynamics of MAE so much so that the role of CE becomes less important.

\begin{table*}[t]
    \begin{minipage}{0.48\textwidth}
    \centering
    \vspace{-0.25cm}
    \captionof{table}{\textbf{Effect of Consistency.} Validation accuracy for $\DJSnp$, $\DGJSnp$ w/o the consistency term in Proposition \ref{prop:GJS-JS-Consistency}, and $\DGJSnp$ for $40\%$ noise on the CIFAR-100 dataset. Using the mean of two predictions in the $\DJSnp$ loss does not improve performance. On the other hand, adding the consistency term significantly helps.\vspace{0.2cm} \label{tab:gjs-wo-consistency}}
    \scriptsize
    \begin{tabular}{ l c }  \toprule
        Method & Accuracy \\ \midrule
        $\LJS(\dve{y},\dvp{2})$ & 71.0\\ 
        $\mathcal{L}_{\mathrm{JS_{\vpi'}}}(\dve{y},\dvm)$ & 68.7  \\ 
        $\LGJS(\dve{y}, \dvp{2},\dvp{3})$ & 74.3 \\ 
        \bottomrule 
    \end{tabular}
     \end{minipage}
    \hfill
    \begin{minipage}{0.48\textwidth}
        \centering
        \scriptsize
        \tabcolsep=0.16cm 
        \captionof{table}{\label{tab:cifar-concleannoisy} \textbf{Effect of }$\bm{\DGJSnp}$\textbf{.} Validation accuracy when using different loss functions for clean and noisy examples of the CIFAR-100 training set with 40\% symmetric noise. Noisy examples benefit significantly more from $\DGJSnp$ than clean examples~(74.1 vs 72.9).
}
        \begin{tabular}{c c c c c }\toprule
        \multicolumn{2}{c}{Method} & \multicolumn{3}{c}{$\pi_1$} \\ \cmidrule(lr){1-2}\cmidrule(lr){3-5}
        Clean & Noisy & 0.1 & 0.5 & 0.9
        \\\midrule
        JS & JS & 70.0 & $\bm{71.5}$ & 55.3 \\
        GJS & JS & 72.6 & $\bm{72.9}$ & 70.2 \\
        JS & GJS & 71.0 & $\bm{74.1}$ & 68.0 \\
        GJS & GJS & 71.3 & $\bm{74.7}$ & 73.8 \\ \bottomrule
        \end{tabular}
    \end{minipage} 
\end{table*}
\begin{table*}[b]
    \begin{minipage}{0.49\textwidth}
        \centering
        \scriptsize
        \tabcolsep=0.16cm 
        \captionof{table}{\label{tab:cifar-augs} \textbf{Effect of Augmentation Strategy.} Validation accuracy for training w/o CutOut(-CO) or w/o RandAug(-RA) or w/o both(weak) on 40\% symmetric and asymmetric noise on CIFAR-100. All methods improves by stronger augmentations. GJS performs best for all types of augmentations.}
        \begin{tabular}{p{0.9cm} >{\centering}p{0.365cm} >{\centering}p{0.45cm} >{\centering}p{0.45cm} >{\centering}p{0.365cm} >{\centering}p{0.365cm} >{\centering}p{0.45cm} >{\centering}p{0.45cm} >{\centering\arraybackslash}p{0.365cm}}\toprule
        \multirow{2}{4em}{Method} & \multicolumn{4}{c}{Symmetric} & \multicolumn{4}{c}{Asymmetric} \\
        \cmidrule(lr){2-5} \cmidrule(lr){6-9}
        & Full & -CO & -RA & Weak & Full & -CO & -RA & Weak
        \\\midrule
        GCE & 70.8 & 64.2 & 64.1 & 58.0 & 51.7 & 44.9 & 46.6 & 42.9  \\
        NCE+RCE & 68.5 & 66.6 & 68.3 & 61.7 & 57.5 & 52.1 & 49.5 & 44.4  \\
        GJS & $\bm{74.8}$ & $\bm{71.3}$ & $\bm{70.6}$ & $\bm{66.5}$ & \textbf{62.6} & \textbf{56.8} & \textbf{52.2} & \textbf{44.9}  \\ \bottomrule
        \end{tabular}
    \end{minipage}
    \hfill
    \begin{minipage}{0.45\textwidth}
    
    \centering
    \caption{\label{sup:tab:cifar-epochs} \textbf{Effect of Number of Epochs.} Validation accuracy for training with 200 and 400 epochs for 40\% symmetric and asymmetric noise on CIFAR-100. GJS still outperforms the baselines and NCE+RCE's performance is reduced heavily by the decrease in epochs.}
    \scriptsize
    \tabcolsep=0.290cm 
    \begin{tabular}{lcccc}\toprule
        \multirow{2}{4em}{Method} & \multicolumn{2}{c}{Symmetric} & \multicolumn{2}{c}{Asymmetric} \\
        \cmidrule(lr){2-3} \cmidrule(lr){4-5}
        & 200 & 400 & 200 & 400
        \\\midrule
        GCE & 70.3 & 70.8  & 39.1 & 51.7  \\
        NCE+RCE & 60.0 & 68.5 & 35.0 & 57.5  \\
        GJS & $\bm{72.9}$ & $\bm{74.8}$ & $\bm{43.2}$ & $\bm{62.6}$  \\ \bottomrule
    \end{tabular}
    \end{minipage}
    \vspace{-0.2cm}
\end{table*}

\textbf{Is GJS mostly helping the clean or noisy examples?} 
To better understand the improvements of $\DGJSnp$ over $\DJSnp$, we perform an ablation with different losses for clean and noisy examples, see Table~\ref{tab:cifar-concleannoisy}. 
We observe that using $\DGJSnp$ instead of $\DJSnp$ improves performance in all cases. Importantly, using $\DGJSnp$ only for the noisy examples performs significantly better than only using it for the clean examples~(74.1 vs 72.9). The best result is achieved when using $\DGJSnp$ for both clean and noisy examples but still close to the noisy-only case~(74.7 vs 74.1). 

\textbf{How is different choices of perturbations affecting GJS?} In this work, we use stochastic augmentations for $\mathcal{A}$, see Appendix \ref{sup:cifar-training-details} for details. Table \ref{tab:cifar-augs} reports validation results on 40\% symmetric and asymmetric noise on CIFAR-100 for varying types of augmentation. We observe that all methods improve their performance with stronger augmentation and that $\DGJSnp$ achieves the best results in all cases. Also, note that we use weak augmentation for all naturally-noisy datasets~(WebVision, ANIMAL-10N, and Food-101N) and still get state-of-the-art results.

\textbf{How fast is the convergence?}
We found that some baselines~(especially the robust NCE+RCE) had slow convergence. Therefore, we used 400 epochs for all methods to make sure all had time to converge properly. Table \ref{sup:tab:cifar-epochs} shows results on 40\% symmetric and asymmetric noise on CIFAR-100 when the number of epochs has been reduced by half. 

\textbf{Is training with the proposed losses leading to more consistent networks?} Our motivation for investigating losses based on Jensen-Shannon divergence was partly due to the observation in Figure~\ref{fig:consistency-observation} that consistency and accuracy correlate when learning with CE loss. In Figure~\ref{fig:consistency-observation-multiLosses}, we compare CE, $\DJSnp$, and $\DGJSnp$ losses in terms of validation accuracy and consistency during training on CIFAR-100 with 40\% symmetric noise. We find that the networks trained with $\DJSnp$ and $\DGJSnp$ losses are more consistent and has higher accuracy. In Appendix \ref{sup:sec:cifar-con}, we report the consistency of the networks in Table \ref{tab:cifar}.

\textbf{Summary of experiments in the appendix.} Due to space limitations, we report several important experiments in the appendix. We evaluate the effectiveness of $\DGJSnp$ on 1) instance-dependent synthetic noise~(Section~\ref{sup:cifar-instance}), and 2) real-world noisy datasets ANIMAL-10N and Food-101N~(Section~\ref{sup:real-world-noise}). We also investigate the importance of 1) losses being symmetric and bounded for learning with noisy labels~(Section~\ref{sup:sec:js-dissection}), and 2) a clean vs noisy validation set for hyperparameter selection and the effect of a single set of parameters for all noise rates~(Section~\ref{sup:sec:noisy-single-HPs}).

\begin{figure*}
        \centering
        \begin{subfigure}[b]{0.32\textwidth} \centering \includegraphics[width=0.9\textwidth]{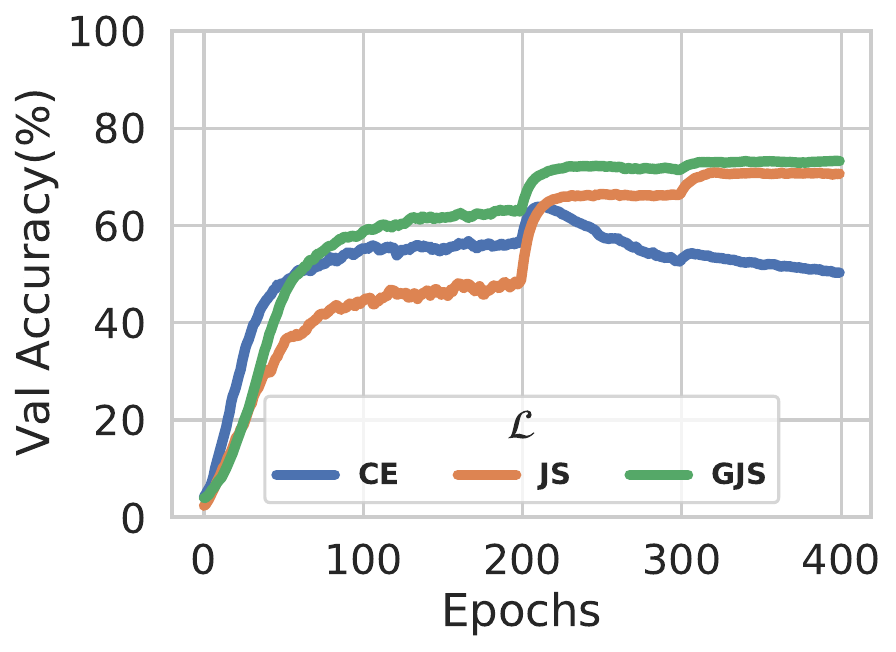} \caption{Validation Accuracy} 
         \end{subfigure} 
         \hfill
         \begin{subfigure}[b]{0.32\textwidth} \centering \includegraphics[width=0.9\textwidth]{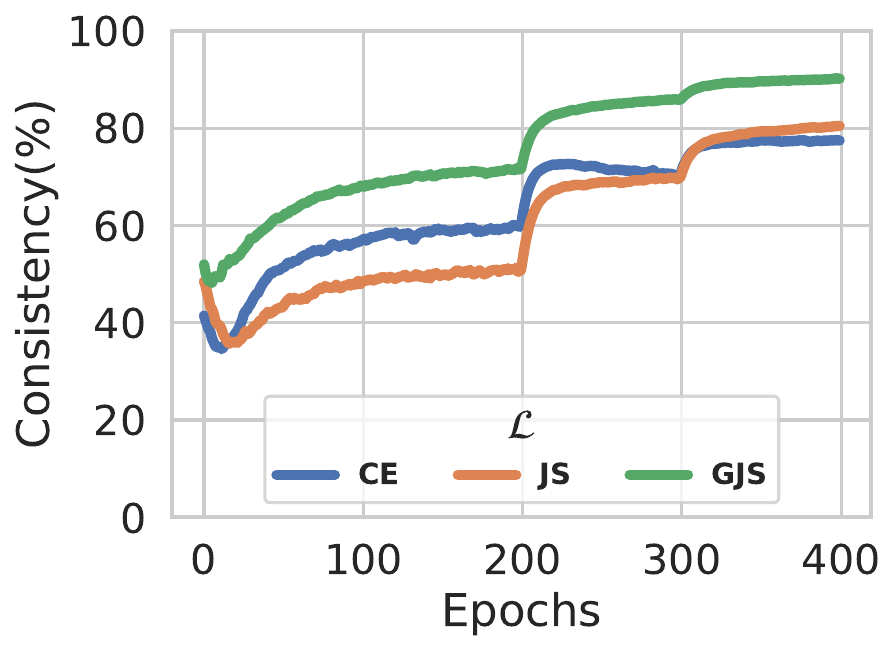} \caption{Consistency Clean} 
         \end{subfigure} 
         \begin{subfigure}[b]{0.32\textwidth} \centering \includegraphics[width=0.9\textwidth]{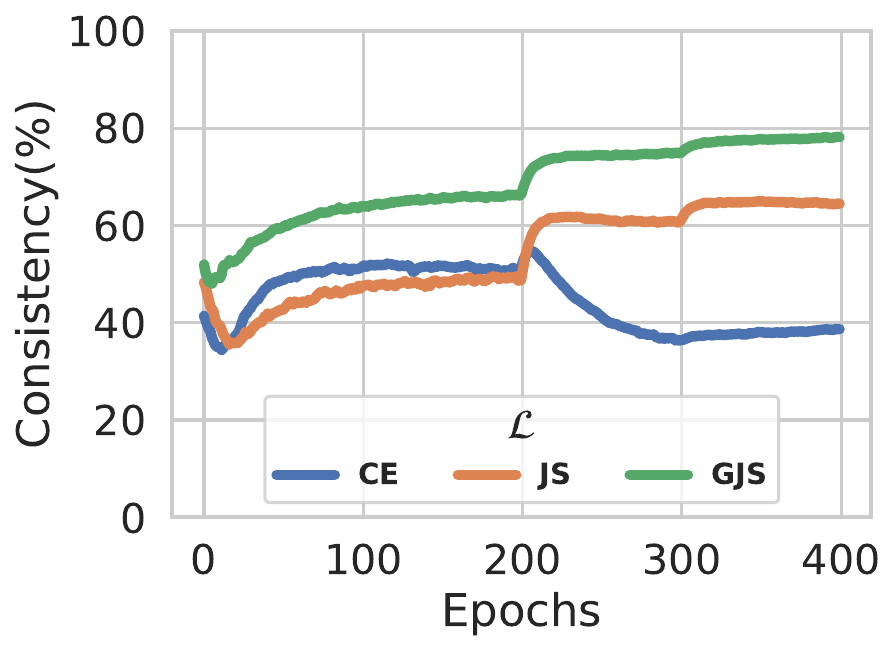} \caption{Consistency Noisy} 
         \end{subfigure} 
         \caption{\textbf{Evolution of a trained network's consistency for the CE, JS, and GJS losses.} We plot the evolution of the validation accuracy~(a) and network's consistency on clean~(b) and noisy~(c) examples of the training set of CIFAR-100 when learning with 40\% symmetric noise. All losses use the same learning rate and weight decay and both $\DJSnp$ and $\DGJSnp$ use $\pi_1=0.5$. The consistency of the learnt function and the accuracy closely correlate. The accuracy and consistency of $\DJSnp$ and $\DGJSnp$ improve during training, while both degrade when learning with CE loss. 
         } \label{fig:consistency-observation-multiLosses}
         \vspace{-0.2cm}
\end{figure*}
\section{Limitations \& Future Directions}
\vspace{-0.1cm}
We empirically showed that the consistency of the network around noisy data degrades as it fits noise and accordingly proposed a loss based on generalized Jensen-Shannon divergence ($\DGJSnp$). 
While we empirically verified the significant role of consistency regularization in robustness to noise, we only theoretically showed the robustness~($B_L=B_U$) of $\DGJSnp$ at its limit~($\pi_1 \rightarrow 1$) where the consistency term gradually vanishes. Therefore, the main limitation is the lack of a theoretical proof of the robustness of the consistency term in Proposition~\ref{prop:GJS-JS-Consistency}.
This is, in general, an important but understudied area, also for the literature of self- or semi-supervised learning and thus is of utmost importance for future works. 

Secondly, we had an important observation that $\DGJSnp$ with $M>3$ might not perform well under high noise rates. While we have some initial conjectures, this phenomenon deserves a systematic analysis both empirically and theoretically.

Finally, a minor practical limitation is the added computations for $\DGJSnp$ forward passes, however this applies to \textit{training time} only and in all our experiments, we only use one extra prediction~($M=3$).

\section{Final Remarks}
\label{sec:final-remarks}
\vspace{-0.1cm}
We first made two central observations that (i) robust loss functions have an underfitting issue and (ii) consistency of noise-fitting networks is significantly lower around noisy data points. Correspondingly, we proposed two loss functions, $\DJSnp$ and $\DGJSnp$, based on Jensen-Shannon divergence that (i) interpolates between noise-robust MAE and fast-converging CE, and (ii) encourages consistency around training data points. This simple proposal led to state-of-the-art performance on both synthetic and real-world noise datasets even when compared to the more elaborate pipelines such as DivideMix or ELR+. Furthermore,
we discussed their robustness within the theoretical construction of Ghosh~\etal~\citep{Ghosh_AAAI_2017_MAE}. By drawing further connections to other seminal loss functions such as CE, MAE, GCE, and consistency regularization, we uncovered other desirable or informative properties. We further empirically studied different aspects of the losses that corroborate various theoretical properties.

Overall, we believe the paper provides informative theoretical and empirical evidence for the usefulness of two simple and novel JS divergence-based loss functions for learning under noisy data that achieve state-of-the-art results. At the same time, it opens interesting future directions.

\textbf{Ethical Considerations.} Considerable resources are needed to create labeled data sets due to the burden of manual labeling process. Thus, the creators of large annotated datasets are mostly limited to well-funded companies and academic institutions. In that sense, developing robust methods against label noise enables less affluent organizations or individuals to benefit from labeled datasets since imperfect or automatic labeling can be used instead. On the other hand, proliferation of such harvested datasets can increase privacy concerns arising from redistribution and malicious use.
\paragraph{Acknowledgement.} This work was partially supported by the Wallenberg AI, Autonomous Systems and Software Program (WASP) funded
by the Knut and Alice Wallenberg Foundation.

\bibliography{main}
\bibliographystyle{unsrt}

\appendix
\onecolumn

\section{Training Details}
\label{sup:training-details}
All proposed losses and baselines use the same training settings, which are described in detail here.

\subsection{CIFAR}
\label{sup:cifar-training-details}
\textbf{General training details.} For all the results on the CIFAR datasets, we use a PreActResNet-34 with a standard SGD optimizer with Nesterov momentum, and a batch size of 128. For the network, we use three stacks of five residual blocks with 32, 64, and 128 filters for the layers in these stacks, respectively. The learning rate is reduced by a factor of 10 at 50\% and 75\% of the total 400 epochs. For data augmentation, we use RandAugment~\citep{cubuk2019randaugment} with $N=1$ and $M=3$ using random cropping~(size 32 with 4 pixels as padding), random horizontal flipping, normalization and lastly Cutout~\citep{devries2017cutout} with length 16. We set random seeds for all methods to have the same network weight initialization, order of data for the data loader, train-validation split, and noisy labels in the training set. We use a clean validation set corresponding to 10\% of the training data. A clean validation set is commonly provided with real-world noisy datasets~\citep{li2017webvision,li2019learning}. Any potential gain from using a clean instead of a noisy validation set is the same for all methods since all share the same setup. 

\textbf{JS and GJS implementation.} We implement the Jensen-Shannon-based losses using the definitions based on KL divergence, see Equation \ref{eq:GJSasKL}. To make sure the gradients are propagated through the target argument, we do not use the KL divergence in PyTorch. Instead, we write our own based on the official implementation.  

\textbf{Search for learning rate and weight decay.} We do a separate hyperparameter search for learning rate and weight decay on 40\% noise using both asymmetric and symmetric noises on CIFAR datasets. For CIFAR-10, we search for learning rates in $[0.001, 0.005, 0.01, 0.05, 0.1]$ and weight decays in $[1e-4, 5e-4, 1e-3]$. The method-specific hyperparameters used for this search were 0.9, 0.7, (0.1,1.0), 0.7, (1.0,1.0), 0.5, 0.5 for BS($\beta$), LS($\epsilon$), SCE($\alpha,\beta$), GCE($q$), NCE+RCE($\alpha,\beta$), JS($\pi_1$) and GJS($\pi_1$), respectively. For CIFAR-100, we search for learning rates in $[0.01, 0.05, 0.1, 0.2, 0.4]$ and weight decays in $[1e-5, 5e-5, 1e-4]$. The method-specific hyperparameters used for this search were 0.9, 0.7, (6.0,0.1), 0.7, (10.0,0.1), 0.5, 0.5 for BS($\beta$), LS($\epsilon$), SCE($\alpha,\beta$), GCE($q$), NCE+RCE($\alpha,\beta$), JS($\pi_1$) and GJS($\pi_1$), respectively. Note that, these fixed method-specific hyperparameters for both CIFAR-10 and CIFAR-100 are taken from their corresponding papers for this initial search of learning rate and weight decay but they will be further optimized systematically in the next steps.

\textbf{Search for method-specific parameters.} We fix the obtained best learning rate and weight decay for all other noise rates, but then for each noise rate/type, we search for method-specific parameters. For the methods with a single hyperparameter, BS~($\beta$), LS~($\epsilon$), GCE~($q$), JS~($\pi_1$), GJS~($\pi_1$), we try values in $[0.1, 0.3, 0.5, 0.7, 0.9]$. On the other hand, NCE+RCE and SCE have three hyperparameters, \ie $\alpha$ and $\beta$ that scale the two loss terms, and $A\coloneqq \log(0)$ for the RCE term. We set $A=\log{(1e-4)}$ and do a grid search for three values of $\alpha$ and two of beta $\beta$~(six in total) around the best reported parameters from each paper.\footnote{We also tried using $\beta=1-\alpha$, and mapping the best parameters from the papers to this range, combined with a similar search as for the single parameter methods, but this resulted in worse performance.}
 
\textbf{Test evaluation.} The best parameters are then used to train on the full training set with five different seeds. The final parameters that were used to get the results in Table \ref{tab:cifar} are shown in Table \ref{tab:cifar-hps}. 

For completeness, in Appendix \ref{sup:sec:noisy-single-HPs}, we provide results for a less thorough hyperparameter search(more similar to related work) which also use a noisy validation set.

 \begin{table*}[t!]
 \scriptsize
 \caption{\label{tab:cifar-hps} \textbf{Hyperparameters for CIFAR.} A hyperparameter search over learning rates and weight decays, was done for 40\% noise on both symmetric and asymmetric noise for the CIFAR datasets. The best parameters for each method are shown in this table, where the format is [learning rate, weight decay]. The hyperparameters for zero percent noise uses the same settings as for the symmetric noise. For the best learning rate and weight decay, another search is done for method-specific hyperparameters, and the best values are shown here. For methods with a single hyperparameter, the value correspond to their respective hyperparameter, \ie BS~($\beta$), LS~($\epsilon$), GCE~($q$), JS~($\pi_1$), GJS~($\pi_1$). For NCE+RCE and SCE the value correspond to [$\alpha$, $\beta$].
 }
 \begin{center}
 \tabcolsep=0.12cm 
 \begin{tabular}{ @{}l l c c c c c c c c c@{}} 
 \toprule
 \multirow{3}{4em}{Dataset} & \multirow{3}{4em}{Method} & \multicolumn{2}{c}{Learning Rate \& Weight Decay} & \multicolumn{7}{c}{Method-specific Hyperparameters}
 \\
  \cmidrule(lr){3-4} \cmidrule(lr){5-11}
 & & \multicolumn{1}{c}{Sym Noise} & \multicolumn{1}{c}{Asym Noise} & \multicolumn{1}{c}{No Noise} & \multicolumn{4}{c}{Sym Noise} & \multicolumn{2}{c}{Asym Noise} \\ 
 \cmidrule(lr){3-3} \cmidrule(lr){4-4} \cmidrule(lr){5-5} \cmidrule(lr){6-9} \cmidrule(lr){10-11}
 & & 20-80\% & 20-40\% & 0\% & 20\% & 40\% & 60\% & 80\% & 20\% & 40\%   \\
 \midrule
 \multirow{8}{5em}{CIFAR-10} & CE & [0.05, 1e-3] & [0.1, 1e-3] & - & - & - & - & - & - & -\\
 & BS & [0.1, 1e-3] & [0.1, 1e-3] & 0.5 & 0.5 & 0.7 & 0.7 & 0.9 & 0.7 & 0.5 \\
 & LS & [0.1, 5e-4] & [0.1, 1e-3] & 0.1 & 0.5 & 0.9 & 0.7 & 0.1 & 0.1 & 0.1 \\
 & SCE & [0.01, 5e-4] & [0.05, 1e-3] & [0.2, 0.1] & [0.05, 0.1] & [0.1, 0.1] & [0.2, 1.0] & [0.1,1.0] & [0.1, 0.1] & [0.2, 1.0] \\
 & GCE & [0.01, 5e-4] & [0.1, 1e-3] & 0.5 & 0.7 & 0.7 & 0.7 & 0.9 & 0.1 & 0.1 \\
 & NCE+RCE & [0.005, 1e-3] & [0.05, 1e-4] & [10, 0.1] & [10, 0.1] & [10, 0.1] & [1.0, 0.1] & [10,1.0] & [10, 0.1] & [1.0, 0.1] \\
 & JS & [0.01, 5e-4] & [0.1, 1e-3] & 0.1 & 0.7 & 0.7 & 0.9 & 0.9 & 0.3 & 0.3 \\
 & GJS & [0.1, 5e-4] & [0.1, 1e-3] & 0.5 & 0.3 & 0.9 & 0.1 & 0.1 & 0.3 & 0.3 \\
\midrule
 \multirow{8}{5em}{CIFAR-100} & CE & [0.4, 1e-4] & [0.2, 1e-4] & - & - & - & - & - & - & - \\
 & BS & [0.4, 1e-4] & [0.4, 1e-4] & 0.7 & 0.5 & 0.5 & 0.5 & 0.9 & 0.3 & 0.3  \\
 & LS & [0.2, 5e-5] & [0.4, 1e-4] & 0.1 & 0.7 & 0.7 & 0.7 & 0.9 & 0.5 & 0.7 \\
 & SCE & [0.2, 1e-4] & [0.4, 5e-5] & [0.1, 0.1] & [0.1, 0.1] & [0.1, 0.1] & [0.1, 1.0] & [0.1,0.1] & [0.1, 1.0] & [0.1, 1.0] \\
 & GCE & [0.4, 1e-5] & [0.2, 1e-4] & 0.5 & 0.5 & 0.5 & 0.7 & 0.7 & 0.7 & 0.7 \\
 & NCE+RCE & [0.2, 5e-5] & [0.2, 5e-5] & [20, 0.1] & [20, 0.1] & [20, 0.1] & [20, 0.1] & [20,0.1] & [20, 0.1] & [10, 0.1] \\
 & JS & [0.2, 1e-4] & [0.1, 1e-4] & 0.1 & 0.1 & 0.3 & 0.5 & 0.3 & 0.5 & 0.5 \\
 & GJS & [0.2, 5e-5] & [0.4, 1e-4] & 0.3 & 0.3 & 0.5 & 0.9 & 0.1 & 0.5 & 0.1 \\
 \bottomrule
\end{tabular}
 \end{center}
 \end{table*}

\subsection{WebVision}
All methods train a randomly initialized ResNet-50 model from PyTorch using the SGD optimizer with Nesterov momentum, and a batch size of 32 for $\DGJSnp$ and 64 for CE and $\DJSnp$. For data augmentation, we do a random resize crop of size 224, random horizontal flips, and color jitter~(torchvision ColorJitter transform with brightness=0.4, contrast=0.4, saturation=0.4, hue=0.2). We use a fixed weight decay of $1e-4$ and do a grid search for the best learning rate in $[0.1, 0.2, 0.4]$ and $\pi_1 \in [0.1, 0.3, 0.5, 0.7, 0.9]$. The learning rate is reduced by a multiplicative factor of $0.97$ every epoch, and we train for a total of 300 epochs. 
The best starting learning rates were 0.4, 0.2, 0.1 for CE, JS and GJS, respectively. Both $\DJSnp$ and $\DGJSnp$ used $\pi_1=0.1$. With the best learning rate and $\pi_1$, we ran four more runs with new seeds for the network initialization and data loader.

\section{Additional Experiments and Insights}
\label{sup:experiments}

\subsection{Instance-Dependent Synthetic Noise}
\label{sup:cifar-instance}
In Section \ref{sec:synthetic-noise}, we showed results on two types of synthetic noise: symmetric~$(\eta)$ and asymmetric~($\eta(y)$). Although these noise types are simple to empirically and theoretically analyze, they might be different from noise observed in real-world datasets. Recently, a new type of synthetic noise has been proposed by Zhang~\etal~\cite{zhang2021learning}, where the risks of mislabeling an example of class $i$ to class $j$ vary per example~($\eta_{ij}(\vx)$). This type of noise is called instance-dependent and is more similar the noise in real-world datasets.

In Table \ref{tab:cifar-instance-dependent}, we compare CE, Generalized CE~(GCE) and $\DGJSnp$ on three different types of 35\% instance-dependent noise on the CIFAR datasets. The training setup is the same as for the results in Table \ref{tab:cifar}, described in detail in Section \ref{sup:cifar-training-details}. For all methods, we search for the best hyperparameters on the Type-I noise and use the same settings for the other two types. For CIFAR-10, the optimal hyperparameters~(learning rate, weight decay, method-specific) were: (0.1, 1e-3, -), (0.005, 1e-3, 0.9), (0.001, 5e-4, 0.5) for CE, GCE, and $\DGJSnp$, respectively. For CIFAR-100, they were: (0.1, 5e-4, -), (0.4, 5e-5, 0.7), (0.1, 5e-4, 0.3) for CE, GCE, and $\DGJSnp$, respectively.

On the simpler CIFAR-10, GCE and $\DGJSnp$ perform similarly, but on the more challenging CIFAR-100, $\DGJSnp$ significantly outperform GCE.

\renewcommand{\arraystretch}{1.2}
\begin{table*}[t!]
\tiny 
\caption{\label{tab:cifar-instance} \textbf{Instance-Dependent Synthetic Noise Benchmark on CIFAR.} We \textit{reimplement} the Generalized CE~(GCE) loss function into the \textit{same learning setup} and a ResNet-34 network. We used \textit{same hyperparameter optimization budget and mechanism} for all methods. We evaluate on 35\% noise for the three types of instance-dependent synthetic noise proposed by Zhang~\etal~\cite{zhang2021learning}. Mean test accuracy and standard deviation are reported from five runs and the statistically-significant top performers are boldfaced. As for the symmetric and asymmetric synthetic noise, the efficacy of $\DGJSnp$ is more evident on the more challenging CIFAR-100 dataset, where $\DGJSnp$ significantly outperforms the baselines. \label{tab:cifar-instance-dependent}
}
\begin{center}
\tabcolsep=0.22cm 
\begin{tabular}{ @{}l l c c c c} 
 \toprule
 \multirow{2}{*}{Dataset} & \multirow{2}{*}{Method} & No Noise & \multicolumn{3}{c}{Instance-Dependent Noise} \\ \cmidrule(lr){3-3}\cmidrule(lr){4-6} 
 & & 0\% & Type-I & Type-II & Type-III \\
 \midrule
 \multirow{3}{5em}{CIFAR-10} & CE & 94.35 $\pm$ 0.10 & 83.16 $\pm$ 0.36 & 81.18 $\pm$ 0.38 & 81.80 $\pm$ 0.13 \\
& GCE & 94.00 $\pm$ 0.08 & \textbf{86.50 $\pm$ 0.16} & \textbf{83.80 $\pm$ 0.26} & \textbf{84.85 $\pm$ 0.12} \\
 & GJS & \textbf{94.78 $\pm$ 0.06} & 85.98 $\pm$ 0.12 & \textbf{83.81 $\pm$ 0.12} & \textbf{84.83 $\pm$ 0.26} \\
\midrule
 \multirow{3}{5em}{CIFAR-100} & CE & 77.60 $\pm$ 0.17 & 62.46 $\pm$ 0.31 & 63.51 $\pm$ 0.41 & 62.44 $\pm$ 0.47  \\
 & GCE & 77.65 $\pm$ 0.17 & 65.62 $\pm$ 0.32 & 65.84 $\pm$ 0.35 & 65.85 $\pm$ 0.32 \\
 & GJS & \textbf{79.27 $\pm$ 0.29} & \textbf{68.49 $\pm$ 0.14} & \textbf{69.21 $\pm$ 0.16} & \textbf{69.04 $\pm$ 0.16} \\
\bottomrule
\end{tabular}
\end{center}
\vspace{-0.3cm}
\end{table*}

\subsection{Real-World Noise: ANIMAL-10N \& Food-101N}
\label{sup:real-world-noise}
Here, we evaluate $\DGJSnp$ on two naturally-noisy datasets: ANIMAL-10N~\citep{song2019selfie} and Food-101N~\citep{lee2017cleannet}.

\textbf{Food-101N.} The dataset contains 301k images classified as 101 different food recipes. The images were collected using Google, Bing, Yelp, and TripAdvisor. The noise rate is estimated to be 20\%. 

We follow the same training setup as the recent label correction method called Progressive Label Correction (PLC)~\cite{zhang2021learning}, \ie we use the same network architecture, augmentation strategy, optimizer, batch size, number of epochs, and learning rate scheduling. We use an initial learning rate and weight decay of 0.001, and $\pi_1=0.3$.

\textbf{ANIMAL-10N.} The dataset contains 55k images of 10 classes. The 10 classes can be grouped into 5 pairs of similar classes that are more likely to be confused: (cat, lynx), (jaguar, cheetah), (wolf, coyote), (chimpanzee, orangutan), (hamster, guinea pig). The images were collected using Google and Bing. The noise rate is estimated to be 8\%.

We use the same training setup(network, optimizer, number of epochs, learning rate scheduling, etc) as PLC, but use cropping instead of random horizontal flipping as augmentation to reduce the risk of both augmentations being equal for $\DGJSnp$. We use an initial learning rate of 0.05, a weight decay of 5e-4, and $\pi_1=0.5$.

\textbf{Results.} The mean test accuracy and standard deviation from three runs for ANIMAL-10N and Food-101N are in Table \ref{sup:tab:animal-10N} and \ref{sup:tab:Food-101N}, respectively. The results for all baselines are from Zhang~\etal~\cite{zhang2021learning}. Our $\DGJSnp$ loss outperforms all other methods on both datasets.

\begin{table}[!htb]
    \scriptsize
    \begin{minipage}{.50\linewidth}
      \caption{\textbf{Real-world Noise: ANIMAL-10N.} \label{sup:tab:animal-10N}}
      \centering
        \begin{tabular}{lc}
         \toprule
            Method & Accuracy \\
            \midrule
            CE & 79.4 $\pm$ 0.14 \\
            SELFIE & 81.8 $\pm$ 0.09 \\
            PLC & 83.4 $\pm$ 0.43 \\
            \textbf{GJS} & \textbf{84.2 $\pm$ 0.07} \\ 
        \bottomrule
        \end{tabular}
    \end{minipage}%
    \hfill
    \begin{minipage}{.45\linewidth}
      \centering
        \caption{\textbf{Real-world Noise: Food-101N.}\label{sup:tab:Food-101N}}
        \begin{tabular}{lc}
         \toprule
            Method & Accuracy \\
            \midrule
            CE & 81.67 \\
            CleanNet & 83.95 \\
            PLC & 85.28 $\pm$ 0.04 \\
            \textbf{GJS} & \textbf{86.56 $\pm$ 0.13} \\ 
        \bottomrule
        \end{tabular}
    \end{minipage} 
\end{table}

\subsection{Towards a better understanding of JS}
\label{sup:sec:js-dissection}

\begin{figure}[t!]
    \centering
    \begin{minipage}{0.47\textwidth}
    \centering
        \sisetup{group-four-digits}
        \tiny
        \vspace{-4.6cm}
        \captionof{table}{\textbf{Ablation Study of $\boldsymbol{\DJSnp}$.} 
    A comparison of $\DJSnp$ and other KL-based divergences and their relationship to symmetry and boundedness. The distribution $\vm$ is the mean of $\vp$ and $\vq$.\label{tab:dissection}}
        \begin{tabular}{ p{0.5cm} p{2.985cm} >{\centering}m{0.6cm} >{\centering\arraybackslash}m{0.6cm} }  \toprule
            Method & Formula & Symmetric & Bounded \\ \midrule
            KL & $KL(\vp,\vq)$ & &   \\
            KL' & $KL(\vq,\vp)$ & &   \\
            Jeffrey's & $(KL(\vp,\vq)+KL(\vq,\vp))/2$ & \checkmark &  \\
            K & $KL(\vp,\vm)$ & & \checkmark  \\
            K' & $KL(\vq,\vm)$ & & \checkmark  \\
            JS & $(KL(\vp,\vm)+KL(\vq,\vm))/2$ & \checkmark & \checkmark  \\
            \bottomrule
        \end{tabular}
    \vspace{0.5cm}
    \end{minipage}
    \hfill
    \begin{minipage}{0.51\textwidth}
        \centering
        \begin{subfigure}{1.0\columnwidth}
            \includegraphics[trim={0.3cm 0cm 0.1cm 0.1cm},clip,width=1.0\columnwidth]{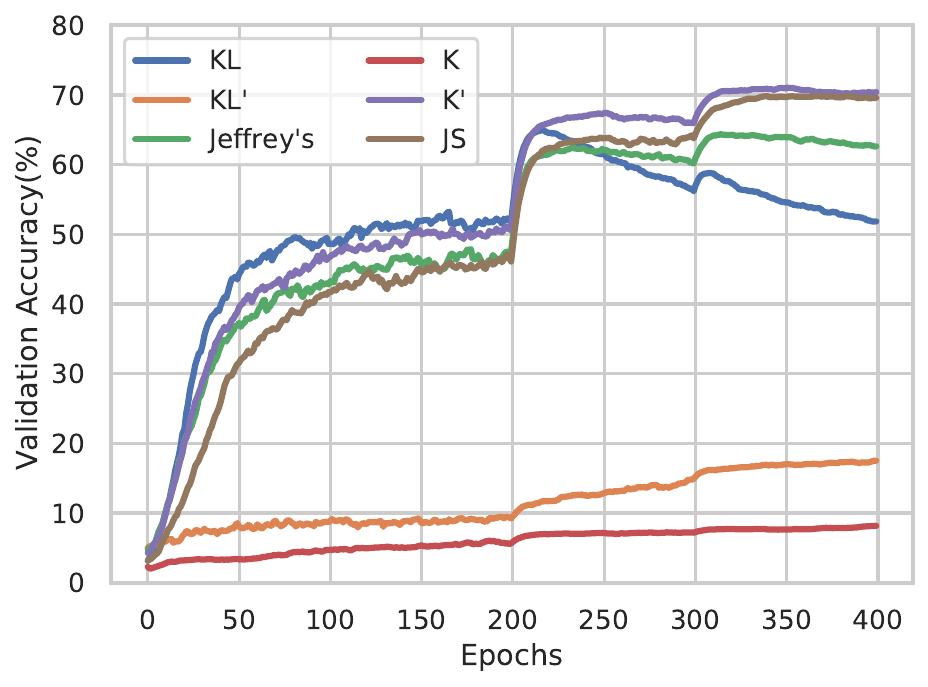}
        \end{subfigure}
        \caption{\textbf{Ablation Study of $\boldsymbol{\DJSnp}$.} Validation accuracy of the divergences in Table \ref{tab:dissection} are plotted during training with $40\%$ symmetric noise on the CIFAR-100 dataset. Notably, the only two losses that show signs of overfitting ($KL$ and Jeffrey's) are unbounded. Interestingly, $K$~(bounded $KL$) makes the learning slower, while $K'$~(bounded $KL'$) considerably improves the learning dynamics. Finally, it can be seen that, $\DJSnp$, in contrast to its unbounded version (Jeffrey's), does not overfit to noise.\label{fig:js-dissect}}
    \end{minipage}
\end{figure}
\label{sec:js-dissection}
In Proposition \ref{prop:GJS-JS-Consistency}, we showed that $\DJSnp$ is an important part of $\DGJSnp$, and therefore deserves attention. Here, we make a systematic \textit{ablation study} to empirically examine the contribution of the difference(s) between $\DJSnp$ loss and CE. We decompose the $\DJSnp$ loss following the gradual construction of the Jensen-Shannon divergence in the work of Lin~\citep{Lin_TIT_1991_JS_Divergence}. This construction, interestingly, lends significant empirical evidence to bounded losses' robustness to noise, in connection to Theorem~\ref{T:sym} and \ref{T:asym} and Proposition~\ref{prop:gjs-bounds}.

Let $KL({\vp},{\vq})$ denote the KL-divergence of a predictive distribution $\vq \in \Delta^{K-1}$ from a target distribution $\vp \in \Delta^{K-1}$.
$KL$ divergence is neither symmetric nor bounded. $K$ divergence, proposed by Lin~\etal~\citep{Lin_TIT_1991_JS_Divergence}, is a bounded version defined as $K(\vp,\vq) \coloneqq KL(\vp,(\vp + \vq)/2)=KL(\vp,\vm)$. However, this divergence is not symmetric. A simple way to achieve symmetry is to take the average of forward and reverse versions of a divergence. For $KL$ and $K$, this gives rise to Jeffrey's divergence and $\DJSnp$ with $\vpi=[\frac{1}{2},\frac{1}{2}]^T$, respectively. Table \ref{tab:dissection} provides an overview of these divergences and Figure \ref{fig:js-dissect} shows their validation accuracy during training on CIFAR-100 with 40\% symmetric noise.

\textbf{Bounded.} Notably, the only two losses that show signs of overfitting ($KL$ and Jeffrey's) are unbounded.
Interestingly, $K$~(bounded $KL$) makes the learning much slower, while $K'$~(bounded $KL'$) considerably improves the learning dynamics. Finally, it can be seen that, $\DJSnp$, in contrast to its unbounded version (Jeffrey's), does not overfit to noise. 

\textbf{Symmetry.} The Jeffrey's divergence performs better than either of its two constituent $KL$ terms. This is not as clear for $\DJSnp$, where $K'$ is performing surprisingly well on its own. In the proof of Proposition \ref{prop:js-ce-mae}, we show that $K'\to$ MAE as $\pi_1\to 1$, while $K$ goes to zero, which could explain why $K'$ seems to be robust to noise. Furthermore, $K'$, which is a component of $\DJSnp$, is reminiscent of label smoothing.

Beside the bound and symmetry, other notable properties of $\DJSnp$ and $\DGJSnp$ are the connections to MAE and consistency losses. Next section investigates the effect of hyperparameters that substantiates the connection to MAE~(Proposition~\ref{prop:JS_limits_CE_MAE}).

\subsection{Comparison between JS and GCE}
\label{sup:js-gce-comp}
We were pleasantly surprised by the finding in Proposition \ref{prop:JS_limits_CE_MAE} that $\DJSnp$ generalizes CE and MAE, similarly to GCE. Here, we highlight differences between $\DJSnp$ and GCE.

\textbf{Theoretical properties.} Our inspiration to study $\DJSnp$ came from the \textit{symmetric} loss function of SCE, and the \textit{bounded} loss of GCE. $\DJSnp$ has \textit{both} properties and a rich history in the field of information theory. This is also one of the reasons we studied these properties in Section \ref{sec:js-dissection}. Finally, $\DJSnp$ generalizes naturally to more than two distributions.

\textbf{Gradients.} The gradients of CE/KL, GCE, $\DJSnp$ and MAE with respect to logit $z_i$ of prediction $\vp=[p_1,p_2,\dots,p_K]$, given a label $\dve{y}$, are of the form $-\frac{\partial p_y}{\partial z_i}g(p_y)$ with $g(p_y)$ being $\frac{1}{p_y}$, $\frac{1}{p_y^{1-q}}$, $(1-\pi_1)\log{\Big(\frac{\pi_1}{(1-\pi_1)p_y} + 1 \Big) }/Z$, and $1$, for each of these losses respectively. Note that, $q$ is the hyperparameter of GCE and $p_y$ denotes the yth component of $\vp$. 

In Figure \ref{fig:js-gce-comparison}, these gradients are compared by varying the hyperparameter of GCE, $q \in [0.1, 0.3, 0.5, 0.7, 0.9]$, and finding the corresponding $\vpi$ for $\DJS$ such that the two gradients are equal at $p_y=\frac{1}{2}$. 

Looking at the behaviour of the different losses at low-$p_y$ regime, intuitively, a high gradient scale for low $p_y$ means a large parameter update for deviating from the given class. This can make noise free learning faster by pushing the probability to the correct class, which is what CE does. However, if the given class is incorrect~(noisy) this can cause overfitting. The gradient scale of MAE induces same update magnitude for $p_y$, which can give the network more freedom to deviate from noisy classes, at the cost of slower learning for the correctly labeled examples. 

Comparing GCE and $\DJS$ in Figure \ref{fig:js-gce-comparison}, it can be seen that $\DJS$ generally penalize lower probability in the given class less than what GCE does. In this sense, $\DJS$ behaves more like MAE.

For a derivation of the gradients of $\JSnp$, see Section \ref{sup:js-grad}.

\textbf{Label distributions.} GCE requires the label distribution to be onehot which makes it harder to incorporate GCE in many of the elaborate state-of-the-art methods that use ``soft labels'' \textit{e.g.}, Mixup, co-training, or knowledge distillation.  
 \begin{figure}[t!] 
 \centering 
 \begin{subfigure}[b]{0.48\textwidth} \centering \includegraphics[width=\textwidth]{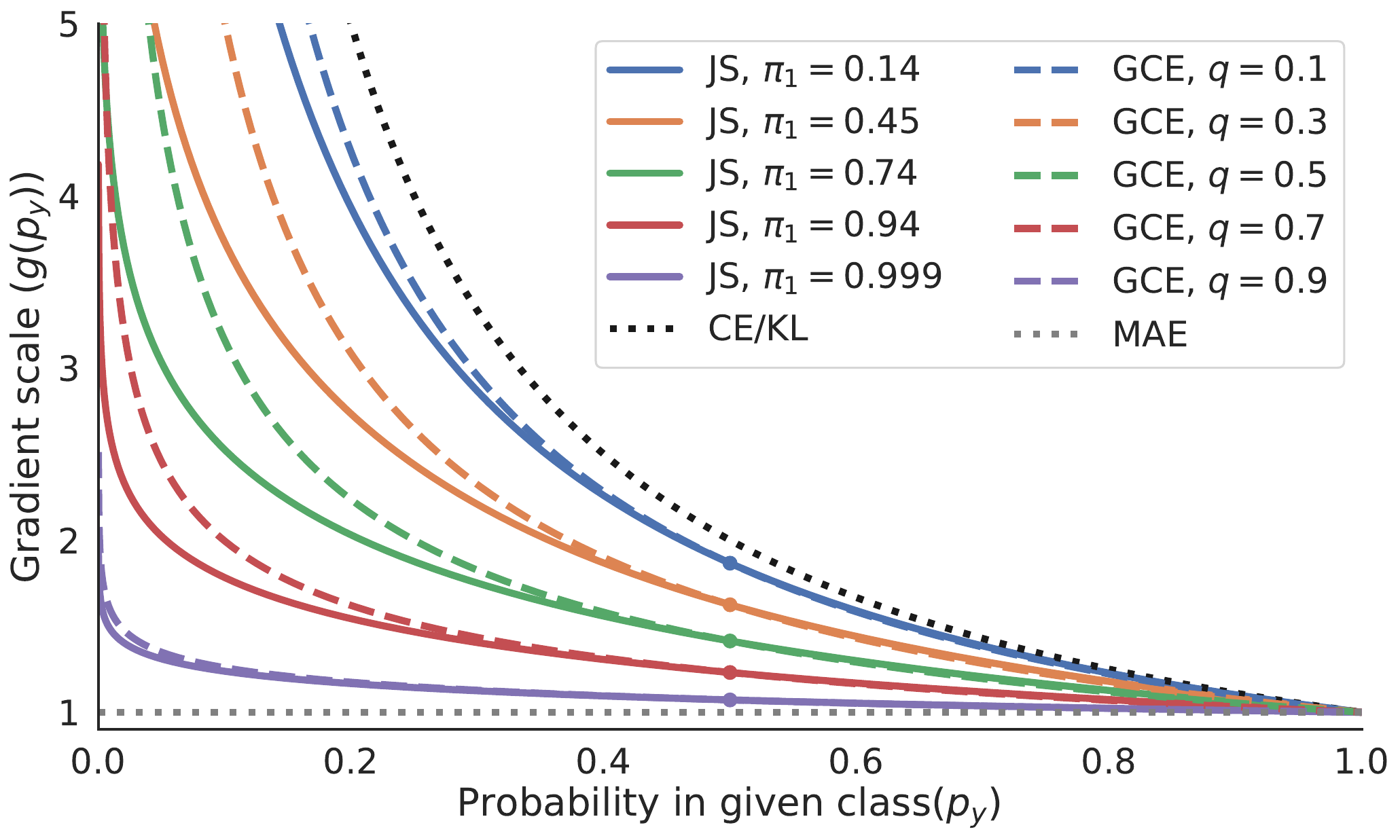} 
 \label{fig:grad-comparison} 
 \end{subfigure} 
 \caption{\textbf{Comparison between JS and GCE.} A comparison of gradients scales between $\DJSnp$ and GCE. For each $q$ of GCE, a corresponding $\pi_1$ of JS is chosen such that the gradient scales are equal at $p_y=\frac{1}{2}$.} \label{fig:js-gce-comparison}
 \vskip -0.1in
 \end{figure}

\subsection{Noisy Validation Set \& Single Set of Parameters}
\label{sup:sec:noisy-single-HPs}
Our systematic procedure to search for hyperparameters~(\ref{sup:cifar-training-details}) is done to have a more conclusive comparison to other methods. The most common procedure in related works is for each dataset, all methods use the same learning rate and weight decay(chosen seemingly arbitrary), and each method uses a single set of method-specific parameters for all noise rates and types. Baselines typically use the same method-specific parameters as reported in their respective papers. First, using the same learning rate and weight decay is problematic when comparing loss functions that have different gradient magnitudes. Second, directly using the parameters reported for the baselines is also problematic since the optimal hyperparameters depend on the training setup, which could be different, e.g., network architecture, augmentation, learning rate schedule, etc. Third, using a fixed method-specific parameter for all noise rates makes the results highly dependent on this choice. Lastly, it is not possible to know if other methods would have performed better if a proper hyperparameter search was done.

Here, for completeness, we use the same setup as in Section \ref{sup:cifar-training-details}, except we use the same learning rate and weight decay for all methods and search for hyperparameters based on a noisy validation set~(more similar to related work). 

The learning rate and weight decay for all methods are chosen based on noisy validation accuracy for CE on 40\% symmetric noise for each dataset. The optimal learning rates and weight decays([lr,wd]) were [0.05, 1e-3] and [0.4, 1e-4] for CIFAR-10 and CIFAR-100, respectively. The method-specific parameters are found by a similar search as in Section \ref{sup:cifar-training-details}, except it is only done for 40\% symmetric noise and the optimal parameters are used for all other noise rates and types. For CIFAR-10, the optimal method-specific hyperparameters were 0.5, 0.5, (0.1,0.1), 0.5, (10, 0.1), 0.5, 0.3 for BS($\beta$), LS($\epsilon$), SCE($\alpha,\beta$), GCE($q$), NCE+RCE($\alpha,\beta$), JS($\pi_1$) and GJS($\pi_1$), respectively. For CIFAR-100, the optimal method-specific hyperparameters were 0.5, 0.7, (0.1, 0.1), 0.5, (20, 0.1), 0.1, 0.5 for BS($\beta$), LS($\epsilon$), SCE($\alpha,\beta$), GCE($q$), NCE+RCE($\alpha,\beta$), JS($\pi_1$) and GJS($\pi_1$), respectively. The results with this setup can be seen in Table \ref{sup:tab:cifar-noisy-single}.

\renewcommand{\arraystretch}{1.2}
\begin{table*}[t!]
\tiny 
\tabcolsep=0.22cm 
\caption{\label{sup:tab:cifar-noisy-single} \textbf{Synthetic Noise Benchmark on CIFAR.} We \textit{reimplement} other noise-robust loss functions into the \textit{same learning setup} and ResNet-34, including label smoothing~(LS), Bootstrap~(BS), Symmetric CE~(SCE), Generalized CE~(GCE), and Normalized CE~(NCE+RCE). We used \textit{same hyperparameter optimization budget and mechanism} for all the prior works and ours. All methods use the same learning rate and weight decay and use the optimal method-specific parameters from a search on 40\% symmetric noise based on noisy validation accuracy. Mean test accuracy and standard deviation are reported from five runs and the statistically-significant top performers are boldfaced. 
}
\begin{center}
\begin{tabular}{ @{}l l c c c c c c c} 
 \toprule
 \multirow{2}{4em}{Dataset} & \multirow{2}{4em}{Method} & No Noise & \multicolumn{4}{c}{Symmetric Noise Rate} & \multicolumn{2}{c}{Asymmetric Noise Rate} \\ \cmidrule(lr){3-3}\cmidrule(lr){4-7} \cmidrule(lr){8-9}
 & & 0\% & 20\% & 40\% & 60\% & 80\% & 20\% & 40\% \\
 \midrule
 \multirow{8}{5em}{CIFAR-10} & CE & \textbf{95.66 $\pm$ 0.18} & 91.47 $\pm$ 0.28 & 87.31 $\pm$ 0.29 & 81.96 $\pm$ 0.38 & 65.28 $\pm$ 0.90 & 92.80 $\pm$ 0.64 & 85.82 $\pm$ 0.42  \\
 & BS & 95.47 $\pm$ 0.11 & 93.65 $\pm$ 0.23 & 90.77 $\pm$ 0.30 & 49.80 $\pm$ 20.64 & 32.91 $\pm$ 5.43 & 93.86 $\pm$ 0.14 & 85.37 $\pm$ 1.07 \\
 & LS & 95.45 $\pm$ 0.15 & 93.52 $\pm$ 0.09 & 89.94 $\pm$ 0.17 & 84.13 $\pm$ 0.80 & 62.76 $\pm$ 2.00 & 92.71 $\pm$ 0.41 & 83.61 $\pm$ 1.21 \\
 & SCE & 94.92 $\pm$ 0.18 & 93.41 $\pm$ 0.20 & 90.99 $\pm$ 0.20 & 86.04 $\pm$ 0.31 & 41.04 $\pm$ 4.56 & 93.26 $\pm$ 0.13 & 84.46 $\pm$ 1.22\\
& GCE & 94.94 $\pm$ 0.09 & 93.79 $\pm$ 0.19 & 91.45 $\pm$ 0.17 & 86.00 $\pm$ 0.20 & 62.01 $\pm$ 2.54 & 93.23 $\pm$ 0.12 & 85.92 $\pm$ 0.61\\
 & NCE+RCE & 94.31 $\pm$ 0.16 & 92.79 $\pm$ 0.16 & 90.31 $\pm$ 0.23 & 84.80 $\pm$ 0.47 & 34.47 $\pm$ 14.66 & 92.99 $\pm$ 0.15 & 87.00 $\pm$ 1.05 \\
  & JS & 94.74 $\pm$ 0.21 & 93.53 $\pm$ 0.23 & 91.57 $\pm$ 0.22 & 86.21 $\pm$ 0.48 & 65.87 $\pm$ 2.92 & 92.97 $\pm$ 0.26 & 86.42 $\pm$ 0.36 \\
 & GJS & \textbf{95.86 $\pm$ 0.10} & \textbf{95.20 $\pm$ 0.11} & \textbf{94.13 $\pm$ 0.19} & \textbf{89.65 $\pm$ 0.26} & \textbf{76.74 $\pm$ 0.75} & \textbf{94.81 $\pm$ 0.10} & \textbf{90.29 $\pm$ 0.26} \\
\midrule
 \multirow{8}{5em}{CIFAR-100} & CE & 77.84 $\pm$ 0.17 & 65.74 $\pm$ 0.06 & 55.57 $\pm$ 0.55 & 44.60 $\pm$ 0.79 & 10.74 $\pm$ 5.11 & 66.61 $\pm$ 0.45                        & 50.42 $\pm$ 0.44 \\
 & BS & 77.63 $\pm$ 0.25 & 73.01 $\pm$ 0.28 & 68.35 $\pm$ 0.43 & 54.07 $\pm$ 1.16 & 2.43 $\pm$ 0.49 & 69.75 $\pm$ 0.35 & 50.61 $\pm$ 0.32 \\
 & LS & 77.60 $\pm$ 0.28 & 74.22 $\pm$ 0.30 & 66.84 $\pm$ 0.28 & 54.09 $\pm$ 0.71     & 21.00 $\pm$ 2.14 & 73.30 $\pm$ 0.42 & \textbf{57.02 $\pm$ 0.57} \\
 & SCE & 77.46 $\pm$ 0.39 & 73.26 $\pm$ 0.29 & 66.96 $\pm$ 0.27 & 54.09 $\pm$ 0.49 & 13.26 $\pm$ 2.31 & 71.22 $\pm$ 0.33 & 49.91 $\pm$ 0.28 \\
 & GCE & 76.70 $\pm$ 0.39 & 74.14 $\pm$ 0.32 & 70.41 $\pm$ 0.40 & 62.14 $\pm$ 0.27 & 12.38 $\pm$ 3.74 & 69.40 $\pm$ 0.30 & 48.54 $\pm$ 0.30 \\
 & NCE+RCE & 73.23 $\pm$ 0.34 & 70.19 $\pm$ 0.27 & 65.61 $\pm$ 0.87 & 50.33 $\pm$ 1.58 & 5.55 $\pm$ 1.67 & 69.47 $\pm$ 0.25 & 56.32 $\pm$ 0.33 \\
 & JS & 77.20 $\pm$ 0.53 & 74.47 $\pm$ 0.25 & 70.12 $\pm$ 0.39 & 61.69 $\pm$ 0.63 & \textbf{27.77 $\pm$ 4.11} & 67.21 $\pm$ 0.37 & 49.39 $\pm$ 0.13 \\
 & GJS & \textbf{78.76 $\pm$ 0.32} & \textbf{77.14 $\pm$ 0.45} & \textbf{74.69 $\pm$ 0.12} & \textbf{64.06 $\pm$ 0.52} & 12.95 $\pm$ 2.40 & \textbf{74.44 $\pm$ 0.49} & 52.34 $\pm$ 0.81 \\
 \bottomrule
\end{tabular}
\end{center}
\end{table*}

\subsection{Consistency Measure}
\label{sup:sec:consistencyMeasure}
 In this section, we provide more details about the consistency measure used in Figure \ref{fig:consistency-observation}. To be independent of any particular loss function, we considered a measure similar to standard Top-1 accuracy. We measure the ratio of samples that predict the same class on both the original image and an augmented version of it
 \begin{align}
     \frac{1}{N}\sum_{i=1}^N \mathbbm{1}\big(\argmax_y f(\vx_i) = \argmax_y f(\tilde{\vx}_i)\big)
 \end{align}
where the sum is over all the training examples, and $\mathbbm{1}$ is the indicator function, the argmax is over the predicted probability of $K$ classes, and $\tilde{\vx}_i \sim \mathcal{A}(\vx_i)$ is an augmented version of $\vx_i$. Notably, this measure does not depend on the labels. 
 
 In the experiment in Figure~\ref{fig:consistency-observation}, the original images are only normalized, while the augmented images use the same augmentation strategy as the benchmark experiments, see Section \ref{sup:cifar-training-details}.

\subsection{Consistency of Trained Networks on CIFAR}
\label{sup:sec:cifar-con}
In Table \ref{sup:tab:cifar-con-acc}, we report the training consistency of the networks used for the main CIFAR results in Table \ref{tab:cifar}. We use the same consistency measure (Section \ref{sup:sec:consistencyMeasure}) as was used in Figure \ref{fig:consistency-observation} and Figure \ref{fig:consistency-observation-multiLosses}. When learning with noisy labels, the networks trained with $\DGJSnp$ is significantly more consistent than all the other methods. This is directly in line with Proposition \ref{prop:GJS-JS-Consistency}, that shows how $\LGJS$ encourages consistency. 

In Table \ref{tab:cifar}, we noticed better performance for CE compared to reported results in related work, which we mainly attribute to our thorough hyperparameter search. In Table \ref{sup:tab:cifar-con-acc}, we observe better consistency for CE than in Figure \ref{fig:consistency-observation}, which we believe is for the same reason. Compared to Figure \ref{fig:consistency-observation}, the networks trained with the CE loss in Table \ref{sup:tab:cifar-con-acc} use a higher learning rate and weight decay, both of which have a regularizing effect, which could help against overfitting to noise.

\renewcommand{\arraystretch}{1.2}
\begin{table*}[t!]
\tiny 
\tabcolsep=0.22cm 
\caption{\label{sup:tab:cifar-con-acc} \textbf{Consistency of Trained Networks on CIFAR.} The training consistency of the networks from Table \ref{tab:cifar}. Mean train consistency and standard deviation are reported from five runs and the networks with significantly higher consistency are boldfaced. As observed in Figure \ref{fig:consistency-observation}, the consistency is reduced for all methods for increasing noise rates. When learning with noisy labels, the networks trained with $\DGJSnp$ are the most consistent for all noise rates and datasets.
}

\begin{center}
\begin{tabular}{ @{}l l c c c c c c c} 
 \toprule
 \multirow{2}{4em}{Dataset} & \multirow{2}{4em}{Method} & No Noise & \multicolumn{4}{c}{Symmetric Noise Rate} & \multicolumn{2}{c}{Asymmetric Noise Rate} \\ \cmidrule(lr){3-3}\cmidrule(lr){4-7} \cmidrule(lr){8-9}
 & & 0\% & 20\% & 40\% & 60\% & 80\% & 20\% & 40\% \\
 \midrule
 \multirow{8}{5em}{CIFAR-10} & CE & 94.35 $\pm$ 0.10 & 88.17 $\pm$ 0.19 & 82.66 $\pm$ 0.37 & 75.75 $\pm$ 0.29 & 64.28 $\pm$ 1.15 & 89.28 $\pm$ 0.20 & 85.26 $\pm$ 0.67  \\
 & BS & 91.18 $\pm$ 0.22 & 86.50 $\pm$ 0.24 & 82.90 $\pm$ 0.31 & 75.59 $\pm$ 0.51 & \textbf{70.68 $\pm$ 24.17} & 89.27 $\pm$ 0.12 & 85.77 $\pm$ 0.72  \\
 & LS & 94.22 $\pm$ 0.12 & 90.20 $\pm$ 0.18 & 84.42 $\pm$ 0.06 & 77.29 $\pm$ 0.17 & 62.16 $\pm$ 2.07 & 89.31 $\pm$ 0.22 & 85.76 $\pm$ 0.49  \\
 & SCE & 94.65 $\pm$ 0.18 & 91.11 $\pm$ 0.12 & 88.98 $\pm$ 0.14 & 84.70 $\pm$ 0.20 & 75.73 $\pm$ 0.20 & 90.16 $\pm$ 0.19 & 83.69 $\pm$ 0.36  \\
& GCE & 94.00 $\pm$ 0.08 & 91.12 $\pm$ 0.07 & 89.00 $\pm$ 0.15 & 84.58 $\pm$ 0.17 & 75.86 $\pm$ 0.41 & 89.07 $\pm$ 0.27 & 84.88 $\pm$ 0.51  \\
 & NCE+RCE & 92.99 $\pm$ 0.16 & 91.15 $\pm$ 0.17 & 88.00 $\pm$ 0.15 & 82.01 $\pm$ 0.33 & 73.24 $\pm$ 0.69 & 91.09 $\pm$ 0.10 & 85.27 $\pm$ 0.37  \\
  & JS & \textbf{94.95 $\pm$ 0.06} & 91.46 $\pm$ 0.10 & 89.31 $\pm$ 0.09 & 84.77 $\pm$ 0.11 & 70.57 $\pm$ 0.68 & 87.47 $\pm$ 0.07 & 84.26 $\pm$ 0.21  \\
 & GJS & 94.78 $\pm$ 0.06 & \textbf{94.24 $\pm$ 0.12} & \textbf{91.21 $\pm$ 0.05} & \textbf{90.36 $\pm$ 0.08} & \textbf{78.42 $\pm$ 0.29} & \textbf{91.88 $\pm$ 0.17} & \textbf{89.08 $\pm$ 0.36}  \\
\midrule
 \multirow{8}{5em}{CIFAR-100} & CE & 86.24 $\pm$ 0.49 & 71.33 $\pm$ 0.27 & 59.45 $\pm$ 0.51 & 46.67 $\pm$ 0.71 & 33.07 $\pm$ 1.96 & 78.26 $\pm$ 0.10 & 71.94 $\pm$ 0.30  \\
 & BS & 86.04 $\pm$ 0.32 & 77.59 $\pm$ 0.54 & 70.70 $\pm$ 0.50 & 65.44 $\pm$ 1.60 & 33.78 $\pm$ 1.77 & 76.45 $\pm$ 0.57 & 72.54 $\pm$ 0.74  \\
 & LS & 88.40 $\pm$ 0.07 & 80.83 $\pm$ 0.11 & 73.18 $\pm$ 0.09 & 59.11 $\pm$ 0.10 & 36.69 $\pm$ 0.39 & 78.78 $\pm$ 0.49 & 67.76 $\pm$ 0.37  \\
 & SCE & 85.72 $\pm$ 0.11 & 79.60 $\pm$ 0.20 & 71.50 $\pm$ 0.24 & 61.63 $\pm$ 0.80 & 39.98 $\pm$ 1.08 & 75.40 $\pm$ 0.70 &  63.66 $\pm$ 0.33 \\
 & GCE & 85.63 $\pm$ 0.19 & 82.22 $\pm$ 0.14 & 77.69 $\pm$ 0.13 & 68.00 $\pm$ 0.25 & 53.28 $\pm$ 0.83 & 76.32 $\pm$ 0.21 & 64.77 $\pm$ 0.43  \\
 & NCE+RCE & 78.14 $\pm$ 0.16 & 75.04 $\pm$ 0.19 & 70.59 $\pm$ 0.29 & 63.60 $\pm$ 0.41 & 43.63 $\pm$ 2.00 & 74.07 $\pm$ 0.31 & 64.47 $\pm$ 0.30  \\
 & JS & 85.99 $\pm$ 0.24 & 82.58 $\pm$ 0.28 & 75.92 $\pm$ 0.38 & 66.80 $\pm$ 0.58 & 48.09 $\pm$ 1.14 & 78.25 $\pm$ 0.14 & 66.94 $\pm$ 0.46  \\
 & GJS & \textbf{89.54 $\pm$ 0.10} & \textbf{87.73 $\pm$ 0.13} & \textbf{85.67 $\pm$ 0.15} & \textbf{79.09 $\pm$ 0.19} & \textbf{59.74 $\pm$ 0.70} & \textbf{84.52 $\pm$ 0.13} & \textbf{74.98 $\pm$ 0.25}  \\
 \bottomrule
\end{tabular}
\end{center}
\end{table*}

\section{Proofs}
\label{sup:proofs}

\subsection{JS's Connection to CE and MAE}
\propJsCeMae*
\begin{proof}[Proof of Proposition \ref{prop:JS_limits_CE_MAE}]
We want to show
\begin{align}
     &\lim_{\pi_1 \rightarrow 0} \LJS(\dve{y}, \vp) = \lim_{\pi_1 \rightarrow 0} \frac{\DJS(\dve{y}, \vp)}{H(1-\pi_1)} = H(\dve{y}, \vp) & \\
     &\lim_{\pi_1 \rightarrow 1} \LJS(\dve{y}, \vp) = \lim_{\pi_1 \rightarrow 1} \frac{\DJS(\dve{y}, \vp)}{H(1-\pi_1)} = \frac{1}{2}\Vert\dve{y}-\vp\Vert_1 &
\end{align}
More specifically, we have $\DJS(\dve{y}, \vp) = \evpi_1\KL(\dve{y} \Vert \vm) + \evpi_2\KL(\vp \Vert \vm)$, where $\vm = \pi_1\dve{y} + \pi_2\vp$, and
\begin{align}
     &\lim_{\pi_1 \rightarrow 0} \frac{\pi_1\DKL{\dve{y}}{\vm}}{H(1-\pi_1)} = H(\dve{y}, \vp) & \label{eq:js-to-ce} \\
     &\lim_{\pi_1 \rightarrow 1} \frac{\pi_2\DKL{\vp}{\vm}}{H(1-\pi_1)} = \frac{1}{2}\Vert\dve{y}-\vp\Vert_1 & \label{eq:js-to-mae}
\end{align}

First, the we prove Equations \ref{eq:js-to-ce} and \ref{eq:js-to-mae}, then show that the other two limits are zero. 

Proof of Equation \ref{eq:js-to-ce}.
\begin{flalign}
    \lim_{\pi_1 \rightarrow 0}\frac{\pi_1\DKL{\dve{y}}{\vm}}{H(1-\pi_1)} &= \lim_{\pi_1 \rightarrow 0}\frac{-\pi_1\log{(m_y)}}{-(1-\pi_1)\log{(1-\pi_1)}} & \\
    &= \lim_{\pi_1 \rightarrow 0}\log{(m_y)}\frac{1}{1-\pi_1}\frac{\pi_1}{\log{(1-\pi_1)}} &  \\
    &= \lim_{\pi_1 \rightarrow 0}\log{(m_y)}\frac{1}{1-\pi_1} \cdot - (1-\pi_1) &  \\
    &= \log{p_y} \cdot 1 \cdot -1 = H(\dve{y}, \dvp{2}) &
\end{flalign}
where we used L'H\^{o}pital's rule for $\lim_{\pi_1 \rightarrow 0}\frac{\pi_1}{\log{(1-\pi_1)}}$ which is indeterminate of the form $\frac{0}{0}$.

Proof of Equation \ref{eq:js-to-mae}.
Before taking the limit, we first rewrite the equation
\begin{flalign}
 \frac{\pi_2\DKL{\vp}{\vm}}{H(1-\pi_1)} &= -\frac{1}{\log{(1-\pi_1)}} \sum_{k=1}^K p_k\log{\frac{p_k}{m_k}}  &\\
&= -\frac{1}{\log{(1-\pi_1)}} \Big[p_y\log{\frac{p_y}{m_y}} + \sum_{k\not=y}^K  p_k\log{\frac{p_k}{(1-\pi_1)p_k}}\Big]  &\\
&= -\frac{1}{\log{(1-\pi_1)}} \Big[p_y\log{\frac{p_y}{m_y}} - \log{(1-\pi_1)}\sum_{k\not=y}^K  p_k\Big]  &\\
&= -\frac{1}{\log{(1-\pi_1)}} \Big[p_y\log{\frac{p_y}{m_y}} - \log{(1-\pi_1)}(1-p_y)\Big]  &\\
&=  -p_y\log{\frac{p_y}{m_y}}\frac{1}{\log{(1-\pi_1)}} + 1-p_y   & \label{eq:2ndTermSimp}
\end{flalign}
Now, we take the limit
\begin{flalign}
\lim_{\pi_1 \rightarrow 1} \frac{\pi_2\DKL{\vp}{\vm}}{H(1-\pi_1)} &= \lim_{\pi_1 \rightarrow 1}  -p_y\log{\frac{p_y}{m_y}}\frac{1}{\log{(1-\pi_1)}} + 1-p_y & \\
&= 0\cdot 0 + 1-p_y & \\
&= \frac{1}{2}(1-p_y + 1-p_y) \\
&= \frac{1}{2}(1-p_y + \sum_{k\not=y}^K p_k) \\
&= \frac{1}{2}\sum_{k=1}^K \left| \edve{y}{k} -  p_k \right| = \frac{1}{2}\Vert\dve{y}-\vp\Vert_1
\end{flalign}

What is left to show is that the last two terms goes to zero in their respective limits. 
\begin{flalign}
    \lim_{\pi_1 \rightarrow 1}\frac{\pi_1\DKL{\dve{y}}{\vm}}{H(1-\pi_1)} &= \lim_{\pi_1 \rightarrow 1}\frac{-\pi_1\log{(m_y)}}{-(1-\pi_1)\log{(1-\pi_1)}} & \\
 &= \lim_{\pi_1 \rightarrow 1}\frac{-\pi_1\log{(\pi_1 + (1-\pi_1)p_y)}}{-(1-\pi_1)\log{(1-\pi_1)}} & \\
  &= \lim_{\pi_1 \rightarrow 1}\frac{\pi_1}{\log{(1-\pi_1)}} \frac{\log{(\pi_1 + (1-\pi_1)p_y)}}{1-\pi_1} & \\
 &= 0 \cdot (p_y-1) = 0
\end{flalign}

Finally, the last term. Starting from Equation~\ref{eq:2ndTermSimp}, we get
\begin{flalign}
\lim_{\pi_1 \rightarrow 0} \frac{\pi_2\DKL{\dvp{2}}{\vm}}{H(1-\pi_1)} &= \lim_{\pi_1 \rightarrow 0} -p_y\frac{\log{\frac{p_y}{m_y}}}{\log{(1-\pi_1)}} + 1-p_y \\
&= \lim_{\pi_1 \rightarrow 0}-p_y\Big(-\frac{1 - p_y}{\pi_1 + (1-\pi_1)p_y} \cdot -(1-\pi_1) \Big)  + 1-p_y \\
&= \lim_{\pi_1 \rightarrow 0}-p_y\Big(\frac{(1 - p_y)(1-\pi_1) }{\pi_1 + (1-\pi_1)p_y} \Big)  + 1-p_y \\
&= \lim_{\pi_1 \rightarrow 0}p_y\Big(\frac{-1 +\pi_1 + (1-\pi_1)p_y  }{\pi_1 + (1-\pi_1)p_y} \Big)  + 1-p_y \\
&= \lim_{\pi_1 \rightarrow 0}p_y\Big(\frac{-1}{\pi_1 + (1-\pi_1)p_y} + 1 \Big)  + 1-p_y \\
&= - 1+p_y + 1-p_y = 0  & 
\end{flalign}
where L'H\^{o}pital's rule was used for $\lim_{\pi_1 \rightarrow 0} -p_y\frac{\log{\frac{p_y}{m_y}}}{\log{(1-\pi_1)}}$ which is indeterminate of the form $\frac{0}{0}$.
\end{proof}

\subsection{GJS's Connection to Consistency Regularization}
\label{sup:gjs-consistency}
\propGjsJsConsistency*
\begin{proof}[Proof of Proposition \ref{prop:GJS-JS-Consistency}]
The Generalized Jensen-Shannon divergence can be simplified as below
\begin{flalign}
    &\DGJS(\dve{y}, \dvp{2},\dots,\dvp{M}) = H(\pi_1\dve{y} + (1-\pi_1)\dvm) - \sum_{j=2}^M\pi_j H(\dvp{j}) \\
    &= H(\pi_1 + (1-\pi_1)m_y) + \sum_{i\not=y}^K H((1-\pi_1)m_{i}) - \sum_{j=2}^M\pi_j H(\dvp{j})  \\
    &= / H(\pi_2p_i) = p_iH(\pi_2) + \pi_2H(p_i) / \label{eq:entropyProd} \\
    &= H(\pi_1 + (1-\pi_1)m_y) + \sum_{i\not=y}^K [m_iH(1-\pi_1) + (1-\pi_1)H(m_i)] - \sum_{j=2}^M\pi_j H(\dvp{j})  \\
    &= H(\pi_1 + (1-\pi_1)m_y) + \sum_{i\not=y}^K [m_iH(1-\pi_1)] - (1-\pi_1)H(m_y) \\
    &+ (1-\pi_1)\Big( H(\dvm)  - \frac{1}{1-\pi_1}\sum_{j=2}^M\pi_j H(\dvp{j})\Big)  \\
    &= H(\pi_1 + (1-\pi_1)m_y) + \sum_{i\not=y}^K [m_iH(1-\pi_1) + (1-\pi_1)(H(m_i) - H(m_i))]  \\
    &- (1-\pi_1)H(m_y) + (1-\pi_1)D_{\mathrm{GJS}_{\vpi''}}(\dvp{2},\dots,\dvp{M})  \\
    &= / \text{ Equation \ref{eq:entropyProd}} / \\
    &= H(\pi_1 + (1-\pi_1)m_y) + \sum_{i\not=y}^K H((1-\pi_1)m_i) - (1-\pi_1)H(\dvm) \\
    &+ (1-\pi_1)D_{\mathrm{GJS}_{\vpi''}}(\dvp{2},\dots,\dvp{M})  \\
    &= H(\pi_1\dve{y} + (1-\pi_1)\dvm) - (1-\pi_1)H(\dvm) + (1-\pi_1)D_{\mathrm{GJS}_{\vpi''}}(\dvp{2},\dots,\dvp{M})  \\
    &= D_{\mathrm{JS}_{\vpi'}}(\dve{y},\dvm) + (1-\pi_1)D_{\mathrm{GJS}_{\vpi''}}(\dvp{2},\dots,\dvp{M})  
\end{flalign}
where $\vpi'=[\pi_1, 1-\pi_1]$ and $\vpi''=[\pi_2, \dots, \pi_M]/(1-\pi_1)$.
\end{proof}
That is, when using onehot labels, the generalized Jensen-Shannon divergence is a combination of two terms, one term encourages the mean prediction to be similar to the label and another term that encourages consistency between the predictions. For $M=2$, the consistency term is zero.

\subsection{Noise Robustness}
\label{sup:noisetheorems}

The proofs of the theorems in this sections are generalizations of the proofs in by Zhang~\etal~\citep{Zhang_NeurIPS_2018_Generalized_CE}. The original theorems are specific to their particular GCE loss and cannot directly be used for other loss functions. We generalize the theorems to be useful for any loss function satisfying certain conditions(bounded and conditions in Lemma \ref{L:asymConditions}). To be able to use the theorems for $\DGJSnp$, we also generalize them to work for more than a single predictive distribution.
Here, we use $(\vx,y)$ to denote a sample from $\mathcal{D}$ and $(\vx,\tilde{y})$ to denote a sample from $\mathcal{D}_\eta$. Let $\eta_{ij}$ denote the probability that a sample of class $i$ was changed to class $j$ due to noise. 

\subsubsection{Symmetric Noise}
\label{sup:sec:sym-proof}
\tSym*
\begin{proof}[Proof of Theorem \ref{T:sym}]
For any function, $f$, mapping an input $\vx \in \sX$ to $\Delta^{K-1}$, we have 
\begin{align*}
    R_{\mathcal{L}}(f) = \mathbb{E}_{\mathcal{D}}[\mathcal{L}(\dve{y}, \vx, f)] = \mathbb{E}_{\vx,y}[\mathcal{L}(\dve{y}, \vx, f)]
\end{align*}
and for uniform noise with noise rate $\eta$, the probability of a class not changing label due to noise is $\eta_{ii}=1-\eta$, while the probability of changing from one class to any other is $\eta_{ij}= \frac{\eta}{K-1}$. Therefore,
\begin{align*}
R^{\eta}_{\mathcal{L}}(f) &= \mathbb{E}_{\mathcal{D}_\eta}[\mathcal{L}(\dve{\tilde{y}},\vx, f)] = \mathbb{E}_{\vx,\tilde{y} }[\mathcal{L}(\dve{\tilde{y}}, \vx, f)] \\
&= \mathbb{E}_{\vx}\mathbb{E}_{y|\vx}\mathbb{E}_{\tilde{y} |y,\vx}[\mathcal{L}(\dve{\tilde{y}}, \vx, f)] \\
&= \mathbb{E}_{\vx}\mathbb{E}_{y|\vx}[(1 - \eta)\mathcal{L}(\dve{y}, \vx, f) + \frac{\eta}{K-1}\sum_{i\not=y}^K\mathcal{L}(\dve{i}, \vx, f)] \\
&= \mathbb{E}_{\vx}\mathbb{E}_{y|\vx}\Bigg[(1 - \eta)\mathcal{L}(\dve{y}, \vx, f) + \frac{\eta}{K-1}\Big(\sum_{i=1}^K\mathcal{L}(\dve{i}, \vx, f) - \mathcal{L}(\dve{y}, \vx, f)\Big)\Bigg] \\
&= \Big(1 - \eta - \frac{\eta}{K-1}\Big)R_{\mathcal{L}}(f) + \frac{\eta}{K-1}\mathbb{E}_{\vx}\mathbb{E}_{y|\vx}\Bigg[\sum_{i=1}^K\mathcal{L}(\dve{i}, \vx, f)\Bigg] \\
&= \Big(1 - \frac{\eta K}{K-1}\Big)R_{\mathcal{L}}(f) + \frac{\eta}{K-1}\mathbb{E}_{\vx}\mathbb{E}_{y|\vx}\Bigg[\sum_{i=1}^K\mathcal{L}(\dve{i}, \vx, f)\Bigg]
\end{align*}
Using the bounds $B_L \leq \sum_{k=1}^K \mathcal{L}(\dve{k}, \vx, f) \leq B_U$, we get:
\begin{align*}
\Big(1 - \frac{\eta K}{K-1}\Big)R_{\mathcal{L}}(f) + \frac{\eta B_L}{K-1} \leq R^{\eta}_{\mathcal{L}}(f) \leq \Big(1 - \frac{\eta K}{K-1}\Big)R_{\mathcal{L}}(f) + \frac{\eta B_U}{K-1}
\end{align*}
With these bounds, the difference between $R^{\eta}_{\mathcal{L}}(f^*)$ and $R^{\eta}_{\mathcal{L}}(f^*_\eta)$ can be bounded as follows
\begin{align*}
    R^{\eta}_{\mathcal{L}}(f^*) - R^{\eta}_{\mathcal{L}}(f^*_\eta) &\leq \Big(1 - \frac{\eta K}{K-1}\Big)R_{\mathcal{L}}(f^*) + \frac{\eta B_U}{K-1} - \Bigg(\Big(1 - \frac{\eta K}{K-1}\Big)R_{\mathcal{L}}(f^*_\eta) + \frac{\eta B_L}{K-1}\Bigg) = \\
    &= \Big(1 - \frac{\eta K}{K-1}\Big)(R_{\mathcal{L}}(f^*) - R_{\mathcal{L}}(f^*_\eta)) + \frac{\eta (B_U-B_L)}{K-1} \leq \frac{\eta (B_U-B_L)}{K-1}
\end{align*}
where the last inequality follows from the assumption on the noise rate, $(1 - \frac{\eta K}{K-1}) > 0$,  and that $f^*$ is the minimizer of $R_{\mathcal{L}}(f)$ so $R_{\mathcal{L}}(f^*) - R_{\mathcal{L}}(f^*_\eta) \leq 0$. Similarly, since $f^*_\eta$ is the minimizer of $R^\eta_{\mathcal{L}}(f)$, we have $R^\eta_{\mathcal{L}}(f^*) - R^\eta_{\mathcal{L}}(f^*_\eta) \geq 0$, which is the lower bound.
\end{proof}

\subsubsection{Asymmetric Noise}
\label{sup:sec:asym-proof}
\begin{lemma}
\label{L:asymConditions}
Consider the following conditions for a loss with label $\dve{i}$, for any $i \in \{1, 2, \dots, K\}$ and M-1 distributions $\dvp{2}, \dots, \dvp{M} \in \Delta^{K-1}$:
\begin{align*}
&\textit{i) } \mathcal{L}(\dve{i}, \dvp{2}, \dots, \dvp{M})=0 \iff \dvp{2}, \dots, \dvp{M}=\dve{i}, \\
&\textit{ii) } 0 \leq \mathcal{L}(\dve{i}, \dvp{2}, \dots, \dvp{M}) \leq C_1, \\
&\textit{iii) } \mathcal{L}(\dve{i}, \dve{j}, \dots, \dve{j}) = C_2 \leq C_1, \text{ with } i\not=j.
\end{align*}
where $C_1, C_2$ are constants. 
\end{lemma}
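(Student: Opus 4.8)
The statement packages the three conditions that the paper's generalized Jensen--Shannon loss must meet in order to feed into the asymmetric-noise analysis, so the plan is to verify that $\LGJS$ of Equation~\ref{eq:gjs-loss} --- regarded as a function of the $M-1$ predictions $\dvp{2},\dots,\dvp{M}$ with the first slot fixed to the one-hot label $\dve{i}$ --- satisfies i)--iii), and to read off the constants $C_1,C_2$ in the process. Since $\LGJS=\GJSnp/Z$ with $Z=-(1-\pi_1)\log(1-\pi_1)>0$, every claim reduces to the corresponding claim about the unscaled $\GJSnp$ up to the positive factor $Z$. For condition i) I would invoke the non-negativity and equality case established in Section~\ref{sec:gjs-def}: $\GJSnp\ge 0$ with equality iff all its arguments coincide (Jensen's inequality for the concave entropy $H$). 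Fixing the first argument to $\dve{i}$ turns this into $\dvp{2}=\dots=\dvp{M}=\dve{i}$, and dividing by $Z>0$ preserves both the zero value and the equivalence.

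For condition ii), the lower bound $0$ is the same non-negativity. For the upper bound I would use the standard ceiling $\GJSnp(\dvp{1},\dots,\dvp{M})\le H(\vpi)$, where $H(\vpi)=-\sum_{j=1}^M \pi_j\log\pi_j$ is the entropy of the mixing weights; the cleanest derivation reads $\GJSnp$ as the mutual information $I(X;W)$ of the two-stage draw ($W\sim\vpi$, then $X\mid W=j\sim\dvp{j}$) and bounds it by $H(W)=H(\vpi)$. This gives the finite constant $C_1=H(\vpi)/Z$, independent of $i$, so that $0\le\LGJS\le C_1$ uniformly.

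For condition iii) I would evaluate the loss at $\dvp{2}=\dots=\dvp{M}=\dve{j}$ with $j\neq i$. The mixture collapses to the two-atom distribution $\vm=\pi_1\dve{i}+(1-\pi_1)\dve{j}$, so $H(\vm)=-\pi_1\log\pi_1-(1-\pi_1)\log(1-\pi_1)$, while each $H(\dvp{j})=0$ because the arguments are one-hot; hence $\GJSnp=-\pi_1\log\pi_1-(1-\pi_1)\log(1-\pi_1)$, which depends on neither $i$ nor the chosen $j$, giving the constant $C_2=\big(-\pi_1\log\pi_1-(1-\pi_1)\log(1-\pi_1)\big)/Z$. To close with $C_2\le C_1$ I would use the grouping identity $H(\vpi)=\big(-\pi_1\log\pi_1-(1-\pi_1)\log(1-\pi_1)\big)+(1-\pi_1)\log(M-1)$, valid because $\pi_2=\dots=\pi_M=\tfrac{1-\pi_1}{M-1}$; its second term is non-negative, so $H(\vpi)\ge -\pi_1\log\pi_1-(1-\pi_1)\log(1-\pi_1)$ and therefore $C_1\ge C_2$.

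I expect i) and iii) to be routine --- i) is a direct reading of the stated Jensen equality case, and iii) is a one-line entropy computation that simplifies precisely because one-hot vectors have zero entropy. The main obstacle is the upper bound in ii): the subtlety is producing a \emph{single} finite $C_1$ that dominates $\GJSnp$ over \emph{all} admissible tuples $\dvp{2},\dots,\dvp{M}\in\Delta^{K-1}$, which is exactly where the mutual-information reading $\GJSnp=I(X;W)\le H(\vpi)$ does the real work (an alternative is an extreme-value argument using continuity of $\GJSnp$ on the compact product of simplices). I would also note, for consistency with Proposition~\ref{prop:gjs-bounds}, that when $M\le K+1$ the value $H(\vpi)$ is attained by mutually disjoint-support one-hots $\dve{1},\dots,\dve{M-1}$, so $C_1$ is not merely a bound but the genuine maximum.
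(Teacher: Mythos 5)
Your proposal is correct and, for conditions i) and iii) and the lower bound of ii), it coincides with the paper's argument (the paper proves these claims in its Remark~\ref{r:js-asymConditions}, which instantiates the lemma's conditions for $\GJSnp$ with $C_1=H(\vpi)$ and $C_2=H(\pi_1)+H(\sum_{l=2}^M\pi_l)$, exactly the constants you obtain up to the harmless division by $Z$). The one place you take a genuinely different route is the upper bound $\GJSnp\le H(\vpi)$ in ii): the paper derives it by writing $\GJSnp=\sum_j\pi_j\KL(\dvp{j}\Vert\vm)$ and algebraically splitting each KL term into $-\pi_j\log\pi_j$ plus a remainder shown to be non-positive, whereas you invoke the mutual-information reading $\GJSnp=I(X;W)\le H(W)=H(\vpi)$. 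Both are valid; your version is shorter and more conceptual, while the paper's computation has the side benefit of exhibiting the exact slack (hence the equality condition, namely pairwise disjoint supports). You also add two things the paper leaves implicit: an explicit verification of $C_2\le C_1$ via the grouping identity for $H(\vpi)$ (correct, and in fact it holds for general $\vpi$, not only the uniform choice $\pi_2=\dots=\pi_M$), and the attainment remark. Only the latter needs a small correction: with the label fixed to $\dve{i}$, attaining $H(\vpi)$ requires all $M$ distributions --- the label \emph{and} the $M-1$ predictions --- to have disjoint supports, so the genuine maximum is reached only when $M\le K$; the condition $M\le K+1$ in Proposition~\ref{prop:gjs-bounds} pertains to the sum over all $K$ labels, where the label may coincide with one prediction. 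This does not affect the validity of $C_1$ as an upper bound.
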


\begin{theorem}
\label{T:asym}
 Let $\mathcal{L}$ be any loss function satisfying the conditions in Lemma \ref{L:asymConditions}. Under class dependent noise, when the probability of the noise not changing label is larger than changing it to any other class($\eta_{yi} < \eta_{yy}$, for all $i \not= y$, with $y$ being the true label), and if $R^\eta_{\mathcal{L}}(f^*)=0$, then
\begin{align}
    0 \leq R^\eta_{\mathcal{L}}(f^*) - R^\eta_{\mathcal{L}}(f^*_\eta) \leq (B_U-B_L)\mathbb{E}_\mathcal{D}[\eta_{yy}] + (C_1-C_2)\mathbb{E}_\mathcal{D}[\sum_{i\not=y}^K(\eta_{yy} - \eta_{yi})],
\end{align}
where $B_L \leq \sum_{i=1}^K \mathcal{L}(\dve{i}, \vx, f) \leq B_U$  for all $\vx$ and $f$, $f^*$ is the global minimizer of $R_{\mathcal{L}}(f)$, and $f^*_\eta$ is the global minimizer of $R^\eta_{\mathcal{L}}(f)$.
\end{theorem}

\begin{proof}[Proof of Theorem \ref{T:asym}]
For class dependent noisy(asymmetric) and any function, $f$, mapping an input $\vx \in \sX$ to $\Delta^{K-1}$, we have 
\begin{align*}
R^\eta_{\mathcal{L}}(f) &= \mathbb{E}_\mathcal{D}[\eta_{yy}\mathcal{L}(\dve{y}, \vx, f)] + \mathbb{E}_\mathcal{D}[\sum_{i\not=y}^K\eta_{yi}\mathcal{L}(\dve{i}, \vx, f)] \\
&= \mathbb{E}_\mathcal{D}[\eta_{yy}\Big(\sum_{i=1}^K\mathcal{L}(\dve{i}, \vx, f) - \sum_{i\not=y}^K\mathcal{L}(\dve{i}, \vx, f)\Big)] + \mathbb{E}_\mathcal{D}[\sum_{i\not=y}^K\eta_{yi}\mathcal{L}(\dve{i}, \vx, f)] \\
&= \mathbb{E}_\mathcal{D}[\eta_{yy}\sum_{i=1}^K\mathcal{L}(\dve{i}, \vx, f)] - \mathbb{E}_\mathcal{D}[\sum_{i\not=y}^K(\eta_{yy} - \eta_{yi})\mathcal{L}(\dve{i}, \vx, f)] \\
\end{align*}
By using the bounds $B_L, B_U$ we get
\begin{align*}
R^\eta_{\mathcal{L}}(f) &\leq B_U\mathbb{E}_\mathcal{D}[\eta_{yy}] - \mathbb{E}_\mathcal{D}[\sum_{i\not=y}^K(\eta_{yy} - \eta_{yi})\mathcal{L}(\dve{i}, \vx, f)] \\
R^\eta_{\mathcal{L}}(f) &\geq B_L\mathbb{E}_\mathcal{D}[\eta_{yy}] - \mathbb{E}_\mathcal{D}[\sum_{i\not=y}^K(\eta_{yy} - \eta_{yi})\mathcal{L}(\dve{i}, \vx, f)]
\end{align*}
Hence,
\begin{align}
    R^\eta_{\mathcal{L}}(f^*) - R^\eta_{\mathcal{L}}(f^*_\eta) &\leq (B_U-B_L)\mathbb{E}_\mathcal{D}[\eta_{yy}] + \\
    &+ \mathbb{E}_\mathcal{D}[\sum_{i\not=y}^K(\eta_{yy} - \eta_{yi})\big(\mathcal{L}(\dve{i}, \vx, f^*_\eta) - \mathcal{L}(\dve{i}, \vx, f^*)\big)] \nonumber
    \label{eq:asymDiff}
\end{align}
From the assumption that $R_{\mathcal{L}}(f^*)=0$, we have $\mathcal{L}(\dve{y}, \vx, f^*)=0$. Using the conditions on the loss function from Lemma \ref{L:asymConditions}, for all $i\not=y$, we get
\begin{align*}
    \mathcal{L}(\dve{i}, \vx, f^*_\eta) - \mathcal{L}(\dve{i}, \vx, f^*) &= /\text{ }\mathcal{L}(\dve{y}, \vx, f^*)=0 \text{ and } \text{\textit{i)}  } / \\
    &=\mathcal{L}(\dve{i}, \vx, f^*_\eta) - \mathcal{L}(\dve{i}, \dve{y})  \\
    &= / \text{  \textit{iii)}  }  / \\
    &=\mathcal{L}(\dve{i}, \vx, f^*_\eta) - C_2 \\
    &= / \text{  \textit{ii)}  } / \\
    &\leq C_1 - C_2
\end{align*}
By our assumption on the noise rates, we have $\eta_{yy} - \eta_{yi} > 0$. We have
\begin{align*}
    R^\eta_{\mathcal{L}}(f^*) - R^\eta_{\mathcal{L}}(f^*_\eta) \leq (B_U-B_L)\mathbb{E}_\mathcal{D}[\eta_{yy}] + (C_1-C_2)\mathbb{E}_\mathcal{D}[\sum_{i\not=y}^K(\eta_{yy} - \eta_{yi})]
\end{align*}
Since $f^*_\eta$ is the global minimizer of $R^\eta_{\mathcal{L}}(f)$ we have $R^\eta_{\mathcal{L}}(f^*)-R^\eta_{\mathcal{L}}(f^*_\eta) \geq 0$, which is the lower bound.
\end{proof}

\begin{remark}
\label{r:js-asymConditions}
The generalized Jensen-Shannon Divergence satisfies the conditions in Lemma \ref{L:asymConditions}, with 
\begin{align}
    C_1=H(\vpi), \quad C_2=H(\pi_1) + H(1-\pi_1).\nonumber
\end{align}
\end{remark}

\begin{proof}[Proof of Remark \ref{r:js-asymConditions}]
i). Follows directly from Jensen's inequality for the Shannon entropy. 
ii). The lower bound follows directly from Jensen's inequality for the non-negative Shannon entropy. The upper bound is shown below
\begin{align*}
\GJS(\dvp{1}, \dvp{2}, \dots, \dvp{M}) &= \sum_{j=1}^M\pi_j \DKL{\dvp{j}}{\vm} \\
&= \sum_{j=1}^M\Bigg[\pi_j \sum_{l=1}^K  \xlogxy{\edvp{j}{l}}{m_l}\Bigg]   \\
    &=\sum_{j=1}^M\Bigg[\pi_j \sum_{l=1}^K \edvp{j}{l} \Bigg( \log{ \Big( \frac{\pi_j\edvp{j}{l}}{m_l} \Big) } + \log{\frac{1}{\pi_j}}  \Bigg) \Bigg] \\
    &= \sum_{j=1}^M\Bigg[\pi_j \sum_{l=1}^K \Big[\edvp{j}{l} \log{ \Big( \frac{\pi_j\edvp{j}{l}}{m_l} \Big) } -\edvp{j}{l}\log{\pi_j}\Big] \Bigg]    \\
    &= \sum_{j=1}^M\Bigg[-\pi_j\log{\pi_j} + \pi_j \sum_{l=1}^K  \xlogyz{\edvp{j}{l}}{\pi_j\edvp{j}{l}}{m_l}\Bigg]  \\
    &= \sum_{j=1}^M\Bigg[H(\pi_j) + \pi_j \sum_{l=1}^K \xlogyz{\edvp{j}{l}} {\pi_j\edvp{j}{l}} {m_l}\Bigg] \\
&= \sum_{j=1}^M \Bigg[H(\pi_j) + \pi_j \sum_{l=1}^K \xlogyz{\edvp{j}{l}} {\edvp{j}{l}} { \edvp{j}{l} + \frac{1}{\pi_j}\s{i\not=j}{M} \evpi_i \edvp{i}{l}} \Bigg]  \\
&\leq \sum_{j=1}^M H(\pi_j) = H(\vpi)
\end{align*}
where the inequality holds with equality iff $\frac{1}{\pi_j}\s{i\not=j}{M} \evpi_i \edvp{i}{l} = 0$ when $\edvp{j}{l} > 0$ for all $j\in\{1,2,\dots,M\}$ and $l\in\{1,2,\dots,K\}$. Hence, $\DGJSnp$ is bounded above by $H(\vpi)$.\newline
iii). Let the label be $\dve{i}$ and the other M-1 distributions be $\dve{j}$ with $i\not=j$ then 
\begin{align}
    \GJS
    = H\big(\pi_1\dve{i} + \sum_{l=2}^M \pi_l \dve{j}) - \pi_1H(\dve{i}) - \sum_{l=2}^M \pi_l H(\dve{j})  = H(\pi_1\dve{i} + (1-\pi_1)\dve{j})
\end{align}
Notably, $C_1=C_2$ for $M=2$.
\end{proof}

\subsubsection{Improving GJS Risk Difference Bounds}
\label{sup:js-gjs-same-risk-bound}
\propGjsBetterBounds*
\begin{proof}[Proof of Proposition \ref{prop:js-gjs-same-risk-bound}]

\leavevmode\newline
\textbf{Symmetric Noise}
From the proof of Theorem \ref{T:sym}, we have for any function, $f$, mapping an input $\vx \in \sX$ to $\Delta^{K-1}$ 
\begin{align*}
R^{\eta}_{\mathcal{L}}(f) = \Big(1 - \frac{\eta K}{K-1}\Big)R_{\mathcal{L}}(f) + \frac{\eta}{K-1}\mathbb{E}_{\vx}\mathbb{E}_{y|\vx}\Bigg[\sum_{i=1}^K\mathcal{L}(\dve{i}, \vx, f)\Bigg]
\end{align*}
Using Proposition \ref{prop:GJS-JS-Consistency} for $\DGJSnp$, we get
\begin{align*}
R^{\eta}_{\mathcal{L}_{\DGJSnp}}(f) &= \Big(1 - \frac{\eta K}{K-1}\Big)R_{\mathcal{L}_{\DGJSnp}}(f) + \frac{\eta}{K-1}\mathbb{E}_{\vx}\mathbb{E}_{y|\vx}\Bigg[\sum_{i=1}^K\mathcal{L}^{f}_{\mathrm{JS_{\vpi'}}}(\dve{i},\dvm) \Bigg] \\
&+ (1-\pi_1)\frac{\eta K}{K-1}\mathbb{E}_{\vx}\mathbb{E}_{y|\vx}\Bigg[\mathcal{L}^{f}_{\mathrm{GJS_{\vpi''}}}(\dvp{2},\dots,\dvp{M}) \Bigg]
\end{align*}

Let  $B_L^{\DJSnp}$, $B_U^{\DJSnp}$ be the lower and upper bound for $\DJSnp$~(M=2) in Proposition \ref{prop:js-bounds}. These bounds \ref{prop:js-bounds} holds for any $\dvp{2} \in \Delta^{K-1}$ and therefore also holds for $\dvm$. Hence, we have
\begin{align*}
R^{\eta}_{\mathcal{L}_{\DGJSnp}}(f) \geq \Big(1 - \frac{\eta K}{K-1}\Big)R_{\mathcal{L}_{\DGJSnp}}(f) + \frac{\eta B_L^{\DJSnp}}{K-1} + (1-\pi_1)\frac{\eta K}{K-1}\mathbb{E}_{\vx}\mathbb{E}_{y|\vx}\Bigg[\mathcal{L}^{f}_{\mathrm{GJS_{\vpi''}}}(\dvp{2},\dots,\dvp{M}) \Bigg] \\ R^{\eta}_{\mathcal{L}_{\DGJSnp}}(f) \leq \Big(1 - \frac{\eta K}{K-1}\Big)R_{\mathcal{L}_{\DGJSnp}}(f) + \frac{\eta B_U^{\DJSnp}}{K-1} + (1-\pi_1)\frac{\eta K}{K-1}\mathbb{E}_{\vx}\mathbb{E}_{y|\vx}\Bigg[\mathcal{L}^{f}_{\mathrm{GJS_{\vpi''}}}(\dvp{2},\dots,\dvp{M}) \Bigg]
\end{align*}
With these bounds, the difference between $R^{\eta}_{\mathcal{L}}(f^*)$ and $R^{\eta}_{\mathcal{L}}(f^*_\eta)$ can be bounded as follows
\begin{align*}
    R^{\eta}_{\mathcal{L}_{\DGJSnp}}(f^*) - R^{\eta}_{\mathcal{L}_{\DGJSnp}}(f^*_\eta) &\leq \Big(1 - \frac{\eta K}{K-1}\Big)(R_{\mathcal{L}_{\DGJSnp}}(f^*) - R_{\mathcal{L}_{\DGJSnp}}(f^*_\eta)) + \frac{\eta (B_U^{\DJSnp}-B_L^{\DJSnp})}{K-1} \\
    &+ \frac{(1-\pi_1)\eta K}{K-1}\mathbb{E}_{\vx}\mathbb{E}_{y|\vx}\Bigg[\mathcal{L}^{f^*}_{\mathrm{GJS_{\vpi''}}}(\dvp{2},\dots,\dvp{M}) - \mathcal{L}^{f^*_\eta}_{\mathrm{GJS_{\vpi''}}}(\dvp{2},\dots,\dvp{M})\Bigg] \\
    &\leq \frac{\eta (B_U^{\DJSnp}-B_L^{\DJSnp})}{K-1}
\end{align*}
where the last inequality follows from the assumption on the noise rate, $(1 - \frac{\eta K}{K-1}) > 0$,  that $f^*$ is the minimizer of $R_{\mathcal{L}}(f)$ so $R_{\mathcal{L}}(f^*) - R_{\mathcal{L}}(f^*_\eta) \leq 0$, and the assumption on the consistency of $f^*$ and $f^*_\eta$. Similarly, since $f^*_\eta$ is the minimizer of $R^\eta_{\mathcal{L}}(f)$, we have $R^\eta_{\mathcal{L}}(f^*) - R^\eta_{\mathcal{L}}(f^*_\eta) \geq 0$, which is the lower bound. Hence, we have shown that $\LJS$ and $\LGJS$ have the same bounds for the risk difference for symmetric noise.
\newline\newline
\textbf{Asymmetric Noise}
For class dependent noisy(asymmetric) and any function, $f$, mapping an input $\vx \in \sX$ to $\Delta^{K-1}$, we have 
\begin{align*}
R^\eta_{\mathcal{L}_{\DGJSnp}}(f) &=\mathbb{E}_\mathcal{D}[\sum_{i=1}^K \eta_{yi}\mathcal{L}_{\DGJSnp}(\dve{i}, \vx, f)] \\ &=\mathbb{E}_\mathcal{D}[\eta_{yy}\mathcal{L}^{f}_{\mathrm{JS_{\vpi'}}}(\dve{y},\dvm) + \sum_{i\not=y}^K\eta_{yi}\mathcal{L}^{f}_{\mathrm{JS_{\vpi'}}}(\dve{i},\dvm) \\
&+ (1-\pi_1)\mathcal{L}^{f}_{\mathrm{GJS_{\vpi''}}}(\dvp{2},\dots,\dvp{M})] \\
&=\mathbb{E}_\mathcal{D}[\eta_{yy}\Big(\sum_{i=1}^K\mathcal{L}^{f}_{\mathrm{JS_{\vpi'}}}(\dve{i},\dvm) - \sum_{i\not=y}^K\mathcal{L}^{f}_{\mathrm{JS_{\vpi'}}}(\dve{i},\dvm)\Big) + \sum_{i\not=y}^K\eta_{yi}\mathcal{L}^{f}_{\mathrm{JS_{\vpi'}}}(\dve{i},\dvm) \\
&+ (1-\pi_1)\mathcal{L}^{f}_{\mathrm{GJS_{\vpi''}}}(\dvp{2},\dots,\dvp{M})] \\
&=\mathbb{E}_\mathcal{D}[\eta_{yy}\sum_{i=1}^K\mathcal{L}^{f}_{\mathrm{JS_{\vpi'}}}(\dve{i},\dvm) - \sum_{i\not=y}^K(\eta_{yy}-\eta_{yi})\mathcal{L}^{f}_{\mathrm{JS_{\vpi'}}}(\dve{i},\dvm) \\
&+ (1-\pi_1)\mathcal{L}^{f}_{\mathrm{GJS_{\vpi''}}}(\dvp{2},\dots,\dvp{M})] \\
\end{align*}
where Proposition \ref{prop:GJS-JS-Consistency} was used to separate $\DGJSnp$ into a $\DJSnp$ and a consistency term.
By using the bounds $B_L^{\DJSnp}, B_U^{\DJSnp}$ we get
\begin{align*}
R^\eta_{\mathcal{L}_{\DGJSnp}}(f) &\leq \mathbb{E}_\mathcal{D}[\eta_{yy}B_U^{\DJSnp} - \sum_{i\not=y}^K(\eta_{yy} - \eta_{yi})\mathcal{L}^{f}_{\mathrm{JS_{\vpi'}}}(\dve{i},\dvm) + (1-\pi_1)\mathcal{L}^{f}_{\mathrm{GJS_{\vpi''}}}(\dvp{2},\dots,\dvp{M})] \\
R^\eta_{\mathcal{L}_{\DGJSnp}}(f) &\geq \mathbb{E}_\mathcal{D}[\eta_{yy}B_L^{\DJSnp} - \sum_{i\not=y}^K(\eta_{yy} - \eta_{yi})\mathcal{L}^{f}_{\mathrm{JS_{\vpi'}}}(\dve{i},\dvm) +  (1-\pi_1)\mathcal{L}^{f}_{\mathrm{GJS_{\vpi''}}}(\dvp{2},\dots,\dvp{M})]
\end{align*}
Hence,
\begin{align}
    R^\eta_{\mathcal{L}}(f^*) - R^\eta_{\mathcal{L}}(f^*_\eta) &\leq (B_U^{\DJSnp}-B_L^{\DJSnp})\mathbb{E}_\mathcal{D}[\eta_{yy}] \nonumber  \\
    &+ \mathbb{E}_\mathcal{D}[\sum_{i\not=y}^K(\eta_{yy} - \eta_{yi})\big(\mathcal{L}^{f^*_\eta}_{\mathrm{JS_{\vpi'}}}(\dve{i},\dvm) - \mathcal{L}^{f^*}_{\mathrm{JS_{\vpi'}}}(\dve{i},\dvm)\big)] \label{eq:asymRiskDiffTerm2}  \\
    &+ (1-\pi_1)\Big(\mathbb{E}_\mathcal{D}[\mathcal{L}^{f^*}_{\mathrm{GJS_{\vpi''}}}(\dvp{2},\dots,\dvp{M})] - \mathbb{E}_\mathcal{D}[\mathcal{L}^{f^*_\eta}_{\mathrm{GJS_{\vpi''}}}(\dvp{2},\dots,\dvp{M})]\Big)
    \label{eq:asymRiskDiffTerm3}
\end{align}
From the assumption that $R_{\mathcal{L}_{\DGJSnp}}(f^*)=0$, we have $\mathcal{L}_{\DGJSnp}(\dve{y}, \vx, f^*)=0$. 
Using the conditions on the loss function from Lemma \ref{L:asymConditions}, for all $i\not=y$, we get
\begin{align*}
    \mathcal{L}^{f^*_\eta}_{\mathrm{JS_{\vpi'}}}(\dve{i},\dvm) - \mathcal{L}^{f^*}_{\mathrm{JS_{\vpi'}}}(\dve{i},\dvm) &= /\text{ }\mathcal{L}_{\DGJSnp}(\dve{y}, \vx, f^*)=0 \text{ and } \text{\textit{i)}  } / \\
    &=\mathcal{L}^{f^*_\eta}_{\mathrm{JS_{\vpi'}}}(\dve{i},\dvm) - \mathcal{L}^{f^*}_{\mathrm{JS_{\vpi'}}}(\dve{i}, \dve{y})  \\
    &= / \text{  \textit{iii)} and Remark \ref{r:js-asymConditions} }  / \\
    &=\mathcal{L}^{f^*_\eta}_{\mathrm{JS_{\vpi'}}}(\dve{i},\dvm) - C_1 \\
    &\leq 0
\end{align*}
From above and our assumption on the noise rates ($\eta_{yy} - \eta_{yi} > 0$), we have that the term in Equation~\ref{eq:asymRiskDiffTerm2} is less or equal to zero. Due to the assumption on the consistency of $f^*$ and $f^*_\eta$ in Proposition \ref{prop:js-gjs-same-risk-bound}, this is also the case for the term in Equation \ref{eq:asymRiskDiffTerm3}. We have
\begin{align*}
    R^\eta_{\mathcal{L}_{\DGJSnp}}(f^*) - R^\eta_{\mathcal{L}_{\DGJSnp}}(f^*_\eta) \leq (B_U^{\DJSnp}-B_L^{\DJSnp})\mathbb{E}_\mathcal{D}[\eta_{yy}]
\end{align*}
Since $f^*_\eta$ is the global minimizer of $R^\eta_{\mathcal{L}_{\DGJSnp}}(f)$ we have $R^\eta_{\mathcal{L}_{\DGJSnp}}(f^*)-R^\eta_{\mathcal{L}_{\DGJSnp}}(f^*_\eta) \geq 0$, which is the lower bound. Hence, we have shown that $\LJS$ and $\LGJS$ have the same bounds for the risk difference for asymmetric noise.
\end{proof}

\subsection{Bounds}
\label{sup:bounds}
In this section, we first introduce some useful definitions and relate them to $\DJSnp$. Then, the bounds for JS and GJS are proven. 

\subsubsection{Another Definition of Jensen-Shannon divergence}
\begin{align} 
    & f_{\pi_1}(t) \coloneqq \Big[ H(\pi_1t + 1-\pi_1) -\pi_1H(t)\Big], t > 0 \label{eq:fdivJs}\\
    & f_{\pi_1}(0) \coloneqq \lim_{t\rightarrow 0} f_{\pi_1}(t) \\
    &0 f_{\pi_1} \Big(\frac{0}{0}\Big) \coloneqq 0 , \label{eq:0f00}\\  
    &0 f_{\pi_1} (0) \coloneqq 0 \label{eq:0f0} 
\end{align}
\begin{remark}
The Jensen-Shannon divergence can be rewritten using Equation \ref{eq:fdivJs} as follows
\label{r:js-fdiv}
\begin{align}
     \JS(\dvp{1}, \dvp{2}) = \sum_{k=1}^K \edvp{2}{k}f_{\pi_1}{\Bigg( \frac{\edvp{1}{k}}{\edvp{2}{k}}\Bigg)}
\end{align}
\end{remark}
\begin{proof}[Proof of Remark \ref{r:js-fdiv}]
\begin{align}
    &\sum_{k=1}^K \edvp{2}{k} f_{\pi_1}\Bigg(\frac{\edvp{1}{k}}{\edvp{2}{k}}\Bigg) = \sum_{k=1}^K \edvp{2}{k} \Big[ \pi \frac{\edvp{1}{k}}{\edvp{2}{k}}\log(\frac{\edvp{1}{k}}{\edvp{2}{k}}) -(\pi \frac{\edvp{1}{k}}{\edvp{2}{k}} + 1-\pi)\log(\pi \frac{\edvp{1}{k}}{\edvp{2}{k}} + 1-\pi)\Big] \\
    &= \sum_{k=1}^K \pi \edvp{1}{k}\log(\frac{\edvp{1}{k}}{\edvp{2}{k}}) -(\pi \edvp{1}{k} + (1-\pi)\edvp{2}{k})\log( \frac{\pi \edvp{1}{k} + (1-\pi)\edvp{2}{k} }{\edvp{2}{k}}) \\
    &= \sum_{k=1}^K \pi \edvp{1}{k}\log(\frac{\edvp{1}{k}}{\edvp{2}{k}}) -\pi \edvp{1}{k}\log( \frac{\pi \edvp{1}{k} + (1-\pi)\edvp{2}{k} }{\edvp{2}{k}}) - (1-\pi)\edvp{2}{k}\log( \frac{\pi \edvp{1}{k} + (1-\pi)\edvp{2}{k} }{\edvp{2}{k}}) \\
    &= \sum_{k=1}^K \pi \edvp{1}{k}\log(\frac{\edvp{1}{k}}{\pi \edvp{1}{k} + (1-\pi)\edvp{2}{k}}) + (1-\pi)\edvp{2}{k}\log( \frac{\edvp{2}{k}}{\pi \edvp{1}{k} + (1-\pi)\edvp{2}{k} }) \\
    &= \sum_{k=1}^K \pi \KL\Big(\edvp{1}{k}, \pi \edvp{1}{k} + (1-\pi)\edvp{2}{k}\Big) + (1-\pi)\KL\Big(\edvp{2}{k}, \pi \edvp{1}{k} + (1-\pi)\edvp{2}{k}\Big) \\
    &= \JS(\dvp{1}, \dvp{2})
\end{align}
\end{proof}

\subsubsection{Bounds for JS}
\begin{restatable}{proposition}{propJsBounds}
$\LJS$ has {\small $B_L \leq \sum_{k=1}^K \LJS(\dve{k}, f(\vx)) \leq B_U$} with
{\small
\begin{align}
    B_L = \sum_{k=1}^K \LJS(\dve{k}, \vu), \quad B_U = \sum_{k=1}^K \LJS(\dve{k}, \dve{1}) \nonumber
\end{align}
}%
where $\vu$ is the uniform distribution.
\label{prop:js-bounds}
\end{restatable}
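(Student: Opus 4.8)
The plan is to regard the left-hand sum as a single scalar function of the prediction, namely $g(p) \coloneqq \sum_{k=1}^K \LJS(\dve{k}, p)$ for $p = f(\vx) \in \Delta^{K-1}$, and to locate its extrema over the simplex using two structural properties: convexity and permutation symmetry. Since $\LJS(\dve{k}, p) = \JS(\dve{k}, p)/Z$ with $Z>0$ a fixed constant, the bounds on the sum are equivalent to bounds on $g$, whose minimizer I claim is the uniform distribution $\vu$ and whose maximizer is any vertex.

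First I would establish convexity of $g$ on $\Delta^{K-1}$. Using the KL form of the divergence, $\JS(P,Q) = \pi_1\KL(P \Vert \pi_1 P + \pi_2 Q) + \pi_2\KL(Q \Vert \pi_1 P + \pi_2 Q)$, the joint convexity of $\KL$ precomposed with the affine map $(P,Q)\mapsto(P,\pi_1 P + \pi_2 Q)$ shows that $\JS$ is jointly convex; in particular $p \mapsto \JS(\dve{k}, p)$ is convex for each fixed $\dve{k}$. A nonnegatively weighted sum of convex functions is convex, so $g$ is convex. Next I would observe that $g$ is invariant under permutations of the coordinates of $p$: because Shannon entropy is permutation invariant and the mixture commutes with a simultaneous coordinate permutation, $\JS(\sigma\cdot P, \sigma\cdot Q) = \JS(P,Q)$ for every permutation $\sigma$; applying this with the one-hot argument and reindexing the sum over $k$ yields $g(\sigma\cdot p) = g(p)$.

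With these two facts the two bounds follow cleanly. A convex function on the compact convex simplex attains its maximum at an extreme point, i.e. a vertex $\dve{j}$; by permutation symmetry every vertex gives the same value, so $\max_p g(p) = g(\dve{1}) = \sum_{k=1}^K \LJS(\dve{k}, \dve{1}) = B_U$. For the lower bound, permutation symmetry forces $\nabla g(\vu)$ to have all coordinates equal, hence to be orthogonal to the tangent space $\{v : \sum_i v_i = 0\}$ of the simplex, so $\vu$ is a stationary point of $g$ restricted to $\Delta^{K-1}$; because $g$ is convex, this interior critical point is the global minimum, giving $\min_p g(p) = g(\vu) = \sum_{k=1}^K \LJS(\dve{k}, \vu) = B_L$.

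The step I expect to be the main obstacle is making the minimality argument fully rigorous rather than heuristic: convexity together with an interior stationary point does yield a global minimum, but one must justify both that $\vu$ is genuinely stationary for the constrained problem (handled by the symmetry-of-the-gradient argument) and that no smaller value hides on the boundary (ruled out precisely because an interior global minimizer of a convex function dominates every boundary value). If a more self-contained route is preferred, I would instead expand $\JS(\dve{k}, p) = H(\pi_1\dve{k} + \pi_2 p) - \pi_2 H(p)$ in closed form, sum over $k$, and verify the two inequalities directly by a convexity or majorization argument on the resulting symmetric function of $(p_1,\dots,p_K)$; this trades the black-box appeal to joint convexity of $\JS$ for a longer but elementary computation.
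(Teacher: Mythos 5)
Your proof is correct, and its skeleton (convexity of $g(\vp)=\sum_k \LJS(\dve{k},\vp)$ plus permutation invariance, maximum at a vertex, minimum at $\vu$) is exactly the paper's. The differences are in the two supporting lemmas. For convexity, the paper rewrites $\JSnp$ in $f$-divergence form, $\JS(\dvp{1},\dvp{2})=\sum_k \edvp{2}{k} f_{\pi_1}(\edvp{1}{k}/\edvp{2}{k})$ with $f_{\pi_1}(t)=H(\pi_1 t+1-\pi_1)-\pi_1 H(t)$, and checks $f_{\pi_1}''(t)=\pi_1(1-\pi_1)/(\pi_1 t^2+(1-\pi_1)t)>0$, which yields \emph{strict} convexity; you instead invoke joint convexity of $\KL$ composed with the affine map $(P,Q)\mapsto(P,\pi_1P+\pi_2Q)$, which is cleaner but only gives plain convexity. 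This forces the second divergence: the paper's lower bound is ``a strictly convex function on a compact convex set has a unique minimizer; permuting a minimizer gives a minimizer, so the minimizer is the unique permutation-fixed point $\vu$,'' whereas you compensate for the lack of strictness with a first-order argument --- $\nabla g(\vu)\propto\vone$ by symmetry, hence orthogonal to the tangent space of the simplex, and the convexity inequality $g(\vq)\ge g(\vu)+\langle\nabla g(\vu),\vq-\vu\rangle=g(\vu)$ covers boundary points as well. Both lower-bound arguments are sound; yours is marginally more general (no strictness needed, and an even shorter route from your hypotheses is the symmetrization $g(\vu)=g\bigl(\tfrac{1}{K!}\sum_\sigma \sigma\cdot\vp\bigr)\le\tfrac{1}{K!}\sum_\sigma g(\sigma\cdot\vp)=g(\vp)$), while the paper's buys uniqueness of the minimizer for free. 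The paper also verifies permutation invariance by explicitly reducing the sum to $\sum_k H(\pi_1+\pi_2 p_k)+(K-1)H(\pi_2)-\pi_2H(\vp)$ rather than by the abstract equivariance argument you give, but that is cosmetic.
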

\vspace{-1.5cm}
\begin{proof}[Proof of Proposition \ref{prop:js-bounds}.] \quad \\
First we start with two observations: 1) $\sum_{k=1}^K \LJS(\dve{k}, \vp)$ is strictly convex. 2) $\sum_{k=1}^K \LJS(\dve{k}, \vp)$ is invariant to permutations of the components of $\vp$.

First, we show Observation 1). This is done by using Remark \ref{r:js-fdiv} and showing that the second derivatives are larger than zero
\begin{align}
        f_{\pi_1}(t) &\coloneqq \Big[ H(\pi_1t + 1-\pi_1) -\pi_1H(t)\Big], t>0 \\
        f_{\pi_1}'(t) &= \Big[ \pi_1 (-\log(\pi_1t + 1-\pi_1) + \log(t))\Big] \\
        f_{\pi_1}''(t) &= \frac{\pi_1(1-\pi_1)}{\pi_1t^2 + t(1-\pi_1)} 
\end{align}
Hence, $f_{\pi_1}(t)$ is strictly convex, since $\pi_1>0$ and $t>0$, then $f_{\pi_1}''(t) > 0$. With Remark \ref{r:js-fdiv}, and that the sum of strictly convex functions is also strictly convex, it follows that $\sum_{k=1}^K \LJS(\dve{k}, \vp)$ is strictly convex.

Next, we show Observation 2), \ie that $\sum_{k=1}^K \LJS(\dve{k}, \vp)$ is invariant to permutations of $\vp$
\begin{align}
    \sum_{k=1}^K\JSnp(\dve{k}, \vp) 
    &= \sum_{k=1}^K \Big[H(\pi_1\dve{k} + \pi_2\vp) - \pi_2H(\vp)\Big] \\
    &= \sum_{k=1}^K \Big[H(\pi_1 + \pi_2p_k) + \sum_{i\not= k}^K H(\pi_2p_i) - \pi_2H(\vp)\Big] \\
    &= \sum_{k=1}^K H(\pi_1 + \pi_2p_k) + \sum_{k=1}^K\sum_{i\not= k}^K H(\pi_2p_i) - \pi_2KH(\vp) \\
    &= \sum_{k=1}^K H(\pi_1 + \pi_2p_k) + \sum_{k=1}^K\Big[H(\pi_2\vp) - H(\pi_2p_k)\Big] - \pi_2KH(\vp) \\
    &= \sum_{k=1}^K H(\pi_1 + \pi_2p_k) + (K-1)H(\pi_2\vp) - \pi_2KH(\vp) \\
    &= \sum_{k=1}^K H(\pi_1 + \pi_2p_k) + (K-1)(H(\pi_2) +\pi_2H(\vp)) - \pi_2KH(\vp) \\
    &= \sum_{k=1}^K H(\pi_1 + \pi_2p_k) + (K-1)H(\pi_2) - \pi_2H(\vp)
\end{align}
Clearly, a permutation of the components of $\vp$ does not change the first sum or $H(\vp)$, since it would simply reorder the summands. Hence,  $\sum_{k=1}^K \LJS(\dve{k}, \vp)$ is invariant to permutations of $\vp$.

\textbf{Lower bound:} \\
The minimizer of a strictly convex function$\Big(\sum_{k=1}^K \LJS(\dve{k}, \vp)\Big)$ over a compact convex set$\Big(\Delta^{K-1}\Big)$ is unique. Since $\vu$ is the only element of $\Delta^{K-1}$ that is the same under permutation, it is the unique minimum of $\sum_{k=1}^K \LJS(\dve{k}, \vp)$ for $\vp \in \Delta^{K-1}$.

\textbf{Upper bound:} \\
The maximizer of a strictly convex function$\Big(\sum_{k=1}^K \LJS(\dve{k}, \vp)\Big)$ over a compact convex set$\Big(\Delta^{K-1}\Big)$ is at its extreme points$\Big(\dve{i}$ for $i \in \{1, 2, \dots, K\}\Big)$. All extreme points have the same value according to Observation 2).

\end{proof}

\subsubsection{Bounds for GJS}
\propGjsBounds*
\begin{proof}[Proof of Proposition \ref{prop:gjs-bounds}] \quad \\
\textbf{Lower bound:} Using Proposition \ref{prop:GJS-JS-Consistency} to rewrite $\DGJSnp$ into a $\DJSnp$ and a consistency term, we get
\begin{align}
    \sum_{k=1}^K \GJS(\dve{k}, \dvp{2}, \dots,& \dvp{M}) = \sum_{k=1}^K\Big[D_{\mathrm{JS}_{\vpi'}}(\dve{k},\dvm) + (1-\pi_1)D_{\mathrm{GJS}_{\vpi''}}(\dvp{2},\dots,\dvp{M})\Big] \\
    &= \sum_{k=1}^KD_{\mathrm{JS}_{\vpi'}}(\dve{k},\dvm) + (1-\pi_1)KD_{\mathrm{GJS}_{\vpi''}}(\dvp{2},\dots,\dvp{M}) \\
    &\geq \sum_{k=1}^KD_{\mathrm{JS}_{\vpi'}}(\dve{k},\vu) + (1-\pi_1)KD_{\mathrm{GJS}_{\vpi''}}(\dvp{2},\dots,\dvp{M}) \\
    &\geq \sum_{k=1}^KD_{\mathrm{JS}_{\vpi'}}(\dve{k},\vu)
\end{align}
where the first inequality comes from the lower bound of Proposition \ref{prop:js-bounds}, and the second inequality comes from \\ $(1-\pi_1)KD_{\mathrm{GJS}_{\vpi''}}(\dvp{2},\dots,\dvp{M})$ being non-negative. The inequalities holds with equality if and only if \\ $\dvp{2}=\dots=\dvp{M}=\vu$. Notably, the lower bound of $\DJSnp$ is the same as that of $\DGJSnp$. \\

\textbf{Upper bound:} \\
Let's denote $A(\dvp{2},\dots,\dvp{M}) = \sum_{k=1}^K \LGJS(\dve{k}, \dvp{2}, \dots, \dvp{M})$.
First we start by making 5 observations:\\
Observation 1: $\Delta^{K-1}_{M-1} = \Delta^{K-1} \times \Delta^{K-1} \times \dots \times \Delta^{K-1}$ is a compact convex set.\\
Observation 2: $A$ is strictly convex over $\Delta^{K-1}_{M-1}$.\\
Observation 3: From Observations 1 and 2 we have that the maximizer of $A$ should be at extreme points of $\Delta^{K-1}_{M-1}$, \ie, a unit vector in every $M-1$ individual $\Delta^{K-1}$ subspaces of $\Delta^{K-1}_{M-1}$.\\
Observation 4: $A$ is symmetric w.r.t. permutations of the components of predictive distributions $\dvp{i}$.\\

Unlike for $\DJSnp$, the extreme points of $\Delta^{K-1}_{M-1}$ do not necessarily map to the same value of $A$. Hence, what is left to show is that the set of extreme points with all predictive distributions being \textit{distinct} unit vectors maps to the maximum value of $A$.

Given Observation 3, all the M distributions are unit vectors, therefore the maximum is of the form $A(\dvp{2},\dots,\dvp{M})=\sum_{k=1}^K H(\pi_1\dve{k} + (1-\pi_1)\dvm)$, where $\dvm \coloneqq \sum_{j=2}^M \pi_j\dvp{j}/(1-\pi_1)$. Furthermore, at most $M-1$ components of $\dvm$ are non-zero~(if all predictions are distinct). From Observation 4, we can WLOG permute $\dvm$ such that the first $M-1$ components are the largest ones. Let $\dvm^{\subset} \in \Delta^{M-2}$ denote the subset of these first $M-1$ components of $\dvm \in \Delta^{K-1}$. Then, for all predictive distributions being unit vectors, we have
{\scriptsize
\begin{align}
    A(\dvp{2},\dots,\dvp{M}) &= \sum_{k=1}^K H(\pi_1\dve{k} + (1-\pi_1)\dvm) \\
    &= \sum_{k=1}^{M-1} \Big[ H(\pi_1 + (1-\pi_1)m_{>1,k}) + (K-1)H((1-\pi_1)m_{>1,k})\Big] + \sum_{k=M}^KH(\pi_1)  \\
    &= \sum_{k=1}^{M-1} H(\pi_1 + (1-\pi_1)m_{>1,k}) + (K-1)H((1-\pi_1)\dvm^{\subset}) + \sum_{k=M}^KH(\pi_1) \\
    &\leq  (M-1)H(\frac{1}{M-1}\sum_{k=1}^{M-1}\Big[\pi_1 + (1-\pi_1)m_{>1,k}\Big]) + (K-1)H((1-\pi_1))\dvm^{\subset}) + \sum_{k=M}^KH(\pi_1) \\
    &= (M-1)H(\pi_1 + \frac{1-\pi_1}{M-1}) + (K-1)H((1-\pi_1)\dvm^{\subset}) + \sum_{k=M}^KH(\pi_1) \\
    &\leq (M-1)H(\pi_1 + \frac{1-\pi_1}{M-1}) + (K-1)H((1-\pi_1)\vu) + \sum_{k=M}^KH(\pi_1) 
\end{align}}The first inequality follows from Jensen's inequality and the second from the uniform distribution maximizes entropy. Both inequalities hold with equality iff $m_{>1,1}=\dots=m_{>1,M-1}$. Hence, the maximum is achieved if $\dvm^{\subset}=\vu \in \Delta^{M-2}$, which is only possible if all $M-1$ predictive distributions are distinct unit vectors. 

\end{proof}

\subsection{Robustness of Jensen-Shannon losses}
\label{sup:sec:JS-bound-diff}
In this section, we prove that the lower~($B_L$) and upper~($B_U$) bounds become the same for $\DJSnp$ and $\DGJSnp$ as $\pi_1 \rightarrow 1$ as stated in Remark \ref{r:JSGJSRobustness}. 

\rJSGJSRobustness*
\begin{proof}[Proof of Remark \ref{r:JSGJSRobustness} for $\DJSnp$] \quad \\
\textbf{Lower bound:}
\begin{align}
        \sum_{k=1}^K \JS(\dve{y}, \mathbf{u}) &= \sum_{k=1}^K H(\pi_1\dve{k} + \pi_2\mathbf{u}) - \pi_2H(\mathbf{u}) \\
        &= K[H(\pi_1\dve{1} + \pi_2\mathbf{u}) - \pi_2H(\mathbf{u})] \\
    &=K[H(\pi_1 + \pi_2/K) + (K-1)H(\pi_2/K) - K\pi_2H(\frac{1}{K})] \\
    &= / H(\pi_2/K) = -\pi_2/K (\log{\pi_2} + \log{1/K}) = \frac{1}{K}H(\pi_2) + \pi_2H(1/K) /  \\
    &=K[H(\pi_1 + \pi_2/K) + (K-1)( \frac{1}{K}H(\pi_2) + \pi_2H(\frac{1}{K})) - K\pi_2H(\frac{1}{K})] \\
    &=K[H(\pi_1 + \pi_2/K) + (K-1)\frac{1}{K}H(\pi_2)  - \pi_2H(\frac{1}{K})]
\end{align}
If one now normalize($Z=H(\pi_2)=H(1-\pi_1)$) and take the limit as $\pi_1 \rightarrow 1$ we get:
\begin{align}
        &\lim_{\pi_1 \rightarrow 1} \sum_{k=1}^K \LJS(\dve{y}, \mathbf{u}) = \lim_{\pi_1 \rightarrow 1}(K-1) + K\frac{H(\pi_1 + \pi_2/K) - \pi_2H(\frac{1}{K})}{H(\pi_2)} \label{eq:frac-hopital} \\
        &= \lim_{\pi_1 \rightarrow 1} (K-1) + K\frac{-(K-1) (1 + \log{(\pi_1 + \pi_2/K))/K} -\log{(1/K)}/K}{\log{(1-\pi_1) + 1}} \\
        &= \lim_{\pi_1 \rightarrow 1} (K-1) - \frac{(K-1) (1 + \log{(\pi_1 + \pi_2/K))} -\log{(1/K)}}{\log{(1-\pi_1) + 1}} \\
        &= \lim_{\pi_1 \rightarrow 1} (K-1) - ((K-1) (1 + \log{(\pi_1 + \pi_2/K))} -\log{(1/K)}) \frac{1}{\log{(1-\pi_1) + 1}} \\
        &=(K-1) - (K-1 -\log{(1/K)}) \cdot 0 \\
        &= K-1
\end{align}
where L'H\^{o}pital's rule was used for the fraction in Equation \ref{eq:frac-hopital} which is indeterminate of the form $\frac{0}{0}$. \\
\textbf{Upper bound:}

\begin{align}
    \sum_{k=1}^K \LJS(\dve{k},\dve{1}) &= \frac{1}{H(\pi_2)}\sum_{k=1}^K H(\pi_1\dve{k} + \pi_2\dve{1}) \\ 
    &= \frac{1}{H(\pi_2}[(K-1)H(\pi_2) + (K-1)H(\pi_1) + H(\pi_1 + \pi_2)] \\
    &=(K-1)[1 + \frac{H(\pi_1)}{H(\pi_2)}] \\
    &= (K-1)\Bigg[1 + \frac{\pi_1\log{\pi_1}}{(1-\pi_1)\log{(1-\pi_1)}}\Bigg] 
\end{align}
Taking the limit as $\pi_1 \rightarrow 1$ gives
\begin{align}
    \lim_{\pi_1 \rightarrow 1}\sum_{k=1}^K \LJS(\dve{k},\dve{1}) 
    &= \lim_{\pi_1 \rightarrow 1}(K-1)\Bigg[1 + \pi_1\frac{1}{\log{(1-\pi_1)}} \frac{\log{\pi_1}}{(1-\pi_1)}\Bigg] \\
    &= \lim_{\pi_1 \rightarrow 1}(K-1)\Bigg[1 + \pi_1\frac{1}{\log{(1-\pi_1)}}\frac{1}{\pi_1}\frac{1}{-1}\Bigg] \\
    &= (K-1)[1+ 1 \cdot 0 \cdot 1 \cdot -1] \\
    &= K-1
\end{align}
where L'H\^{o}pital's rule was used for $\lim_{\pi_1 \rightarrow 1}\frac{\log{\pi_1}}{(1-\pi_1)}$ which is indeterminate of the form $\frac{0}{0}$. \\
Hence, $B_L=B_U=K-1$.
\end{proof}
Next, we look at the robustness of the generalized Jensen-Shannon loss.
\begin{proof}[Proof of Remark \ref{r:JSGJSRobustness} for $\DGJSnp$] \quad \\
Proposition \ref{prop:GJS-JS-Consistency}, shows that $\DGJSnp$ can be rewritten as a $\DJSnp$ term and a consistency term. From the proof of Remark \ref{r:JSGJSRobustness} for $\DJSnp$ above, it follows that the $\DJSnp$ term satisfies $B_L=B_U$ as $\pi_1$ approaches 1. Hence, it is enough to show that the consistency term of $\DGJSnp$ also becomes a constant in this limit. The consistency term is the generalized Jensen-Shannon divergence 
\begin{align}
    \lim_{\pi_1 \rightarrow 1}(1-\pi_1)\mathcal{L}_{\mathrm{GJS_{\vpi''}}}(\dvp{2},\dots,\dvp{M}) &= \lim_{\pi_1 \rightarrow 1} \frac{(1-\pi_1)}{H(1-\pi_1)}D_{\mathrm{GJS}_{\vpi''}}(\dvp{2},\dots,\dvp{M}) \\
    &= \lim_{\pi_1 \rightarrow 1} -\frac{1}{\log{(1-\pi_1)}}D_{\mathrm{GJS}_{\vpi''}}(\dvp{2},\dots,\dvp{M}) \\
    &= 0
\end{align}
where $\vpi''=[\pi_2, \dots, \pi_M]/(1-\pi_1)$. $D_{\mathrm{GJS}_{\vpi''}}(\dvp{2},\dots,\dvp{M})$ is bounded and $-\frac{1}{\log{(1-\pi_1)}}$ goes to zero as $\pi_1 \rightarrow 1$, hence the limit of the product goes to zero.
\end{proof}

\subsection{Gradients of Jensen-Shannon Divergence}
\label{sup:js-grad}
The partial derivative of the Jensen-Shannon divergence is
\begin{align*}
    \frac{\partial \{H(\vm)  - \pi_1 H(\dve{y}) - (1-\pi_1)H(\vp)  \}}{ \partial z_i}
\end{align*}
where $\vm=\pi_1\dve{y} + \pi_2\vp=\pi_1\dve{y} + (1-\pi_1)\vp$, and $p_j = e^{z_j} / \sum_{k=1}^K e^{z_k}$. Note the difference between $e^{z}$ which is the exponential function while $\dve{y}$ is a onehot label. We take the partial derivative of each term separately, but first the partial derivative of the $j$th component of a softmax output with respect to the $i$th component of the corresponding logit
\begin{align}
    \frac{\partial p_j}{\partial z_i} &= \frac{\partial}{\partial z_i}\frac{e^{z_j}}{\sum_{k=1}^K e^{z_k}}  \\
    &= \frac{\frac{\partial e^{z_j}}{\partial z_i}\sum_{k=1}^K e^{z_k} - e^{z_j}\frac{\partial \sum_{k=1}^K e^{z_k} }{\partial z_i }}{\Big(\sum_{k=1}^K e^{z_k}\Big)^2}  \\
    &= \frac{\mathbbm{1}(i=j)e^{z_j}\sum_{k=1}^K e^{z_k} - e^{z_j}e^{z_i}}{\Big(\sum_{k=1}^K e^{z_k}\Big)^2}  \\
    &= \frac{\mathbbm{1}(i=j)e^{z_j} - p_je^{z_i} }{\sum_{k=1}^K e^{z_k}}  \\
    &= \mathbbm{1}(i=j)p_j - p_jp_i \\
    &= p_j(\mathbbm{1}(i=j)-p_i)  \\
    &= p_i(\mathbbm{1}(i=j)-p_j) = \frac{\partial p_i}{\partial z_j}
\end{align}
where $\mathbbm{1}(i=j)$ is the indicator function, \ie 1 when $i=j$ and zero otherwise. Using the above, we get
\begin{align}
    \sum_{j=1}^K \frac{\partial p_j}{\partial z_i} = p_i\sum_{j=1}^K (\mathbbm{1}(i=j)-p_j) = p_i(1-1)=0 
    \label{eq:partialSum}
\end{align}
First, the partial derivative of $H(\vp)$ wrt $z_i$
\begin{align}
    \frac{\partial H(\vp)}{\partial z_i} &= - \sum_{j=1}^K \frac{\partial p_j\log{p_j}}{\partial z_i}  \\ 
    &=  - \sum_{j=1}^K \frac{\partial p_j}{\partial z_i}\log{p_j} + p_j\frac{\partial \log{p_j}}{\partial z_i}  \\
    &= - \sum_{j=1}^K \frac{\partial p_j}{\partial z_i}\log{p_j} + p_j\frac{1}{p_j}\frac{\partial p_j}{\partial z_i}  \\
    &= - \sum_{j=1}^K \frac{\partial p_j}{\partial z_i}\left( \log{p_j} + 1\right) \\
    &= / \text{ Equation \ref{eq:partialSum} } /  \\
    &= - \sum_{j=1}^K \frac{\partial p_j}{\partial z_i}\log{p_j}
\end{align}
Next, the partial derivative of $H(\vm)$ wrt $z_i$
\vspace{-0cm}
\begin{align}
    \frac{\partial \{H(\vm) \}}{ \partial z_i} &= \frac{\partial  \{\pi_1 H(\dve{y},\vm) + (1-\pi_1)H(\vp,\vm) \}}{ \partial z_i} \\
    &= -\sum_{j=1}^K \Big[ \pi_1 \frac{ \edve{y}{j}\partial \log{(m_j)}}{\partial z_i} + (1-\pi_1) \frac{ \partial \{p_j \log{(m_j)} \} } {\partial z_i} \Big]  \\
    &= -\sum_{j=1}^K \Big[ \pi_1 \edve{y}{j} \frac{ \partial \log{(m_j)}}{\partial z_i} + (1-\pi_1)\Big( \frac{ \partial p_j} {\partial z_i}\log{(m_j)} + p_j\frac{ \partial  \log{(m_j)} } {\partial z_i} \Big) \Big]  \\
    &= -\sum_{j=1}^K \Big[ m_j\frac{ \partial \log{(m_j)}}{\partial z_i} + (1-\pi_1) \frac{ \partial p_j} {\partial z_i}\log{(m_j)} \Big]  \\
    &= -\sum_{j=1}^K \Big[ (1-\pi_1)\frac{ \partial p_j}{\partial z_i} + (1-\pi_1) \frac{ \partial p_j} {\partial z_i}\log{(m_j)} \Big] \\
    &= -\sum_{j=1}^K (1-\pi_1)\frac{ \partial p_j}{\partial z_i}\Big[ 1 + \log{(m_j)} \Big] = / \text{ Equation \ref{eq:partialSum} } / \\
    &= -(1-\pi_1)\sum_{j=1}^K \frac{ \partial p_j}{\partial z_i}\log{(m_j)}
\end{align}
\vspace{-0cm}
The partial derivative of the Jensen-Shannon divergence with respect to logit $z_i$ is
\vspace{-0cm}
\begin{align}
    \frac{\partial \{H(\vm)  - \pi_1 H(\dve{y}) - (1-\pi_1)H(\vp)  \}}{ \partial z_i}&=  \frac{\partial \{H(\vm) - (1-\pi_1)H(\vp)  \}}{ \partial z_i}\\
    &= -(1-\pi_1) \sum_{j=1}^K \frac{ \partial p_j}{\partial z_i}\Big(\log{(m_j)} -\log{p_j}\Big)   \\
     &= -(1-\pi_1)\Big[ \sum_{j=1}^K \frac{ \partial p_j}{\partial z_i}\log{\frac{m_j}{p_j}} \Big] 
\end{align}
If we now make use of the fact that the label is $\dve{y}$, we can write the partial derivative wrt to $z_i$ as
\begin{align}
     &\frac{\partial \{H(\vm)  - \pi_1 H(\dve{y}) - (1-\pi_1)H(\vp)  \}}{ \partial z_i} = \\
     &= -(1-\pi_1)\Big[ \sum_{j=1}^K \frac{ \partial p_j}{\partial z_i}\log{\Bigg(\frac{\pi_1 \edve{y}{j}}{p_j} + (1-\pi_1) \Bigg) } \Big]   \\
     &= -(1-\pi_1)\Big[ \frac{ \partial p_y}{\partial z_i}\log{\Bigg(\frac{\pi_1}{p_y} + (1-\pi_1) \Bigg) } +  \sum_{j\not=y}^K \frac{ \partial p_j}{\partial z_i}\log{\Bigg(1-\pi_1 \Bigg) } \Big]   \\
     &= -(1-\pi_1)\Big[ \frac{ \partial p_y}{\partial z_i}\log{\Bigg(\frac{\pi_1}{p_y} + (1-\pi_1) \Bigg) } + \log{\Bigg(1-\pi_1 \Bigg) } \sum_{j\not=y}^K \frac{ \partial p_j}{\partial z_i} \Big]   \\
     &= \Bigg/ \text{Eq \ref{eq:partialSum}} \Leftrightarrow \sum_{j\not=y}^K \frac{\partial p_j}{\partial z_i} = -\frac{\partial p_y}{\partial z_i}  \Bigg/   \\
      &= -(1-\pi_1)\frac{ \partial p_y}{\partial z_i}\Big[\log{\Bigg(\frac{\pi_1}{p_y} + (1-\pi_1) \Bigg) } - \log{\Bigg(1-\pi_1 \Bigg) }\Big]  \\
      &= -(1-\pi_1) \frac{ \partial p_y}{\partial z_i}\log{\Bigg(\frac{\pi_1}{(1-\pi_1)p_y} + 1 \Bigg)  } \label{eq:jsgrad} 
\end{align}

\section{Extended Related Works}
\label{sup:rel-works}
Most related to us is the avenue of handling noisy labels in deep learning via the identification and construction of \textit{noise-robust loss functions}~\citep{Ghosh_AAAI_2017_MAE,Zhang_NeurIPS_2018_Generalized_CE,Wang_ICCV_2019_Symmetric_CE,Ma_ICML_2020_Normalized_Loss}. Ghosh~\etal~\citep{Ghosh_AAAI_2017_MAE} derived sufficient conditions for a loss function, in empirical risk minimization (ERM) settings, to be robust to various kinds of sample-independent noise, including symmetric, symmetric non-uniform, and class-conditional. They further argued that, while CE is not a robust loss function, mean absolute error (MAE) is a loss that satisfies the robustness conditions and empirically demonstrated its effectiveness. On the other hand, Zhang~\etal~\citep{Zhang_NeurIPS_2018_Generalized_CE} pointed out the challenges of training with MAE and proposed GCE which generalizes both MAE and CE losses. Tuning for this trade-off, GCE alleviates MAE's training difficulties while retaining some desirable noise-robustness properties. In a similar fashion, symmetric cross entropy (SCE)~\citep{Wang_ICCV_2019_Symmetric_CE} spans the spectrum of reverse CE as a noise-robust loss function and the standard CE. Recently, ~Ma~\etal~\citep{Ma_ICML_2020_Normalized_Loss} proposed a normalization mechanism to make arbitrary loss functions robust to noise. They, too, further combine two complementary loss functions to improve the data fitting while keeping robust to noise. The current work extends on this line of works.

Several other directions are pursued to improve training of deep networks under noisy labeled datasets. This includes methods to \textit{identify and remove} noisy labels~\citep{Nguyen_ICLR_2020_self_ensemble,Northcutt_arXiv_2017} or \textit{identify and correct} noisy labels in a joint label-parameter optimization~\citep{ Tanaka_CVPR_2018_Joint_Optimization,Vahdat_NeurIPS_2017_CRF} and those works that design an \textit{elaborate training pipeline} for dealing with noise~\citep{Li_ICLR_2020_dividemix,Iscen_ECCV_2020,Seo_NeurIPS_2019}. In contrast to these directions, this work proposes a robust loss function based on Jensen-Shannon divergence (JS) without altering other aspects of training. In the following, we review the directions that are most related to this paper. 

A close line of works to ours \textit{reweight a loss function} by a known or estimated class-conditional noise model~\citep{Natarajan_NIPS_2013}. This direction has been commonly studied for deep networks with a standard cross entropy (CE) loss~\citep{Sukhbaatar_ICLR_2015_confusion_matrix,Patrini_CVPR_2017,Han_NeurIPS_2018,Xia_NeurIPS_2019}. Assuming a class-conditional noise model, loss correction is theoretically well motivated.  

A common regularization technique called \textit{label smoothing}~\citep{Szegedy_CVPR_2016_inception_label_smoothing} has been recently proposed that operates similarly to the loss correction methods. While its initial purpose was for deep networks to avoid overfitting, label smoothing has been shown to have a noticeable effect when training with noisy sets by alleviating the fit to the noise~\citep{Lukasik_ICML_2020_label_smoothing_label_noisy,Reed_arXiv_2014_bootstrapping}.

\textit{Consistency regularization} is a recently-developed technique that encourages smoothness in the learnt decision boundary by requiring minimal shifts in the learnt function when small perturbations are applied to an input sample. This technique has become increasingly common in the state-of-the-art semi-supervised learning~\citep{Miyato_PAMI_2018_VAT,Berthelot_NeurIPS_2019_mixmatch,Tarvainen_NIPS_2017_mean_teacher} and recently for dealing with noisy data~\citep{Li_ICLR_2020_dividemix}. These methods use various complicated pipelines to integrate consistency regularization in training. This work shows that a multi-distribution generalization of JS can neatly incorporate such regularization.\\
Hendrycks~\etal~\citep{hendrycks2020augmix} recently proposed AugMix, a novel data augmentation strategy in combination with a $\DGJSnp$ consistency loss to improve uncertainty estimation and robustness to image corruptions at test-time. Our work is orthogonal since we consider the task of learning under noisy labels at training time and conduct the corresponding experiments. We also investigate and derive the theoretical properties of the proposed loss functions. Finally, our losses are solely implemented based on $\DJSnp$/$\DGJSnp$ instead of a combination of CE and $\DGJSnp$ in case of AugMix.  However, we find it promising that $\DGJSnp$ improves robustness to both training-time label noise and test-time image corruption, which further strengthens the significance of the JS-based loss functions. 

Finally, recently, Xu~\etal~\citep{Xu_NeurIPS_2019_Information_Theoretic_Mutual_Info_Loss};~Wei~\&~Liu~\citep{Wei_ICLR_2021_f_Divergence} propose loss functions with \textit{information theory} motivations. Jensen-Shannon divergence, with inherent information theoretic interpretations, naturally posits a strong connection of our work to those. Especially, the latter is a close \textit{concurrent} work that studies the general family of $f$-divergences but takes a different and complementary angle. In this work, we analyze the role of $\pi_1$, which they treat as a constant. Varying $\pi_1$ is important because it leads to:
\begin{itemize}
    \item \textbf{Better empirical performance.} For our experiments on CIFAR, we provide the hyper-parameters used in Table \ref{tab:cifar-hps}, from which we can see that the optimal is equal to their setting ($\pi_1=0.5$) in only 3/14 cases.
    \item \textbf{Interesting theoretical connections to related work.} In Proposition \ref{prop:js-ce-mae}, we show that the JS loss has CE and MAE as asymptotes when $\pi_1$ goes to zero and one, respectively. This causes an interesting trade-off between learnability and robustness as discussed in Section \ref{sec:ablation}.
\end{itemize}
Furthermore, we consider the generalization to more than two distributions which have proved helpful while Wei~\&~Liu~\citep{Wei_ICLR_2021_f_Divergence} only study two distributions.


In this work, we use a generalization of the Jensen-Shannon divergence to more than two distributions, which was introduced by Lin~\citep{Lin_TIT_1991_JS_Divergence}. Recently, another generalization of JS was presented by Nielsen~\citep{nielsen2019JSMeans}, where the arithmetic mean is generalized to abstract means. JS is also a special case of a general family of divergences, the f-divergences~\citep{csiszar1967fdiv}.

\end{document}